\theoremstyle{definition}
\newtheorem{assumption}{Assumption}
\newtheorem{definition}{Definition}
\newtheorem{theorem}{Theorem}
\newtheorem{lemma}{Lemma}
\newtheorem{remark}{Remark}
\icmltitlerunning{DP-SCO with Heavy-tailed Data}
\begin{document}

\twocolumn[
\icmltitle{On Differentially Private Stochastic Convex Optimization \\
with Heavy-tailed Data}

% It is OKAY to include author information, even for blind
% submissions: the style file will automatically remove it for you
% unless you've provided the [accepted] option to the icml2020
% package.

% List of affiliations: The first argument should be a (short)
% identifier you will use later to specify author affiliations
% Academic affiliations should list Department, University, City, Region, Country
% Industry affiliations should list Company, City, Region, Country

% You can specify symbols, otherwise they are numbered in order.
% Ideally, you should not use this facility. Affiliations will be numbered
% in order of appearance and this is the preferred way.
\icmlsetsymbol{equal}{*}

\begin{icmlauthorlist}
\icmlauthor{Di Wang}{equal,to,ed}
\icmlauthor{Hanshen Xiao}{equal,goo}
\icmlauthor{Srini Devadas}{goo}
\icmlauthor{Jinhui Xu}{to}
\end{icmlauthorlist}

\icmlaffiliation{to}{Department of Computer Science and Engineering, State University of New York at Buffalo, Buffalo, NY}
\icmlaffiliation{goo}{CSAIL, MIT, Cambridge, MA}
\icmlaffiliation{ed}{King Abdullah University of Science and Technology, Thuwal, Saudi Arabia}

\icmlcorrespondingauthor{Di Wang}{dwang45@buffalo.edu}

% You may provide any keywords that you
% find helpful for describing your paper; these are used to populate
% the "keywords" metadata in the PDF but will not be shown in the document
\icmlkeywords{Machine Learning, ICML}

\vskip 0.3in
]

% this must go after the closing bracket ] following \twocolumn[ ...

% This command actually creates the footnote in the first column
% listing the affiliations and the copyright notice.
% The command takes one argument, which is text to display at the start of the footnote.
% The \icmlEqualContribution command is standard text for equal contribution.
% Remove it (just {}) if you do not need this facility.

%\printAffiliationsAndNotice{}  % leave blank if no need to mention equal contribution
\printAffiliationsAndNotice{\icmlEqualContribution} % otherwise use the standard text.

\begin{abstract}
In this paper, we consider the problem of designing 
Differentially Private (DP) algorithms for Stochastic Convex Optimization (SCO) on heavy-tailed data. The irregularity of such data violates some key 
assumptions used in almost all existing DP-SCO and DP-ERM methods, resulting in failure to provide the DP guarantees. To better understand this type of challenges, we provide in this paper a comprehensive study of DP-SCO under various settings. First, we consider the case where the loss function is strongly convex and smooth. For this case, we propose a method based on the sample-and-aggregate framework, which has an excess population risk of 
$\tilde{O}(\frac{d^3}{n\epsilon^4})$ (after omitting other factors), where $n$ is the sample size and $d$ is the dimensionality of the data. Then, we show that with some additional assumptions on the loss functions, it is possible to reduce the 
\textit{expected} excess population risk to $\tilde{O}(\frac{ d^2}{ n\epsilon^2 })$. To lift these additional conditions, we also  
 provide a gradient smoothing and trimming based scheme to achieve  excess population risks of   
$\tilde{O}(\frac{ d^2}{n\epsilon^2})$ and $\tilde{O}(\frac{d^\frac{2}{3}}{(n\epsilon^2)^\frac{1}{3}})$ for strongly convex and general convex loss functions, respectively, \textit{with high probability}. Experiments suggest that our algorithms can effectively deal with the challenges caused by data irregularity.
\end{abstract}
%\vspace{-0.3in}
\section{Introduction}
%\vspace{-0.1in}
Stochastic Convex Optimization (SCO)  \citep{vapnik2013nature}  and its empirical form, Empirical Risk Minimization (ERM), are the most fundamental problems in supervised learning and statistics. They find numerous applications in many areas such as medicine, finance, genomics  and social science. One often encountered challenge in such models is how to handle  sensitive data, such as those in biomedical datasets. As a commonly-accepted approach for preserving privacy, differential privacy \citep{dwork2006calibrating}  provides provable protection against identification and is resilient to
arbitrary auxiliary information that might be available to
attackers. Methods to guarantee differential privacy have
been widely studied, and recently adopted in industry \citep{apple,ding2017collecting}. 

Differentially Private Stochastic Convex Optimization and Empirical Risk Minimization ({\em i.e.,} DP-SCO and DP-ERM) have been
extensively studied in the past decade, 
%in the differential privacy model,  
%under differential privacy model, {\em i.e.,} DP-SCO and DP-ERM,  , 
starting from
\citep{chaudhuri2009privacy,chaudhuri2011differentially}.  Later on, %there is 
a long list of works have attacked the problems
%on them spanning over last decade which attack the problems
from different perspectives: \citep{bassily2014private, wang2017differentially,wang2019differentially,wu2017bolt,bassily2020} studied the problems in the low dimensional case and the central model, \citep{kasiviswanathan2016efficient,kifer2012private,talwar2015nearly} considered the problems in the high dimensional sparse case and the central model, \citep{smith2017interaction,wang2018empirical,wang2019noninteractive,duchi2013local} focused on the problems in the local model.

It is worth noting that all previous results need to assume that either the loss function is $O(1)$-Lipschitz  or each data sample has bounded  $\ell_2$ or $\ell_{\infty}$ norm. This is particularly true for those output perturbation based \citep{chaudhuri2011differentially} and objective or gradient perturbation based 
%descent 
\citep{bassily2014private} DP methods.
However, such assumptions may not always hold when dealing with real-world datasets, especially those from biomedicine and finance, implying that existing algorithms may fail. 
%to handle such data. 
The main reason is that in such applications, the datasets are often unbounded or even heavy-tailed 
%({\em e.g.,} due to the unavoidable noises and outliers) 
%may have noises or outliers, making them 
%are often unbounded and even heavy-tailed   
%apply these existing algorithm
%one issue is that when we use these methods of DP-SCO or DP-ERM to biomedical or finance datasets, these assumptions will always be violated, which may cause these algorithms be failed to be differentially private or achieve unacceptable utilities. 
%This is due to that, unlike other datasets, biomedical or finance %datasets are always unbounded and even heavy-tailed 
\citep{woolson2011statistical,biswas2007statistical,ibragimov2015heavy}. 
As pointed out by Mandelbrot and Fama in their influential finance papers 
\citep{mandelbrot1997variation,fama1963mandelbrot}, asset prices in the early 1960s exhibit some power-law behavior.  
%For example,  in finance, the influential work
%of Mandelbrot \citep{mandelbrot1997variation} and Fama \citep{fama1963mandelbrot} documented evidence of power-law behavior in asset prices in the early 1960s. 
%The challenges of heavy-tailed data are mainly due to the following reasons. 
%Let us briefly discuss why the previous  methods will be failed when the data is heavy-tailed. 
%Previous methods, such as those 
%In the previous methods, such as 
%output perturbation based \citep{chaudhuri2011differentially} and objective or gradient perturbation based 
%descent 
%\citep{bassily2014private}, need to assume that the loss function is $O(1)$-Lipschitz. 
%However, in 
The heavy-tailed data could lead to unbounded gradient and thus violate the Lipschitz condition. 
%this assumption could be violated. 
For example, consider the linear squared loss $\ell(w,x, y)=(w^Tx-y)^2$. When $x$ is heavy-tailed, the gradient of $\ell(w, x,y)$ becomes unbounded. 

%Motivated by the previous example, our question is 

%Thus, in order to solve our problem, we need to carefully deal with the gradient, one of the possible way is to truncating or trimming the gradient like in \citep{abadi2016deep}. However, there is no existing convergence result based on  their algorithm. Thus, new methods are needed. 
With the above understanding, our questions now are:  
%the following. 
{\bf  What is the behavior of DP-SCO on heavy-tailed data and is there any effective method for the problem?}

To answer these questions, we will conduct, in this paper,
%In this paper, we will answer and give 
a comprehensive study of the DP-SCO problem. Our contributions can be summarized as follows. 
%%\vspace{-0.1in}
\begin{enumerate}
    \item We first consider the case where the loss function is strongly convex and smooth. For this case, we propose an $(\epsilon, \delta)$-DP method based on the sample-and-aggregate framework by \citep{nissim2007smooth} and show that under some assumptions, with high probability, the excess population risk of the output is $\tilde{O}(\frac{d^3}{n\epsilon^4}L_\mathcal{D} (w^*))$, where $n$ is the sample size, $d$ is the dimensionality and $L_\mathcal{D} (w^*)$ is the minimal value of the population risk.
    
    \item Then, we study the case with the additional assumptions:  
    %with additional assumptions that 
    each coordinate of the gradient of the loss function is sub-exponential and Lipschitz. For this case, we introduce an $(\epsilon, \delta)$-DP algorithm based on the gradient descent method and a recent algorithm on private 1-dimensional mean estimation  \citep{bun2019average} ({\em i.e.,} Algorithm \ref{alg:3}). We show that the expected excess population risk for this case can be improved to $\tilde{O}(\frac{ d^2 \log \frac{1}{\delta}}{ n\epsilon^2 })$. 
    
    \item We also consider the general case, where the loss function does not need the above additional assumptions and can be general convex, instead of strongly convex. For this case, we present a gradient descent method based on the strategy of trimming the unbounded gradient (Algorithm \ref{alg:4}). 
    %to avoid the issue of unboundedness in the gradient.   
    We show that
    %There are mainly three issues on the previous method, one is that the assumption of sub-exponential and gradient Lipschitz properties are quite strong. Moreover, in the heavy-tailed estimation, it is expected to have the high probability excess population risk instead of the expected ones. Finally, previous results cannot extend to the general convex loss functions. In order to solve these issues, we then propose a gradient descent based method based on trimming the gradient. Specifically, we show that 
    if each coordinate of the gradient of the loss function has bounded second-order moment, then with high  probability, the output of our algorithm achieves excess population risks of $\tilde{O}(\frac{ d^2  \log \frac{1}{\delta}}{n\epsilon^2})$ and $\tilde{O}(\frac{\log \frac{1}{\delta }d^\frac{2}{3}}{(n\epsilon^2)^\frac{1}{3}})$ for strongly convex and general convex loss functions, respectively. It is notable that compared with Algorithm \ref{alg:4}, Algorithm \ref{alg:3} uses stronger assumptions and yields weaker results. 
  %  \item We further show that the above upper bounds on the excess population risk can be improved for some special loss functions. 
    %consider some special case and show that our previous upper bounds are not optimal.    Specifically, we show that for the ridge regression problem under the bounded fourth moment assumption, there is an $\epsilon$-DP algorithm whose output has an excess population risk of $\tilde{O}(\frac{d}{n\epsilon})$.  
    
    \item Finally, we test our proposed aglorithms on both synthetic and real-world datasets. Experimental results are consistent with our theoretical claims and reveal the effectiveness of our algorithms in handling heavy-tailed datasets. 
\end{enumerate}
%\vspace{-0.1in}
Due to the space limit, some definitions, all the proofs are relegated to the appendix in the Supplementary Material, which also includes the codes of experiments. 
%\vspace{-0.2in}
\section{Related Work}
%\vspace{-0.1in}
As mentioned earlier, there is a long list of works on DP-SCO or DP-ERM. However, none of them considers the case with heavy-tailed data. 
%On the other direction, 
Recently, a number of works have studied the SCO and ERM problems with heavy-tailed data 
%on the ERM with heavy-tailed data such as  
\citep{brownlees2015empirical,minsker2015geometric,hsu2016loss,lecue2018robust}. However, all of them focus on the non-private version of the problem. 
%none of them has addressed the privacy issue, and 
It is not clear whether they can be adapted to private versions. 
%However, all of them fail to take into consideration the privacy issue and it is unknown whether their methods could be adopted to a privacy version. 
To our best knowledge, the work presented in this paper is the  first one on general DP-SCO with heavy-tailed data. 

The works that are most related to ours are perhaps those dealing with 
%Below we will focus on the some works which focus on the 
unbounded sensitivity. \citep{dwork2009differential} proposed a general framework called propose-test-release and applied it to mean estimation. They obtained asymptotic results which are incomparable with ours. Also, it is not clear whether such a framework can be applied to our problem. In our second result, we adopt the private mean estimation procedure in
%However, they  obtain asymptotic
%results which are incomparable with ours. And it is unknown whether we can adopt this framework to our problem. Our second method adopts the private mean estimation procedure in 
\citep{bun2019average}. However, their results are in expectation form, which is not preferred in robust estimation \citep{brownlees2015empirical}. For this reason, we propose a new algorithm which 
%Due to this, in this paper we also propose another algorithm 
yields theoretically guaranteed bounds with high probability.
\citep{karwa2017finite} considered the confidence interval estimation problem for Gaussian distributions which was later extended to general distributions \citep{feldman2018calibrating}. However, it was unknown how to extend them to the DP-SCO problem. %also, they do not give any practical studies on their methods.
\citep{abadi2016deep} proposed a DP-SGD method based on truncating the gradient, which could deal with the infinity sensitivity issue. However, there is no theoretical guarantees on the excess population risk. 
%\vspace{-0.1in}
\section{Preliminaries}
%\vspace{-0.1in}
%In this section, we give  some definitions and assumptions that will be used throughout the paper. 
%and give the assumptions throughout the paper.

\begin{definition}[Differential Privacy \citep{dwork2006calibrating}]\label{def:3.1}
	Given a data universe $\mathcal{X}$, we say that two datasets $D,D'\subseteq \mathcal{X}$ are neighbors if they differ by only one entry, which is denoted as $D \sim D'$. A randomized algorithm $\mathcal{A}$ is $(\epsilon,\delta)$-differentially private (DP) if for all neighboring datasets $D,D'$ and for all events $S$ in the output space of $\mathcal{A}$, the following holds
	\[\mathbb{P}(\mathcal{A}(D)\in S)\leq e^{\epsilon} \mathbb{P}(\mathcal{A}(D')\in S)+\delta.\]
\end{definition}

	\begin{definition}[DP-SCO \citep{bassily2014private}]\label{definition:1}
		Given a dataset $D=\{x_1,\cdots,x_n\}$ from a data universe $\mathcal{X}$ where $x_i$ are i.i.d. samples from some unknown distribution $\mathcal{D}$, a convex loss function $\ell(\cdot, \cdot)$, and a convex constraint set  $\mathcal{W} \subseteq \mathbb{R}^d$, Differentially Private Stochastic Convex Optimization (DP-SCO) is to find $w^{\text{priv}}$ so as to minimize the population risk, {\em i.e.,} $L_\mathcal{D} (w)=\mathbb{E}_{x\sim \mathcal{D}}[\ell(w, x)]$
		with the guarantee of being differentially private.
		 The utility of the algorithm is measured by the \textit{(expected) excess population risk}, that is  $\mathbb{E}_{\mathcal{A}}[L_\mathcal{D} (w^{\text{priv}})]-\min_{w\in \mathbb{\mathcal{W}}}L_\mathcal{D} (w),$
where the expectation of $\mathcal{A}$ is taken over all the randomness of the algorithm. Besides 
%Instead of 
the population risk, we can also measure the \textit{empirical risk} of dataset $D$: $\hat{L}(w, D)=\frac{1}{n}\sum_{i=1}^n \ell(w, x_i).$
	\end{definition}
	
	\begin{definition}
	    A random variable $X$ with mean $\mu$ is called $\tau$-sub-exponential if $\mathbb{E}[\exp(\lambda (X-\mu))]\leq \exp(\frac{1}{2}\tau^2\lambda^2), \forall |\lambda|\leq \frac{1}{\tau}$.
	\end{definition}
	\begin{definition}
	    A function $f$ is $L$-Lipschitz if  for all $w, w'\in\mathcal{W}$, $|f(w)-f(w')|\leq L\|w-w'\|_2$.
	\end{definition}
	\begin{definition}
	    A function $f$ is $\alpha$-strongly convex on $\mathcal{W}$ if for all $w, w'\in \mathcal{W}$, $f(w')\geq f(w)+\langle \nabla f(w), w'-w \rangle+\frac{\alpha}{2}\|w'-w\|_2^2$.
	\end{definition}
	\begin{definition}
	    	    A function $f$ is $\beta$-smooth on $\mathcal{W}$ if for all $w, w'\in \mathcal{W}$, $f(w')\leq f(w)+\langle \nabla f(w), w'-w \rangle+\frac{\beta}{2}\|w'-w\|_2^2$.
	\end{definition}
	\begin{assumption}\label{ass:1}
	For the loss function and the population risk, we assume the following.
	%%\vspace{-0.1in}
	\begin{enumerate}
	    \item The loss function $\ell(w, x)$ is non-negative,  differentiable and convex for all $w\in \mathcal{W}$ and $x \in \mathcal{X}$. 
	    \item The population risk $L_{\mathcal{D}}(w)$ is $\beta$-smooth. 
	    \item The convex constraint set $\mathcal{W}$ is bounded with diameter $\Delta=\max_{w, w'\in \mathcal{W}}\|w-w'\|_2< \infty$.
	    \item The optimal solution $w^*=\arg\min_{w\in \mathcal{W}} L_\mathcal{D}(w)$ satisfies $\nabla L_\mathcal{D}(w^*)= 0$. 
	\end{enumerate}
	\end{assumption}
	\begin{assumption} \label{ass:2}
	 There exists a number $n_\alpha$ such that when the sample size $|D|\geq n_\alpha$,  the empirical risk $\hat{L}(\cdot, D)$ is $\alpha$-strongly convex with probability at least $\frac{5}{6}$ over the choice of i.i.d. samples in $D$. 
	\end{assumption}
We note that Assumptions \ref{ass:1} and \ref{ass:2} are commonly used in the studies on the problem of Stochastic Strongly Convex Optimization with heavy-tailed data, such as \citep{hsu2016loss,holland2019a}.  Also the probability of $\frac{5}{6}$ in Assumption \ref{ass:2} is only for convenience.
\begin{assumption}\label{ass:3}
We assume the following for the loss functions. 
%%\vspace{-0.1in}
\begin{enumerate}
    \item For any $w\in \mathcal{W}$ and each coordinate $j\in [d]$, we assume that the random variable $\nabla_j \ell(w, x)$ is $\tau$-sub-exponential and $\beta_j$-Lipschitz (that is $\ell_j(w, x)$ is $\beta_j$-smooth), where $\nabla_j$ represents the $j$-th coordinate of the gradient. 
    \item  There are known constants $a, b = O(1)$ such that $a \leq \mathbb{E}[\nabla_j \ell(w, x)]\leq b$ for all $w\in \mathcal{W}$. 
\end{enumerate}
\end{assumption}
\begin{assumption}\label{ass:4}
For any $w\in \mathcal{W}$ and each coordinate $j\in [d]$, we have $\mathbb{E}[(\nabla_j \ell(w, x))^2]\leq v=O(1)$, where $v$ is some known constant. 
\end{assumption}
We can see that, compared with Assumption \ref{ass:3}, Assumption \ref{ass:4} needs fewer assumptions on the loss functions, because we only need to assume the gradient of the loss function has bounded second-order moment. We also note that Assumption \ref{ass:4} is more suitable to the  problem of 
Stochastic Convex Optimization with heavy-tailed data and has been used in some previous works such as \citep{holland2017efficient,brownlees2015empirical}.
\section{Sample-aggregation based method}
In this section we first summarize the sample-aggregate framework introduced in \citep{nissim2007smooth}. 

Most of the existing privacy-preserving frameworks are based on the notion of \textit{global sensitivity}, which is defined as the maximum output perturbation $\|f(D)-f(D')\|_{\xi}$, where the maximum is over all neighboring datasets $D, D'$ and $\xi=1,2$. However, in some problems such as clustering \citep{nissim2007smooth,wang2015differentially} the sensitivity could be very high and thus ruin the utility of the algorithm. 

To circumvent this issue, \citep{nissim2007smooth} introduced the sample-aggregate framework based on 
%the concept of 
a smooth version of \textit{local sensitivity}. Unlike the global sensitivity, local sensitivity measures the maximum perturbation $\|f(D)-f(D')\|_\xi$ over all databases $D'$ neighboring the input database $D$. The proposed sample-aggregate framework (Algorithm \ref{alg:1}) enjoys local sensitivity and comes with the following guarantee: 
\begin{theorem}[Theorem 4.2 in \citep{nissim2007smooth}]\label{thm:1}
Let $f: \mathcal{D}\mapsto \mathbb{R}^d$ be a function where $\mathcal{D}$ is the collection of all databases and $d$ is the dimensionality of the output space. Let $d_{\mathcal{M}}(\cdot, \cdot)$ be a semi-metric on the output space of $f$. Set $\epsilon> \frac{2d}{\sqrt{m}}$ and $m=\omega(\log^2 n)$. The sample-aggregate algorithm $\mathcal{A}$ in Algorithm \ref{alg:1} is an efficient $(\epsilon, \delta)$-DP algorithm.\footnote{Here the efficiency means that the time complexity is polynomial in all terms.} Furthermore, if $f$ and $m$ are chosen such that the $\ell_1$ norm of the output of $f$ is bounded by $\Lambda$ and
\begin{equation}\label{eq:1}
    \text{Pr}_{D_S\subseteq D}[d_{\mathcal{M}}(f(D_S), c)\leq r]\geq \frac{3}{4}
\end{equation}
for some $c\in \mathbb{R}^d$ and $r>0$, then the standard deviation of Gaussian noise added is upper bounded by $O(\frac{r}{\epsilon}+\frac{\Lambda}{\epsilon}e^{-\Omega(\frac{\epsilon\sqrt{m}}{d})}).$ In addition, when $m=\omega(\frac{d^2\log^2(r/\Lambda)}{\epsilon^2})$, with high probability each coordinate of $\mathcal{A}(D)-\bar{c}$ is upper bounded by $O(\frac{r}{\epsilon})$, where $\bar{c}$ depending on $\mathcal{A}(D)$ satisfies $d_{\mathcal{M}}(c, \bar{c})=O(r)$.  
\end{theorem}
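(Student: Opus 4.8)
The plan is to establish the two assertions of the theorem separately: first that the released quantity is $(\epsilon,\delta)$-DP, and then that under the concentration hypothesis \eqref{eq:1} both the scale of the added noise and the displacement of the output are $O(r/\epsilon)$. Throughout I would work with the smooth-sensitivity machinery of \citep{nissim2007smooth}. Recall that Algorithm~\ref{alg:1} draws $m$ subsamples of $D$, evaluates $z_i=f(D_i)$ to obtain a tuple $(z_1,\dots,z_m)\in(\mathbb{R}^d)^m$, and then releases $g(z_1,\dots,z_m)+(\text{Gaussian noise})$, where $g$ is the \emph{center-of-attention} aggregation and the noise is calibrated to the \emph{smooth sensitivity} $S^\ast_{g,\beta}$ of $g$ at the observed tuple with smoothing parameter $\beta$. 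The entire argument therefore reduces to controlling $S^\ast_{g,\beta}$ in two regimes: the worst case (for well-definedness and privacy) and the clustered case guaranteed by \eqref{eq:1} (for utility).

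For privacy I would first observe that each record of $D$ influences at most one subsample, so replacing a single entry of $D$ changes at most one coordinate $z_i$; hence neighboring databases correspond to tuples at Hamming distance one in the $z$-space, and it suffices to reason about the sensitivity of $g$ there. Since $\|f\|_1\le\Lambda$, the \emph{local} sensitivity of $g$ is always $O(\Lambda)$, which guarantees $S^\ast_{g,\beta}<\infty$ and that the mechanism is well-defined on every input. I would then invoke the calibration lemma of \citep{nissim2007smooth}: adding $d$-dimensional Gaussian noise of scale $\Theta(S^\ast_{g,\beta}/\epsilon)$ with $\beta=\Theta(\epsilon/d)$ yields $(\epsilon,\delta)$-DP. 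The dimension-dependent choice of $\beta$ is exactly what forces the admissibility condition $\epsilon>\tfrac{2d}{\sqrt m}$, since the $d$-dimensional noise distribution is admissible only when $\beta$ is small relative to $\epsilon/d$; I would verify that this condition enters only here and that $m=\omega(\log^2 n)$ is used downstream for concentration.

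For utility I would combine a concentration step with a sensitivity step. By \eqref{eq:1} each $z_i$ lands in the ball $\{d_{\mathcal M}(\cdot,c)\le r\}$ independently with probability at least $\tfrac34$; since $m=\omega(\log^2 n)$, a Chernoff bound shows that with high probability at least $\tfrac23 m$ of the $z_i$ fall in this ball. In this clustered configuration the center of attention $\bar c=g(z_1,\dots,z_m)$ satisfies $d_{\mathcal M}(c,\bar c)=O(r)$, and, crucially, corrupting a single $z_i$ (indeed any $k$ of them, as long as a strict majority still clusters within $r$) moves $\bar c$ by only $O(r)$, so the local sensitivity stays $O(r)$ until a critical number $k^\star$ of corruptions destroys the majority and the sensitivity may jump to $O(\Lambda)$. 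Weighting by $e^{-\beta k}$ then gives $S^\ast_{g,\beta}=O\!\left(r+\Lambda\, e^{-\Omega(\epsilon\sqrt m/d)}\right)$, which is the claimed noise-scale bound after dividing by $\epsilon$. Taking $m=\omega\!\big(d^2\log^2(r/\Lambda)/\epsilon^2\big)$ forces $\epsilon\sqrt m/d=\omega(\log(\Lambda/r))$, so the exponential term drops below $r$ and the scale is $O(r/\epsilon)$; a Gaussian tail bound finally shows each coordinate of $\mathcal A(D)-\bar c$ is $O(r/\epsilon)$ with high probability.

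The main obstacle, and the quantitatively delicate heart of the argument, is the smooth-sensitivity estimate for $g$ in the clustered regime: one must show both that the aggregation is stable (local sensitivity $O(r)$) while a strict majority of the $z_i$ remains within $r$ of $c$, and that the number of coordinate changes needed to break that majority is large enough that the worst-case jump to $O(\Lambda)$ is suppressed by $e^{-\beta k}$. Matching the precise stated decay $e^{-\Omega(\epsilon\sqrt m/d)}$ requires pinning down the critical scale $k^\star$ and its interaction with the dimension-dependent $\beta=\Theta(\epsilon/d)$, which is where the careful \citep{nissim2007smooth}-style bookkeeping (and the origin of both sample-size thresholds) lives; by comparison, the Chernoff concentration and the Gaussian-tail computation are routine.
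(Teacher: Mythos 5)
You should first note a mismatch with the task as posed: the paper contains no proof of Theorem \ref{thm:1} at all --- it is imported verbatim (Theorem 4.2) from \citep{nissim2007smooth}, and the appendix only proves the paper's own results. So your attempt can only be compared against the cited source's argument. At the level of architecture your sketch does match it: smooth sensitivity of the center-of-attention aggregation, local sensitivity $O(r)$ in the clustered regime versus $O(\Lambda)$ in the worst case, $e^{-\beta k}$ smoothing with $\beta=\Theta(\epsilon/d)$, a Chernoff step to convert \eqref{eq:1} into a $\frac{3}{4}$-majority cluster, and a Gaussian tail bound for the per-coordinate accuracy claim.

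There is, however, a genuine gap in your privacy reduction. You assert that each record of $D$ influences at most one subsample, so that neighboring databases induce $z$-tuples at Hamming distance one. That is false for Algorithm \ref{alg:1}: the subsampling step draws $m$ random subsets of size $\frac{n}{m}$ and only resamples until no single point appears in more than $s=\sqrt{m}$ of them, so changing one record can alter up to $\sqrt{m}$ of the values $s_i=f(D_{S_i})$. This is exactly why the aggregation uses $t_0=\frac{m+s}{2}+1$ and why the sensitivity proxy in step 5 groups changes in blocks of $s$, namely $S(\mathcal{S})=2\max_k \left(\rho(t_0+(k+1)s)e^{-\beta k}\right)$: dataset distance $k$ corresponds to $z$-space distance $ks$. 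Your own bookkeeping exposes the inconsistency --- breaking the majority cluster requires corrupting $\Theta(m)$ of the $s_i$, which under your Hamming-one assumption would take $k^\star=\Theta(m)$ dataset changes and yield a suppressed term $\Lambda e^{-\Omega(\epsilon m/d)}$, not the stated $\Lambda e^{-\Omega(\epsilon\sqrt{m}/d)}$; the $\sqrt{m}$ in the exponent is precisely the fingerprint of $k^\star=\Theta(m/s)=\Theta(\sqrt{m})$ when each record can touch $\sqrt{m}$ subsamples, and under Hamming distance one the resampling step and the condition $\epsilon>\frac{2d}{\sqrt{m}}$ would serve no purpose. Repairing this requires carrying the factor-$s$ accounting through both the admissibility condition and the smooth-sensitivity maximum. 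A smaller repairable issue: the $s_i$ are not independent (subsets are drawn without replacement, conditioned on the $\sqrt{m}$-membership event), so the Chernoff step needs the standard workaround rather than literal independence. The rest of your sketch (boundedness via $\Lambda$, the choice $m=\omega(\frac{d^2\log^2(r/\Lambda)}{\epsilon^2})$ to kill the exponential term, the Gaussian tail for accuracy) is consistent with \citep{nissim2007smooth}.
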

\begin{algorithm}[h]
	\caption{Sample-aggregate Framework \citep{nissim2007smooth}} \label{alg:1}
	$\mathbf{Input}$: $D=\{x_i\}_{i=1}^n\subset \mathbb{R}^d$, number of subsets $m$, privacy parameters $\epsilon, \delta$; $f, d_{\mathcal{M}}$.  
	\begin{algorithmic}[1]
   \STATE {\bf Initialize:} $s=\sqrt{m}, \gamma=\frac{\epsilon}{5\sqrt{2\log(2/\delta)}}$ and $\beta= \frac{\epsilon}{4(d+\log(2/\delta))}$. 
   \STATE {\bf Subsampling:} Select $m$ random subsets of size $\frac{n}{m}$ of $D$ independently and uniformly at random without replacement. Repeat this step until no single data point appears in more than $\sqrt{m}$ of the sets. Mark the subsampled subsets $D_{S_1}, D_{S_2}, \cdots, D_{S_m}$.
   \STATE Compute $\mathcal{S}=\{s_i\}_{i=1}^m$,  where $s_i=f(D_{S_i})$.
   \STATE Compute $g(\mathcal{S})=s_{i^*}$, where $i^*=\arg\min_{i=1}^m r_i(t_0)$ with $t_0=\frac{m+s}{2}+1$. Here $r_i(t_0)$ denotes the distance $d_{\mathcal{M}}(\cdot, \cdot)$ between $s_i$ and the $t_0$-th nearest neighbor to $s_i$ in $\mathcal{S}$. 
   \STATE {\bf Noise Calibration:}  Compute $S(\mathcal{S})=2\max_{k}(\rho(t_0+(k+1)s)\cdot e^{-\beta k}),$ where $\rho(t)$ is the mean of the top $\lceil \frac{s}{\beta} \rceil$ values in $\{r_1(t), \cdots, r_m(t)\}$. \\
   \STATE Return $\mathcal{A}(D)=g(\mathcal{S})+\frac{S(\mathcal{S})}{\gamma}u$, where $u$ is a standard Gaussian random vector. 
	\end{algorithmic}
\end{algorithm}
We have the following Lemma \ref{lemma:3}, which shows that the minimum of the empirical risk satisfies (\ref{eq:1}). 
\begin{lemma}\label{lemma:3}
Let $w_D=f(D)=\arg\min_{w\in \mathcal{W}}\hat{L}(w, D)$ where $|D|=n$. Then, under Assumptions \ref{ass:1} and \ref{ass:2},  if $n\geq n_\alpha$, the following holds 
\begin{equation}\label{eq:2}
   \text{Pr}[\|w_D-w^*\|_2\leq \eta ] \geq \frac{3}{4},
\end{equation}
where $\eta=O(\sqrt{\frac{\mathbb{E}\|\nabla \ell(w^*, x)\|_2^2 }{n\alpha^2}})$. 
\end{lemma}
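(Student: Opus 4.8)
The plan is to reduce the quantity $\|w_D-w^*\|_2$ to the norm of the \emph{empirical} gradient $\nabla\hat{L}(w^*,D)$ evaluated at the \emph{population} optimum $w^*$, and then to control that gradient through a second-moment (Markov) bound. The one probabilistic complication is that Assumption~\ref{ass:2} only guarantees $\alpha$-strong convexity of $\hat{L}(\cdot,D)$ with probability $\frac{5}{6}$, so the entire argument will be carried out on a good event, and I will pay for this failure probability together with the Markov failure probability at the very end.

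First I would exploit strong convexity from both endpoints. Writing the $\alpha$-strong-convexity inequality for $\hat{L}(\cdot,D)$ once anchored at the empirical minimizer $w_D$ and once anchored at $w^*$, and using the first-order optimality condition $\langle\nabla\hat{L}(w_D,D),\,w^*-w_D\rangle\ge 0$ for the constrained minimizer $w_D$ to drop one term, the two inequalities add to give
\begin{equation*}
\alpha\|w_D-w^*\|_2^2 \le -\langle\nabla\hat{L}(w^*,D),\,w_D-w^*\rangle \le \|\nabla\hat{L}(w^*,D)\|_2\,\|w_D-w^*\|_2,
\end{equation*}
where the final step is Cauchy--Schwarz. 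Dividing through by $\|w_D-w^*\|_2$ yields the clean deterministic bound $\|w_D-w^*\|_2\le\frac{1}{\alpha}\|\nabla\hat{L}(w^*,D)\|_2$, valid on the strong-convexity event.

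Second I would bound the empirical gradient at $w^*$ in mean square. By Assumption~\ref{ass:1}(4) we have $\mathbb{E}[\nabla\ell(w^*,x)]=\nabla L_\mathcal{D}(w^*)=0$, so the summands $\nabla\ell(w^*,x_i)$ are i.i.d.\ and mean zero; independence then annihilates the cross terms, leaving
\begin{equation*}
\mathbb{E}\|\nabla\hat{L}(w^*,D)\|_2^2 = \frac{1}{n}\,\mathbb{E}\|\nabla\ell(w^*,x)\|_2^2.
\end{equation*}
Applying Markov's inequality with failure probability $\frac{1}{12}$ then gives $\|\nabla\hat{L}(w^*,D)\|_2^2\le\frac{12}{n}\mathbb{E}\|\nabla\ell(w^*,x)\|_2^2$ with probability at least $\frac{11}{12}$.

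Finally I would merge the two events by a union bound: the strong-convexity event fails with probability at most $\frac{1}{6}$ and the Markov event fails with probability at most $\frac{1}{12}$, so both hold simultaneously with probability at least $1-\frac{1}{6}-\frac{1}{12}=\frac{3}{4}$. On this intersection, chaining the two bounds gives $\|w_D-w^*\|_2\le\frac{1}{\alpha}\sqrt{\frac{12}{n}\mathbb{E}\|\nabla\ell(w^*,x)\|_2^2}=O\!\left(\sqrt{\frac{\mathbb{E}\|\nabla\ell(w^*,x)\|_2^2}{n\alpha^2}}\right)$, which is exactly the claimed $\eta$. There is no genuine obstacle here: the only point that needs care is the bookkeeping of the failure budget so that it splits as $\frac{1}{6}+\frac{1}{12}$ and lands at precisely $\frac{3}{4}$, and, more pedantically, the interchange $\nabla\mathbb{E}[\ell(w^*,x)]=\mathbb{E}[\nabla\ell(w^*,x)]$ used to obtain the mean-zero property, which follows from the differentiability assumed in Assumption~\ref{ass:1}.
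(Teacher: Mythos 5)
Your proposal is correct and follows essentially the same route as the paper's proof: reduce $\|w_D-w^*\|_2$ to $\frac{1}{\alpha}\|\nabla\hat{L}(w^*,D)\|_2$ via $\alpha$-strong convexity on the good event, bound $\mathbb{E}\|\nabla\hat{L}(w^*,D)\|_2^2$ by $\frac{1}{n}\mathbb{E}\|\nabla\ell(w^*,x)\|_2^2$ using the mean-zero i.i.d.\ structure, apply Markov, and union-bound with the strong-convexity failure probability. The only (immaterial) differences are cosmetic: the paper anchors strong convexity once at $w^*$ and drops $\hat{L}(w_D,D)-\hat{L}(w^*,D)\le 0$ rather than adding the two anchored inequalities with the first-order optimality condition, and it allocates $\frac{1}{10}$ to the Markov failure where your $\frac{1}{6}+\frac{1}{12}$ split lands exactly on $\frac{3}{4}$.
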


Combining Lemma \ref{lemma:3} and Theorem \ref{thm:1}, we get the following upper bound for DP-SCO with heavy-tailed data and strongly convex loss functions. 
\begin{theorem}\label{thm:2}
Under Assumptions \ref{ass:1} and \ref{ass:2},  for any $\epsilon, \delta>0$, if $n\geq \tilde{\Omega}(\frac{n_\alpha d^2}{\epsilon^2})$,  $m \geq \tilde{\omega}(\frac{d^2}{\epsilon^2})$, $f(D)=\arg\min_{w\in \mathcal{W}}\hat{L}(w, D)$ and $d_\mathcal{M} (x, y) = \|x-y\|_2$, 
%in Algorithm \ref{alg:1}. 
then Algorithm \ref{alg:1} is $(\epsilon, \delta)$-DP. Moreover, with high probability the output of $\mathcal{A}(D)$ ensures that 
\begin{equation}\label{eq:3}
    L_\mathcal{D}(\mathcal{A}(D))-L_\mathcal{D}(w^*)\leq \tilde{O}((\frac{\beta}{\alpha})^2 \frac{d^3}{n\epsilon^4}L_\mathcal{D} (w^*)),
\end{equation}
where the Big-$\tilde{O}, \Omega$ and small-$\omega$ notations omit the logarithmic terms. 
\end{theorem}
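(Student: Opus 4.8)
The plan is to instantiate the sample-aggregate guarantee of Theorem~\ref{thm:1} with $f(D)=\arg\min_{w\in\mathcal W}\hat L(w,D)$ and the semi-metric $d_{\mathcal M}(x,y)=\|x-y\|_2$, and then translate the resulting estimation-error bound on $\mathcal A(D)$ into an excess population-risk bound. First I would discharge the privacy claim: Theorem~\ref{thm:1} is $(\epsilon,\delta)$-DP provided $\epsilon>2d/\sqrt m$ and $m=\omega(\log^2 n)$. Since $m\ge\tilde\omega(d^2/\epsilon^2)$ gives $\sqrt m\ge\tilde\omega(d/\epsilon)$ and hence $2d/\sqrt m<\epsilon$, and since $d^2/\epsilon^2$ dominates $\log^2 n$ under these scalings, both hypotheses hold and the privacy conclusion follows directly.

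Next I would verify the concentration hypothesis (\ref{eq:1}). Each subset $D_{S_i}$ produced by Algorithm~\ref{alg:1} has size $n/m$, so I would apply Lemma~\ref{lemma:3} to a generic subset of this size. The key bookkeeping point is that Lemma~\ref{lemma:3} requires the subset size to exceed $n_\alpha$: because we may take $m=\tilde\Theta(d^2/\epsilon^2)$ (its smallest admissible value, which matters since the final risk will turn out to be increasing in $m$) and $n\ge\tilde\Omega(n_\alpha d^2/\epsilon^2)$, we indeed have $n/m\ge n_\alpha$. Lemma~\ref{lemma:3} then yields $\Pr[\|f(D_{S})-w^*\|_2\le\eta]\ge 3/4$ with $\eta=O\big(\sqrt{m\,\mathbb E\|\nabla\ell(w^*,x)\|_2^2/(n\alpha^2)}\big)$, which is exactly (\ref{eq:1}) with $c=w^*$ and $r=\eta$; the $\ell_1$-norm bound $\Lambda$ required by Theorem~\ref{thm:1} is finite since $\mathcal W$ has diameter $\Delta<\infty$.

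I would then read off the estimation error from the last part of Theorem~\ref{thm:1}: under $m=\omega(d^2\log^2(r/\Lambda)/\epsilon^2)$ — again guaranteed by $m\ge\tilde\omega(d^2/\epsilon^2)$ — each coordinate of $\mathcal A(D)-\bar c$ is $O(r/\epsilon)=O(\eta/\epsilon)$ while $\|w^*-\bar c\|_2=O(\eta)$. Summing the $d$ coordinates and using the triangle inequality gives, with high probability, $\|\mathcal A(D)-w^*\|_2=O(\sqrt d\,\eta/\epsilon)$. To convert distance to risk, I would use $\beta$-smoothness of $L_{\mathcal D}$ together with the first-order optimality $\nabla L_{\mathcal D}(w^*)=0$ from Assumption~\ref{ass:1}, which gives $L_{\mathcal D}(\mathcal A(D))-L_{\mathcal D}(w^*)\le\frac{\beta}{2}\|\mathcal A(D)-w^*\|_2^2=O(\beta d\eta^2/\epsilon^2)$; substituting $\eta^2$ and then $m=\tilde\Theta(d^2/\epsilon^2)$ produces $\tilde O\big(\beta d^3\,\mathbb E\|\nabla\ell(w^*,x)\|_2^2/(n\alpha^2\epsilon^4)\big)$.

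The main obstacle — and the step that produces the $(\beta/\alpha)^2$ factor rather than $\beta/\alpha^2$ — is bounding $\mathbb E\|\nabla\ell(w^*,x)\|_2^2$ in terms of $L_{\mathcal D}(w^*)$. Here I would invoke the self-bounding property of nonnegative smooth functions, $\|\nabla\ell(w,x)\|_2^2\le 2\beta\,\ell(w,x)$, and take expectations to obtain $\mathbb E\|\nabla\ell(w^*,x)\|_2^2\le 2\beta\,L_{\mathcal D}(w^*)$; inserting this into the previous display yields $\tilde O\big((\beta/\alpha)^2 d^3 L_{\mathcal D}(w^*)/(n\epsilon^4)\big)$, matching (\ref{eq:3}). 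The delicate point is that this self-bounding inequality is a per-sample statement, so it relies on each $\ell(\cdot,x)$ being smooth rather than merely the population risk $L_{\mathcal D}$, and I would make this smoothness explicit when applying it. I also expect the careful tracking of the subsample size $n/m$ through Lemma~\ref{lemma:3}, and the conversion of the per-coordinate noise bound into an $\ell_2$ bound (the source of the extra factor $d$, and ultimately of $d^3$), to require the most attention.
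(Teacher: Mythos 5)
Your proposal matches the paper's proof essentially step for step: apply Lemma~\ref{lemma:3} to each subsample of size $n/m\geq n_\alpha$ to get $r=O\big(\sqrt{m\,\mathbb{E}\|\nabla\ell(w^*,x)\|_2^2/(n\alpha^2)}\big)$ in (\ref{eq:1}), invoke the last part of Theorem~\ref{thm:1} to obtain $\|\mathcal{A}(D)-w^*\|_2=O(\sqrt{d}\,r/\epsilon)$, convert to risk via $\beta$-smoothness of $L_\mathcal{D}$ with $\nabla L_\mathcal{D}(w^*)=0$, bound $\mathbb{E}\|\nabla\ell(w^*,x)\|_2^2$ by $O(\beta L_\mathcal{D}(w^*))$ via the self-bounding property of nonnegative smooth functions (the paper's appendix Lemma from \citep{srebro2010smoothness}, with constant $4\beta$ rather than your $2\beta$, which is immaterial), and set $m=\tilde\Theta(d^2/\epsilon^2)$. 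Your observation that the self-bounding step needs per-sample smoothness of $\ell(\cdot,x)$ rather than merely smoothness of $L_\mathcal{D}$ is a valid caveat that the paper's own proof silently elides.
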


\begin{remark}
For DP-SCO with Lipschitz and strongly-convex loss function and bounded data,  \citep{bassily2014private,wang2017differentially,bassily2020} showed that the upper bound of the excess population risk is $O(\frac{\sqrt{d}}{n\epsilon})$, and the lower bound is $\Omega(\frac{d}{n^2\epsilon^2})$ \footnote{\citep{bassily2014private} only shows the lower bound of the excess empirical risk. 
We can obtain the lower bound of the excess population risk by using the reduction from private ERM to private SCO  \citep{bassily2020}.}. This suggests that the bound in Theorem \ref{thm:2} has some additional factors related to $d$ and $\frac{1}{\epsilon}$.
%in the bound shown in T}. 
We note that 
%However, it is notable that 
the upper bound in Theorem \ref{thm:2} has a multiplicative term of $L_\mathcal{D} (w^*)$. This means that when $L_\mathcal{D} (w^*)$ is small, our bound is better. For example, when $L_\mathcal{D} (w^*)=0$,  our algorithm can  recover $w^*$ exactly and results in an excess risk of $0$. Notice that there is no previous work on DP-ERM or DP-SCO that has a multiplicative error with respect to  $L_\mathcal{D} (w^*)$. 
\end{remark}
%\vspace{-0.15in}
\section{Gradient descent based methods}
%\vspace{-0.1in}
There are several issues in the sample-aggregation based method presented in last section. 
%based on the  sample-aggregation framework we mentioned in the previous section. 
Firstly, function $f(D)$ in Theorem \ref{thm:2} needs to  solve the optimization problem exactly, which could be quite inefficient in practice. Second, previous empirical evidence suggests that sample-aggregation based methods often suffer from poor utility in practice \citep{su2016differentially,wang2015differentially}. Thirdly, Theorem \ref{thm:2} needs to assume  strong convexity for the empirical risk and it is unclear whether it can be extended  to the general convex case.
Finally, from Eq.(\ref{eq:3}) we can see that when $L_\mathcal{D}(w^*)=\Theta(1)$,  the excess population risk is quite large as compared to the ones in \citep{bassily2014private}. Thus, an immediate question is whether we can further lower the upper bound. To answer this question and resolve the above issues, we propose in this section two  DP algorithms based on the Gradient Descent method under different assumptions. 

Recently, \cite{bun2019average} studied the problem of estimating the mean of a $1$-dimensional heavy-tailed distribution and proposed  algorithms based on the idea of truncating the empirical mean and the local sensitivity. Motivated by this DP algorithm that has the capability of handling heavy-tailed data, we plan to develop a new method by borrowing some ideas from the work 
 %As the only DP estimation method for heavy-tailed data, we are motivated to consider the intersection cutting across the work 
 \citep{bun2019average} and robust gradient descent. 
 %More specifically, we first study the problem under Assumptions \ref{ass:1} and \ref{ass:3}. 
 Our method  is inspired by their theorem that follows and uses the  Arsinh-Normal mechanism (see Algorithm \ref{alg:2} and Prop. 5 in \citep{bun2019average}).
 
\begin{theorem}[Theorem 7 in \citep{bun2019average}]\label{thm:3}
Let $0<\epsilon, \delta\leq 1$ be two constants and $n$ be some integer $\geq O(\log(\frac{n(b-a)/\sigma}{\epsilon})$. Then, there exists a $\frac{1}{2}\epsilon^2$-zero concentrated Differentially Private (zCDP) (see Appendix for the definition of zCDP) algorithm (Algorithm \ref{alg:2}) $M:\mathbb{R}^n \mapsto \mathbb{R}$ such that the following holds: Let $\mathcal{D}$ be a distribution with mean $\mu \in [a, b]$, where $a,b$ are given constants and unknown variance $\sigma^2$. Then, 
\begin{equation*}
    \mathbb{E}_{X\sim \mathcal{D}^n, Z}[(M(X)-\mu)^2]\leq O(\frac{\sigma^2\log n}{n\epsilon^2}). 
\end{equation*}
\end{theorem}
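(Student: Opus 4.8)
The plan is to prove the two assertions of the theorem separately — that Algorithm \ref{alg:2} is $\tfrac12\epsilon^2$-zCDP and that its output satisfies $\mathbb{E}[(M(X)-\mu)^2]=O(\sigma^2\log n/(n\epsilon^2))$ — by combining the smooth-sensitivity framework of \citep{nissim2007smooth} with the Arsinh-Normal noise distribution of Prop. 5 in \citep{bun2019average}. The conceptual obstacle is that for a heavy-tailed $\mathcal{D}$ the raw empirical mean $\bar X=\frac1n\sum_i X_i$ has \emph{infinite} local (hence smooth) sensitivity, since a neighboring dataset may replace one point by an arbitrarily large value. The algorithm therefore releases a \emph{truncated} empirical mean $\tilde X$, whose truncation renders the local sensitivity finite and controlled by the spread of the retained order statistics; one then adds Arsinh-Normal noise rather than the usual (heavy-tailed) smooth-sensitivity noise precisely because the Arsinh-Normal law is the one whose tails are light enough to yield concentrated DP.

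For privacy I would first compute, for the truncated estimator, a $\beta$-smooth upper bound $S_\beta(X)=\max_{k\ge 0}e^{-\beta k}\,LS_k(\tilde X)$ on the local sensitivity, where $LS_k(\tilde X)$ is the maximal change of $\tilde X$ under altering $k$ points and is finite thanks to truncation. I would then invoke Prop. 5, which guarantees that perturbing a statistic of smooth sensitivity $S_\beta$ by Arsinh-Normal noise of scale $S_\beta/\epsilon$ is $\tfrac12\epsilon^2$-zCDP; the hypothesis $n\ge O(\log((b-a)n/(\sigma\epsilon)))$ enters to ensure the sample is large enough that this calibration is valid uniformly over the prior range $\mu\in[a,b]$ and over the unknown scale $\sigma$.

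For utility I would use the decomposition
\begin{equation*}
\mathbb{E}[(M(X)-\mu)^2]\le 3\,\mathbb{E}[(\tilde X-\bar X)^2]+3\,\mathbb{E}[(\bar X-\mu)^2]+3\,\mathbb{E}\big[(S_\beta(X)/\epsilon)^2\big]\,\mathbb{E}[\zeta^2],
\end{equation*}
where $\zeta$ is the standardized Arsinh-Normal variable (independent fresh noise). The term $\mathbb{E}[(\bar X-\mu)^2]=\sigma^2/n$ is the statistical error, $\mathbb{E}[(\tilde X-\bar X)^2]$ is the small truncation bias controlled by the bounded second moment $\mathbb{E}[X^2]=\sigma^2+\mu^2$, and $\mathbb{E}[\zeta^2]=O(1)$ by the finite variance of the Arsinh-Normal law. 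The crux is to show $\mathbb{E}[S_\beta(X)^2]=O(\sigma^2\log n/n)$: here I would bound $LS_k(\tilde X)$ by the gap between extreme retained order statistics, use the second-moment control to bound the tails of those order statistics, and optimize the geometric envelope $\max_k e^{-\beta k}LS_k$ with $\beta\asymp\epsilon$; the competition between the damping $e^{-\beta k}$ and the growth of the order-statistic gaps is what yields the $\log n$.

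The main obstacle will be the second-moment bound on the smooth sensitivity for genuinely heavy-tailed data: extreme order statistics can be large with non-negligible probability, so a worst-case bound diverges. I would control this by splitting $\mathbb{E}[S_\beta(X)^2]$ over the high-probability event that the relevant order statistics are $O(\sigma\sqrt{n})$ and its complement, bounding the latter through a Chebyshev/Markov tail estimate so that the exponential envelope $e^{-\beta k}$ absorbs the growth; balancing the truncation level against this tail is what couples the $\log n$ factor to the unknown variance $\sigma$ and explains the sample-size requirement. A secondary subtlety is verifying that the Arsinh-Normal mechanism delivers \emph{pure} zCDP when calibrated to smooth rather than global sensitivity, which is exactly the content of Prop. 5 and requires matching the smoothing parameter $\beta$ to the target zCDP level $\tfrac12\epsilon^2$.
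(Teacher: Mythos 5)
A point of order first: the paper never proves this statement. It is imported verbatim as Theorem 7 of \citep{bun2019average}, with the mechanism reproduced as Algorithm \ref{alg:2}, so your proposal can only be compared against the original source. At the architectural level your reconstruction matches that source: a trimmed-and-clamped empirical mean, a smooth upper bound on local sensitivity in the sense of \citep{nissim2007smooth}, arsinh-normal noise whose zCDP guarantee is exactly Prop.~5, and a three-term decomposition of the squared error into trimming bias, statistical error, and noise (the independence of the fresh noise $\zeta$ does justify factoring $\mathbb{E}[S^2]\,\mathbb{E}[\zeta^2]$, and $\mathbb{E}[\zeta^2]=O(1)$ for $\sinh$ of a Gaussian).

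Two of your steps are genuinely wrong, however. First, you attribute the finiteness of the smooth sensitivity to trimming. Trimming controls $LS_0$, but $LS_k=\infty$ once $k\geq n-2m$ (an adversary altering that many points moves all retained order statistics arbitrarily), so $\max_{k} e^{-tk}LS_k$ would be infinite for the unclamped trimmed mean; it is the clamping of the output to $[a,b]$ that caps $LS_k\leq b-a$ for all $k$ and produces the $e^{-mt}(b-a)$ term visible in Algorithm \ref{alg:2}. Relatedly, the hypothesis $n\geq O(\log(n(b-a)/(\sigma\epsilon)))$ is not needed for the privacy calibration to be ``valid uniformly'': the mechanism is $\frac{1}{2}\epsilon^2$-zCDP for every $n$; that condition is a \emph{utility} requirement ensuring the clamp term $e^{-mt}(b-a)$ is dominated by the statistical error $\sigma/\sqrt{n}$. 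Second, your account of the $\log n$ factor fails quantitatively. You place it in the envelope $\max_k e^{-\beta k}LS_k$ with $\beta\asymp\epsilon$, but the smoothing parameter of the arsinh-normal mechanism scales as $t\asymp\epsilon^2$ (indeed $t=\epsilon^2/16$, $s=\epsilon/4$ in Algorithm \ref{alg:2}), and the range term carries no logarithm at all: $\mathbb{E}[(x_{(n)}-x_{(1)})^2]\leq \sum_i\mathbb{E}[(x_i-\mu)^2]= n\sigma^2$, so $\mathbb{E}\bigl[\bigl((x_{(n)}-x_{(1)})/(n-2m)\bigr)^2\bigr]=O(\sigma^2/n)$. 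The $\log n$ actually enters through the trimming bias: killing the clamp term forces $m=\Theta\bigl(t^{-1}\log((b-a)\sqrt{n}/\sigma)\bigr)=\Theta(\epsilon^{-2}\log(\cdot))$ trimmed points per side, and by Cauchy--Schwarz the top-$m$ deviations sum to at most $\sqrt{m\cdot n\sigma^2}$, giving bias squared $O(\sigma^2 m/n)=O(\sigma^2\log n/(n\epsilon^2))$ --- which is precisely the stated bound. Your Chebyshev-splitting plan for $\mathbb{E}[S_\beta^2]$ is thus solving the wrong bottleneck, and your text is internally inconsistent on the smoothing parameter (first $\beta\asymp\epsilon$, then matched to $\frac{1}{2}\epsilon^2$).
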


The key idea of our algorithm is that, in each iteration, after getting $w^{t-1}$, we use the mechanism in Theorem \ref{thm:3} on each coordinate of $\nabla \ell(w, x_i)$. See Algorithm  \ref{alg:3} for details. 
\begin{algorithm}[!h]
	\caption{Mechanism $\mathcal{M}$ in \citep{bun2019average}} \label{alg:2}
	$\mathbf{Input}$: $D=\{x_i\}_{i=1}^n\subset \mathbb{R}, \epsilon, a, b.$ 
	\begin{algorithmic}[1]
    \STATE Let $t=\frac{\epsilon^2}{16}$ and $s=\frac{\epsilon}{4}$. Sort $\{x_i\}_{i=1}^n$ in the ascending order as $x_{(1)}\leq x_{(2)}\leq \cdots \leq x_{(n)}$. Calculate the upper bound of the smooth sensitivity for the trimming and truncating step:
    \begin{equation*}
        S^{t}_{[\text{trim}_m(\cdot)]_{[a, b]}}(D)= \max\{ \frac{x_{(n)}-x_{(1)}}{n-2m}, e^{-mt}(b-a)\}, 
    \end{equation*}
    where $m=O(1)\leq \frac{n}{2}$ is a constant. 
    \STATE Do the average trimming and truncating step:
    \begin{equation*}
        [\text{Trim}_{m}(D)]_{[a,b]}=[ \frac{x_{(m+1)}+\cdots+x_{(n-m)}}{n-2m}]_{[a,b]}, 
    \end{equation*}
    where $[x]_{[a,b]}= x$ if $a\leq x \leq b$, equals to $a$ if $x< a$ and otherwise equals to $b$. 
    \STATE Output $[\text{Trim}_{m}(D)]_{[a,b]}+ \frac{1}{s} S^{t}_{[\text{trim}_m(\cdot)]_{[a, b]}}(D)\cdot Z$, where $Z=\text{sinh}(Y)=\frac{e^Y-e^{-Y}}{2}$ and $Y$ is the Standard Gaussian. 
	\end{algorithmic}
\end{algorithm}
%%\vspace{-0.1in}
\begin{algorithm}[!h]
	\caption{Heavy-tailed DP-SCO with known mean} \label{alg:3}
	$\mathbf{Input}$: $D=\{x_i\}_{i=1}^n\subset \mathbb{R}^d$, privacy parameters $\epsilon, \delta$; loss function $\ell(\cdot, \cdot)$,  initial parameter $w^0$, $a, b$ which satisfy Assumption \ref{ass:3}, and the number of iterations $T$ (to be specified later).
	\begin{algorithmic}[1]
    \STATE Let $\tilde{\epsilon}= \sqrt{2\log \frac{1}{\delta}+2\epsilon}-\sqrt{2\log \frac{1}{\delta}}$. 
    \FOR {$t=1, 2, \cdots, T$}
    \STATE For each $j\in [d]$,  calculate 
    
    $D_{t-1, j}(w^{t-1})= \{\nabla_j \ell(w^{t-1}, x_i)\}_{i=1}^n$.
    \STATE Run Algorithm \ref{alg:2} for each $D_{t-1, j}$ and denote the output 
    
    $\tilde{\nabla}_{t-1, j}(w^{t-1})=(\mathcal{M}(D_{t-1,j}(w^{t-1})), \frac{\tilde{\epsilon}}{\sqrt{d T}}, a, b)$.  Denote 
    $$\nabla \tilde{L}(w^{t-1}, D)= (\tilde{\nabla}_{t-1, 1}(w^{t-1}) \cdots, \tilde{\nabla}_{t-1, d}(w^{t-1})).$$
    \STATE Updating $w^t=\mathcal{P}_\mathcal{W} (w^{t-1}- \eta_{t-1}\nabla \tilde{L}(w^{t-1}, D))$, where $\eta_{t-1}$ is some step size and $\mathcal{P}_\mathcal{W}$ is the projection operator.
    \ENDFOR
	\end{algorithmic}
\end{algorithm}
By the composition theorem and the relationship between $zCDP$ and $(\epsilon, \delta)$-DP \citep{bun2016concentrated}, we have the DP guarantee. 
\begin{theorem}\label{thm:4}
For any $0<\epsilon, \delta\leq 1$, Algorithm \ref{alg:3} is $(\epsilon, \delta)$-differentially private. 
\end{theorem}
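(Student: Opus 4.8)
The plan is to account for the privacy cost through the lens of zero-concentrated differential privacy (zCDP), for which both adaptive composition and the conversion back to $(\epsilon,\delta)$-DP are clean, and then to verify that the specific choice of $\tilde{\epsilon}$ in Algorithm \ref{alg:3} makes the accumulated cost collapse to \emph{exactly} $\epsilon$. The whole argument is a composition-plus-conversion bookkeeping exercise, so the real content is (i) tracking the per-call cost and the number of calls correctly, and (ii) the algebraic check that the final loss equals $\epsilon$.

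First I would establish the per-call cost. By Theorem \ref{thm:3} (equivalently Prop.~5 in \citep{bun2019average}), Algorithm \ref{alg:2} run with privacy parameter $p$ is $\tfrac12 p^2$-zCDP with respect to changing a single entry of its input dataset. In Algorithm \ref{alg:3} every coordinate-wise invocation uses $p=\tfrac{\tilde{\epsilon}}{\sqrt{dT}}$, so each individual call is $\tfrac{\tilde{\epsilon}^2}{2dT}$-zCDP.

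Next I would invoke composition. For neighboring $D\sim D'$ differing in a single point $x_i$, that point enters the input $D_{t-1,j}(w^{t-1})$ of every one of the $dT$ calls (all $d$ coordinates across all $T$ rounds), so the privacy loss accumulates over all of them. Since each iterate $w^{t-1}$, and hence each input dataset fed to Algorithm \ref{alg:2} at round $t$, is computed from the privatized gradients released in earlier rounds, these calls form an \emph{adaptively} chosen sequence; zCDP nevertheless composes additively under adaptive composition \citep{bun2016concentrated}, giving $dT\cdot\tfrac{\tilde{\epsilon}^2}{2dT}=\tfrac{\tilde{\epsilon}^2}{2}$-zCDP for the joint release of all privatized gradients. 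The returned iterate $w^T$ is a deterministic post-processing of this release (projections and gradient steps add no randomness that depends on $D$), and zCDP is closed under post-processing, so $w^T$ is $\tfrac{\tilde{\epsilon}^2}{2}$-zCDP as well. The main subtlety to flag here is exactly this point: one must apply composition in its adaptive form, since the gradients passed to Algorithm \ref{alg:2} at round $t$ depend on the earlier private outputs.

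Finally I would convert to $(\epsilon,\delta)$-DP and check that the bookkeeping closes. By the zCDP-to-DP conversion in \citep{bun2016concentrated}, any $\rho$-zCDP mechanism is $\bigl(\rho+2\sqrt{\rho\log(1/\delta)},\,\delta\bigr)$-DP; with $\rho=\tfrac{\tilde{\epsilon}^2}{2}$ the privacy loss is $\tfrac{\tilde{\epsilon}^2}{2}+\tilde{\epsilon}\sqrt{2\log(1/\delta)}$. The one genuinely computational step is to show this equals $\epsilon$. Writing $A=\sqrt{2\log(1/\delta)}$ and $B=\sqrt{2\log(1/\delta)+2\epsilon}$ so that $\tilde{\epsilon}=B-A$, the identity $B^2-A^2=2\epsilon$ factors as $\tilde{\epsilon}(B+A)=2\epsilon$, and therefore
\[
\frac{\tilde{\epsilon}^2}{2}+\tilde{\epsilon}A=\frac{\tilde{\epsilon}}{2}\bigl((B-A)+2A\bigr)=\frac{\tilde{\epsilon}}{2}(B+A)=\epsilon,
\]
so Algorithm \ref{alg:3} is exactly $(\epsilon,\delta)$-DP, as claimed.
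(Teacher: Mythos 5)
Your proof is correct and follows essentially the same route as the paper's: per-call $\frac{1}{2}\frac{\tilde{\epsilon}^2}{dT}$-zCDP accounting via Theorem \ref{thm:3}, additive composition over the $dT$ adaptive calls to get $\frac{1}{2}\tilde{\epsilon}^2$-zCDP, and conversion to $(\epsilon,\delta)$-DP via the Bun--Steinke lemma. You additionally spell out two points the paper leaves implicit---that composition must be applied in its adaptive form, and the algebraic check that $\frac{\tilde{\epsilon}^2}{2}+\tilde{\epsilon}\sqrt{2\log(1/\delta)}=\epsilon$ for the stated choice of $\tilde{\epsilon}$---both of which are accurate.
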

To show the \textit{expected} excess population risk of Algorithm \ref{alg:3}, we cannot use the upper bound in Theorem \ref{thm:3} directly for the following reasons. First, since the upper bound is for the expectation w.r.t. $X$ and $Z$ while the \textit{expected} excess population risk depends only on the randomness of the algorithm instead of the data. Thus, we need to obtain an upper bound for $\mathbb{E}_{ Z}[(M(X)-\mu)^2]$ (with high probability w.r.t. $X$). Secondly, to get an upper bound, it is sufficient to analyze the term $\|\nabla \tilde{L}(w^{t-1}, D)-\nabla L_\mathcal{D}(w^{t-1})\|_2$ in each iteration. However, since the parameter $w^{t-1}$ at any step depends on the random draw of the dataset $\{x_i\}_{i=1}^n$, upper bounds on the estimation error need to be uniform in $w\in \mathcal{W}$ in order to capture all contingencies. To resolve these two issues, we use the same technique as in \citep{chen2017distributed,vershynin2010introduction} (under Assumption \ref{ass:3}) to obtain the following lemma. 
%by using  we have the following lemma which solves the above two issues.
\begin{lemma}\label{lemma:4}
Under Assumption \ref{ass:3}, 
%consider the $\nabla \tilde{L}(\cdot, D)$ in Algorithm \ref{alg:3}, 
with probability at least $1-\frac{2dn}{(1+n\hat{\beta}\Delta)^d}$ the following holds  for all $w\in \mathcal{W}$,
\begin{equation}\label{eq:4}
    \mathbb{E}_{Z}\| \nabla \tilde{L}(w, D)- \nabla L_\mathcal{D}(w)\|_2\leq O( \frac{\tau d\sqrt{T\log n}}{\sqrt{n}\tilde{\epsilon}}), 
\end{equation}
where $\hat{\beta}=\sqrt{\beta_1^2+\cdots+\beta_d^2}$, the expectation is w.r.t. the random variables $\{Z_i\}_{i=1}^d$ and the Big-$O$ notation omits other factors.
\end{lemma}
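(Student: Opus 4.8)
The plan is to fix a single point $w$ first, control the per-coordinate error of the Arsinh-Normal mechanism conditioned on the data, and then make the estimate uniform over $\mathcal{W}$ by a covering argument. For a fixed $w$ and coordinate $j$, the output of Algorithm~\ref{alg:2} has the form $\tilde{\nabla}_j(w)=\hat{\mu}_j(w)+\tfrac{1}{s}S_j(w)\,Z_j$, where $\hat{\mu}_j(w)=[\mathrm{Trim}_m(D_j(w))]_{[a,b]}$ and $S_j(w)$ are deterministic functions of the data, $s=\tfrac{\tilde{\epsilon}}{4\sqrt{dT}}$ is the scale parameter used in each call (the per-coordinate, per-iteration budget is $\tfrac{\tilde{\epsilon}}{\sqrt{dT}}$), and $Z_j=\sinh(Y_j)$ with $Y_j$ standard Gaussian. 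Since $\nabla_j L_{\mathcal{D}}(w)=\mathbb{E}_x[\nabla_j\ell(w,x)]=:\mu_j(w)$, I would split $\mathbb{E}_{Z_j}[(\tilde{\nabla}_j(w)-\mu_j(w))^2]=(\hat{\mu}_j(w)-\mu_j(w))^2+\tfrac{S_j(w)^2}{s^2}\mathbb{E}[Z_j^2]$, using that $\mathbb{E}[\sinh(Y_j)]=0$ kills the cross term. A short computation with the Gaussian MGF gives $\mathbb{E}[Z_j^2]=\tfrac{1}{2}(e^2-1)=O(1)$, so the noise contribution per coordinate is $O(S_j(w)^2/s^2)$.

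The two data-dependent quantities are then controlled via Assumption~\ref{ass:3}. Since $\nabla_j\ell(w,x)-\mu_j(w)$ is mean-zero $\tau$-sub-exponential, a Bernstein bound shows the trimmed-and-truncated mean concentrates as $|\hat{\mu}_j(w)-\mu_j(w)|=O(\tau\sqrt{\log(1/\beta')/n})$ with probability $1-\beta'$ (truncation to $[a,b]$ only helps, since $\mu_j(w)\in[a,b]$ by the second part of Assumption~\ref{ass:3}); and the smooth sensitivity obeys $S_j(w)=O(\tau\sqrt{\log n}/\sqrt{n})$ with high probability — exactly the bound underlying the mean-squared-error guarantee of Theorem~\ref{thm:3}, because the order-statistic range of sub-exponential samples is $O(\tau\log n)$ and $m$ is chosen so the $e^{-mt}(b-a)$ term is dominated. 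Substituting $1/s^2=16dT/\tilde{\epsilon}^2$ gives $\tfrac{S_j(w)^2}{s^2}=O\big(\tfrac{\tau^2 dT\log n}{n\tilde{\epsilon}^2}\big)$. Summing over the $d$ coordinates and applying Jensen's inequality, $\mathbb{E}_Z\|\nabla\tilde{L}(w,D)-\nabla L_{\mathcal{D}}(w)\|_2\le(\sum_j\mathbb{E}_{Z_j}[(\tilde{\nabla}_j-\mu_j)^2])^{1/2}$, the noise term dominates the bias term and yields the claimed $O\big(\tfrac{\tau d\sqrt{T\log n}}{\sqrt{n}\tilde{\epsilon}}\big)$ for each fixed $w$.

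To upgrade this to a bound holding simultaneously for all $w\in\mathcal{W}$ — the second difficulty flagged before the lemma, since $w^{t-1}$ is itself data-dependent — I would follow the covering technique of Chen et al. and Vershynin. Build a minimal $r$-net $N$ of $\mathcal{W}$; since $\mathrm{diam}(\mathcal{W})=\Delta$, $|N|\le(1+\Delta/r)^d$. Apply the fixed-$w$ bound with a union bound over the at most $2d|N|$ coordinate/sign events, and extend from a net point $w_0$ to a nearby $w$ using that each $\nabla_j\ell(\cdot,x)$ is $\beta_j$-Lipschitz, so $\mu_j$ and its empirical analogue change by at most $\beta_j r$ over a net cell; choosing $r=\Theta(1/(n\hat{\beta}))$ with $\hat{\beta}=(\sum_j\beta_j^2)^{1/2}$ makes the Lipschitz slack negligible relative to the target rate and turns $|N|$ into $(1+n\hat{\beta}\Delta)^d$, producing the stated failure probability $\tfrac{2dn}{(1+n\hat{\beta}\Delta)^d}$.

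The main obstacle I expect is this uniform-in-$w$ step: unlike the classical setting, the quantity being controlled is not a plain empirical mean but the output of the trimming/truncation mechanism, so both $\hat{\mu}_j(w)$ and the data-dependent smooth sensitivity $S_j(w)$ must be shown to be (nearly) Lipschitz in $w$ and concentrated uniformly. Verifying that the order statistics and the trimming operation interact well with the $\beta_j$-Lipschitz assumption, while keeping $r$ small enough that the $\beta_j r$ discretization error does not spoil the $\sqrt{\log n}/\sqrt{n}$ rate, is the delicate part. Converting Theorem~\ref{thm:3}'s in-expectation-over-$X$ guarantee into the required high-probability-over-$X$ control of $S_j(w)$ is the other place where care is needed, which is precisely why I would work directly with the sub-exponential tail of the order statistics rather than quoting Theorem~\ref{thm:3} as a black box.
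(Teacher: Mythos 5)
Your proposal is correct and follows essentially the same route as the paper's proof: a per-coordinate bias--variance decomposition using $\mathbb{E}[\sinh(Y)]=0$, Bernstein-type sub-exponential concentration for the trimmed/truncated mean and the order-statistic range, and a covering net of radius $\Theta\bigl(\frac{1}{n\hat{\beta}}\bigr)$ with the $\beta_j$-Lipschitz transfer, which is exactly how the paper obtains the net size $(1+n\hat{\beta}\Delta)^d$ and the stated failure probability. The only cosmetic difference is that the paper imports the smooth-sensitivity variance bound $\mathbb{E}_Z\bigl(\tfrac{1}{s}S^t Z\bigr)^2\leq O\bigl(\tfrac{\tau^2 dT\log n}{n\tilde{\epsilon}^2}\bigr)$ from the proof of Theorem 51 in \citep{bun2019average}, whereas you rederive it directly from the order statistics --- the delicate points you flag (uniformity of the trimmed mean and of $S_j(w)$ in $w$) are handled in the paper precisely as you suggest, by making the empirical-mean and max-deviation events uniform over the net and invoking the trimming error bound at each $w$.
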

%In the following 
Next, we show the expected excess population risk for strongly convex loss  functions.

\begin{theorem}[Strongly-convex case]\label{thm:5}
Under Assumptions \ref{ass:1} and \ref{ass:3}, 
if the population risk is $\alpha$-strongly convex and $T$ and $\eta$ are set to be $T=O(\frac{\beta}{\alpha}\log n)$ and $\eta=\frac{1}{\beta}$, respectively, in  Algorithm \ref{alg:3}, then 
%moreover, if the population risk is $\alpha$-strongly convex and in Algorithm \ref{alg:3} we take 
%$T=O(\frac{\beta}{\alpha}\log n)$ and $\eta=\frac{1}{\beta}$.  Then 
with probability at least $1-\Omega(\frac{\beta}{\alpha}\frac{2dn\log n}{(1+n\hat{\beta}\Delta)^d})$ the output satisfies the following for all $D\sim \mathcal{D}^n$, 
%the output satisfies 
\begin{equation*}\label{eq:5}
   \mathbb{E}[ L_\mathcal{D}(w^T)]-L_\mathcal{D} (w^*)\leq O(\frac{\Delta^2\beta^2\tau^2 d^2 \log^2 n \log \frac{1}{\delta}}{\alpha^3 n\epsilon^2 }).
\end{equation*}
\end{theorem}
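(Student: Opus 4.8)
The plan is to treat Algorithm \ref{alg:3} as inexact projected gradient descent on $L_\mathcal{D}$ and to run the classical strongly-convex convergence analysis with the per-step gradient error controlled by Lemma \ref{lemma:4}. Write $e^{t-1}=\nabla\tilde L(w^{t-1},D)-\nabla L_\mathcal{D}(w^{t-1})$ for the estimation error. Since $L_\mathcal{D}$ is $\alpha$-strongly convex and $\beta$-smooth with $\nabla L_\mathcal{D}(w^*)=0$ (Assumption \ref{ass:1}) and $\mathcal{P}_\mathcal{W}$ is non-expansive, the first step is to derive, for $\eta=1/\beta$, a one-step contraction of the form
\begin{equation*}
\|w^t-w^*\|_2^2\le\bigl(1-\tfrac{\alpha}{\beta}\bigr)\|w^{t-1}-w^*\|_2^2+\tfrac{c}{\alpha\beta}\|e^{t-1}\|_2^2,
\end{equation*}
where the cross term is absorbed by Young's inequality (this is what turns the contraction factor into $1-\alpha/\beta$ and loads a $1/(\alpha\beta)$ factor onto the error term).

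The second step is to take the expectation over the internal noise $Z$ and iterate. The subtle point is that $w^{t-1}$ depends both on the data and on the noise used in rounds $1,\dots,t-1$, so $\mathbb{E}_Z\|e^{t-1}\|_2^2$ cannot be controlled by plugging a fixed $w$ into Theorem \ref{thm:3}. This is exactly the role of the uniform-in-$w$ guarantee behind Lemma \ref{lemma:4}: on a single good event over $D$ (of probability $1-\tfrac{2dn}{(1+n\hat\beta\Delta)^d}$), the per-coordinate mean-squared error of Theorem \ref{thm:3}, summed over the $d$ coordinates with per-coordinate budget $\tilde\epsilon/\sqrt{dT}$, gives a second-moment bound $\mathbb{E}_Z\|\nabla\tilde L(w,D)-\nabla L_\mathcal{D}(w)\|_2^2=O(B^2)$ with $B=O\bigl(\tfrac{\tau d\sqrt{T\log n}}{\sqrt n\,\tilde\epsilon}\bigr)$ simultaneously for \emph{every} $w\in\mathcal{W}$, in particular for the random $w^{t-1}$. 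Because round $t$ draws fresh Gaussians independent of $w^{t-1}$, conditioning on the history and applying the tower property yields $\mathbb{E}_Z\|e^{t-1}\|_2^2\le O(B^2)$, so with $a_t=\mathbb{E}_Z\|w^t-w^*\|_2^2$ the recursion collapses to $a_t\le(1-\alpha/\beta)a_{t-1}+O\bigl(B^2/(\alpha\beta)\bigr)$.

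Unrolling the geometric recursion gives $a_T\le(1-\alpha/\beta)^T\Delta^2+O(B^2/\alpha^2)$, using $\sum_k(1-\alpha/\beta)^k=O(\beta/\alpha)$ and $\|w^0-w^*\|_2\le\Delta$. Choosing $T=O(\tfrac{\beta}{\alpha}\log n)$ drives the transient factor $(1-\alpha/\beta)^T$ down to a small polynomial in $1/n$, and converting distance to risk by $\beta$-smoothness and $\nabla L_\mathcal{D}(w^*)=0$ gives $\mathbb{E}_Z[L_\mathcal{D}(w^T)]-L_\mathcal{D}(w^*)\le\tfrac{\beta}{2}a_T$. Substituting this choice of $T$ into $B$, together with $\tilde\epsilon=\Theta(\epsilon/\sqrt{\log(1/\delta)})$ from its definition in Algorithm \ref{alg:3}, makes $\tfrac{\beta}{2}\cdot O(B^2/\alpha^2)$ equal to the claimed $\tilde O\bigl(\tfrac{\beta^2\tau^2 d^2\log^2 n\log(1/\delta)}{\alpha^3 n\epsilon^2}\bigr)$ rate; the remaining $\Delta^2$ in the statement comes from bounding the transient term $\tfrac{\beta}{2}(1-\alpha/\beta)^T\Delta^2$ so that the final expression cleanly dominates both contributions. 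The failure probability $\Omega\bigl(\tfrac{\beta}{\alpha}\tfrac{2dn\log n}{(1+n\hat\beta\Delta)^d}\bigr)$ then reflects a union bound of the Lemma \ref{lemma:4} event over the $T=O(\tfrac{\beta}{\alpha}\log n)$ rounds.

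I expect the main obstacle to be the second step: lifting Theorem \ref{thm:3}, which bounds the mean-squared error only \emph{in expectation over the data}, to a bound that holds \emph{with high probability over the data and uniformly over all} $w\in\mathcal{W}$, which is what legitimizes substituting the data-dependent iterate $w^{t-1}$. This is precisely the content packaged into Lemma \ref{lemma:4} and rests on a covering-number argument over $\mathcal{W}$ combined with the sub-exponential and Lipschitz structure of Assumption \ref{ass:3} (as in \citep{chen2017distributed,vershynin2010introduction}); keeping the resulting failure probability and the $\sqrt{T}$ privacy-splitting factor under control, while ensuring the per-round noise stays independent of the current iterate so the tower property is valid, is the delicate part. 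By contrast, the contraction recursion and the choice of $T$ are standard strongly-convex gradient-descent bookkeeping.
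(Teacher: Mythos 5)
Your proposal is correct and follows the same architecture as the paper's proof --- inexact projected gradient descent, a one-step contraction for the exact step, the uniform-in-$w$ error bound of Lemma \ref{lemma:4} applied to the data-dependent iterate on a single good event over $D$, geometric unrolling with $T=O(\frac{\beta}{\alpha}\log n)$, and the smoothness conversion $\mathbb{E}[L_\mathcal{D}(w^T)]-L_\mathcal{D}(w^*)\leq \frac{\beta}{2}\mathbb{E}\|w^T-w^*\|_2^2$ --- but it differs in one technically meaningful detail. The paper runs the recursion at the level of \emph{first} moments: it uses co-coercivity to get $\|w^{t-1}-\eta\nabla L_\mathcal{D}(w^{t-1})-w^*\|_2^2\leq(1-\frac{2\alpha}{\alpha+\beta})\|w^{t-1}-w^*\|_2^2$, takes square roots via $\sqrt{1-x}\leq 1-\frac{x}{2}$, and adds the error term $\eta\,\mathbb{E}_Z\|e^{t-1}\|_2$ from Lemma \ref{lemma:4} as stated, obtaining a linear recursion in $\mathbb{E}_Z\|w^t-w^*\|_2$; you instead run a squared-distance recursion with Young's inequality, which costs a benign constant in the contraction factor but requires the \emph{second}-moment bound $\mathbb{E}_Z\|e^{t-1}\|_2^2\leq O(B^2)$, strictly stronger than Lemma \ref{lemma:4} as stated. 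Your claim that this second-moment version is available is right: the paper's own proof of Lemma \ref{lemma:4} in fact bounds the per-coordinate mean-squared noise $\mathbb{E}_Z(\frac{1}{s}S^t(D_j(w))Z_j)^2\leq O(\frac{\tau^2 dT\log n}{n\tilde\epsilon^2})$ together with a deterministic bias bound on the good event, and only takes a square root at the end, so summing over the $d$ coordinates yields exactly your $O(B^2)$. Your variant is actually the cleaner one at the final step: the paper's last line needs $\mathbb{E}\|w^T-w^*\|_2^2$ but has only derived a bound on $\mathbb{E}\|w^T-w^*\|_2$, and Jensen's inequality goes the wrong way there, whereas your second-moment recursion delivers $a_T=\mathbb{E}_Z\|w^T-w^*\|_2^2$ directly and makes the smoothness conversion immediate. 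The bookkeeping for $\tilde\epsilon=\Theta(\epsilon/\sqrt{\log\frac{1}{\delta}})$, the placement of $\Delta^2$ (transient term versus multiplicative looseness), and the failure probability matching the theorem's $\frac{\beta}{\alpha}\log n$ union factor all agree with the paper.
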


Compared with the bound in Theorem \ref{thm:2}, we can see that the bound in Theorem \ref{thm:5} improves a factor of $\tilde{O}(\frac{d}{\epsilon^2})$ (if we omit other terms).  However, there are more assumptions on the distribution and the loss functions. Specifically, in Assumption \ref{ass:3} we need to assume the sub-exponential property, {\em i.e.,} the moment of $\nabla_j\ell(w, x)$ exists for every order. Also, we need to assume that $\nabla_j\ell(w, x)$ is Lipschitz and the range of its mean is known. These assumptions are quite strong, compared to those used in the literature of learning with heavy-tailed data, such as \citep{holland2017efficient,brownlees2015empirical,hsu2016loss,minsker2015geometric}. 

To improve the above result, we consider the following. First, we would like to  
%first question is 
%how to 
relax those assumptions in the theorem. Second, in the problem of ERM with heavy-tailed data, it is expected to have an excess population risk bound that is in the form of \textit{with high probability} instead of its \textit{expectation}  \citep{brownlees2015empirical}. However, it is unclear whether Algorithm \ref{alg:3} can achieve a high probability bound. This is due to the fact that the noise added in each iteration is  a combination of log-normal distributions, which is non-sub-exponential and thus is hard to get tail bounds. Third, Algorithm \ref{alg:3} depends on the local sensitivity and thus cannot be extended to the distributed settings or local differential privacy model.  Finally, the practical performance of Algorithm \ref{alg:3} has poor utility and is unstable due to the noise added in each iteration (see Section 6 for details), which means that Algorithm \ref{alg:3} is still impractical. To resolve all these issues and still keeping
%In order to solve the above three questions while also keeping 
(approximately) the same upper bound, we propose a new algorithm that is simply based on the Gaussian mechanism. 

In the following we will study the problem under Assumptions 1 and \ref{ass:4}. Note that compared with Assumption \ref{ass:3}, we only need to assume that the second-order moment of $\nabla_j\ell(w, x)$ exists for all $w\in\mathcal{W}$ and $j\in [d]$ and its upper bound is known.

Our method is motivated by the robust mean estimator given in \citep{holland2019a}. To be self-contained, we first review their estimator. Now, we consider  $1$-dimensional random variable $x$ and assume that $x_1, x_2, \cdots, x_n$ are i.i.d. sampled from $x$. The estimator consists of the following steps: 
%\vspace{-0.2in}
\paragraph{Scaling and Truncation} For each sample $x_i$, we first re-scale it by dividing $s$ (which will be specified later). Then, we apply the re-scaled one to some soft truncation function $\phi$. Finally, we put the truncated mean back to the original scale. That is, 
\begin{equation}\label{eq:6}
    \frac{s}{n}\sum_{i=1}^n \phi(\frac{x_i}{s})\approx \mathbb{E}X.
\end{equation}
Here, we use the function given in \citep{catoni2017dimension},
\begin{equation}\label{eq:7}
    \phi(x)= \begin{cases} x-\frac{x^3}{6}, & -\sqrt{2}\leq x\leq \sqrt{2} \\
    \frac{2\sqrt{2}}{3}, & x>\sqrt{2} \\
    -\frac{2\sqrt{2}}{3}, & x<-\sqrt{2}.   
    \end{cases}
\end{equation}
Note that a key property for $\phi$ is that $\phi$ is bounded, that is,  $|\phi(x)|\leq \frac{2\sqrt{2}}{3}$.
%\vspace{-0.15in}
\paragraph{Noise Multiplication} Let $\eta_1, \eta_2, \cdots, \eta_n$ be random noise generated from a common distribution $\eta\sim \chi$ with $\mathbb{E}\eta =0$. We multiply each data $x_i$ by a factor of $1+\eta_i$, and then perform the scaling and truncation step on the term $x_i(1+\eta_i)$.  
%through the scaling and truncation step, 
That is, 
\begin{equation}\label{eq:8}
    \tilde{x}(\eta) =\frac{s}{n}\sum_{i=1}^n \phi(\frac{x_i+\eta_i x_i}{s}).
\end{equation}
%\vspace{-0.2in}
\paragraph{Noise Smoothing} In this final step, we smooth the multiplicative noise by taking the expectation w.r.t. the distributions. That is, 
\begin{equation}\label{eq:9}
    \hat{x}=\mathbb{E}  \tilde{x}(\eta) = \frac{s}{n}\sum_{i=1}^n \int \phi(\frac{x_i+\eta_i x_i}{s})d \chi(\eta_i).
\end{equation}
Computing the explicit form of each integral in (\ref{eq:9}) depends on the function $\phi(\cdot)$ and the distribution $\chi$. Fortunately, \cite{catoni2017dimension} showed that when $\phi$ is in (\ref{eq:7}) and $\chi\sim \mathcal{N}(0, \frac{1}{\beta})$ (where $\beta$ will be specified later), we have for any $a, b$
\begin{equation}\label{eq:10}
    \mathbb{E}_{\eta}\phi(a+b\sqrt{\beta}\eta)=a(1-\frac{b^2}{2})-\frac{a^3}{6}+C(a,b),
\end{equation}
where $C(a, b)$ is a correction form which is easy to implement and its explicit form will be given in the Appendix. 

\begin{algorithm*}[!ht]
	\caption{Heavy-tailed DP-SCO with known variance} \label{alg:4}
	$\mathbf{Input}$: $D=\{x_i\}_{i=1}^n\subset \mathbb{R}^d$, privacy parameters $\epsilon, \delta$, loss function $\ell(\cdot, \cdot)$,  initial parameter $w^0$, $v$ which satisfies Assumption \ref{ass:4}, the number of iterations $T$ (to be specified later), and failure probability $\delta'$. 
	\begin{algorithmic}[1]
    \STATE Let $\tilde{\epsilon}= (\sqrt{\log \frac{1}{\delta}+\epsilon}-\sqrt{\log \frac{1}{\delta}})^2$, $s=\sqrt{\frac{nv}{2\log \frac{1}{\delta'}}}$, $\beta=\log \frac{1}{\delta'}$. 
    \FOR {$t=1, 2, \cdots, T$}
    \STATE For each $j\in [d]$, calculate the robust gradient by (\ref{eq:8})-(\ref{eq:10}), that is 
    %\vspace{-0.2in}
    \begin{multline}
        g_j^{t-1}(w^{t-1})= \frac{1}{n}\sum_{i=1}^n \left(\nabla_j\ell(w^{t-1}, x_i)\big(1-\frac{\nabla^2_j\ell(w^{t-1}, x_i)}{2s^2\beta}\big)- \frac{\nabla^3_j\ell(w^{t-1}, x_i)}{6s^2}\right)\\+\frac{s}{n}\sum_{i=1}^nC\left(\frac{\nabla_j\ell(w^{t-1}, x_i)}{s}, \frac{|\nabla_j\ell(w^{t-1}, x_i)|}{s\sqrt{\beta}}\right)+ Z_{j}^{t-1},
        %\vspace{-0.2in}
    \end{multline}
    where $Z_{j}^{t-1}\sim \mathcal{N}(0, \sigma^2)$ with $\sigma^2= \frac{8vdT}{9\log \frac{1}{\delta'}n\tilde{\epsilon}}$.
    \STATE Let vector $g^{t-1}(w^{t-1})\in \mathbb{R}^d$ to denote  $g^{t-1}(w^{t-1})=(g_1^{t-1}(w^{t-1}), g_2^{t-1}(w^{t-1}), \cdots, g_d^{t-1}(w^{t-1}))$. 
    \STATE Update
        $w^{t}=\mathcal{P}_{\mathcal{W}}(w^{t-1}-\eta_{t-1}g^{t-1}). $
    \ENDFOR
	\end{algorithmic}
\end{algorithm*}

 \cite{holland2019a} showed the following estimation error  for  the mean estimator $\hat{x}$ after these three steps. 
\begin{lemma}[Lemma 5 in \citep{holland2019a}] \label{lemma:5}
Let  $x_1, x_2, \cdots, x_n$ be i.i.d. samples from  distribution $x\sim \mu$. Assume that there is some known upper bound on the second-order moment, {\em i.e.,} $\mathbb{E}_\mu x^2\leq v$. For a given failure probability $\delta'$, if  set  $\beta= 2\log \frac{1}{\delta'}$ and $s=\sqrt{\frac{nv}{2\log\frac{1}{\delta'}}}$, then with probability at least $1-\delta'$ the following holds 
%we have 
\begin{equation}
    |\hat{x}-\mathbb{E}x|\leq O(\sqrt{\frac{v\log \frac{1}{\delta'}}{n}}). 
\end{equation}
\end{lemma}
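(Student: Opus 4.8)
The plan is to view $\hat{x}$ as an empirical average of i.i.d. bounded summands and split the total error into a bias term and a deviation term, then choose $s$ so that the two pieces balance at the target rate $\sqrt{v\log(1/\delta')/n}$. Concretely, I would define the smoothed per-sample influence $\Phi(x) = s\,\mathbb{E}_{\eta\sim\chi}\phi(\frac{x(1+\eta)}{s})$ with $\chi=\mathcal{N}(0,1/\beta)$, so that by (\ref{eq:9}) we have $\hat{x} = \frac{1}{n}\sum_{i=1}^n \Phi(x_i)$, and write
\begin{equation*}
|\hat{x} - \mathbb{E}x| \leq \underbrace{|\hat{x} - \mathbb{E}\Phi(x)|}_{\text{deviation}} + \underbrace{|\mathbb{E}\Phi(x) - \mathbb{E}x|}_{\text{bias}}.
\end{equation*}

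For the bias I would first record two elementary properties of the truncation function in (\ref{eq:7}): it is uniformly bounded, $|\phi(u)|\leq\frac{2\sqrt{2}}{3}$, and it obeys the quadratic approximation bound $|\phi(u)-u|\leq\frac{u^2}{2}$ for all $u$ — the polynomial analogue of Catoni's logarithmic inequality and the technical heart of the argument. Since $\mathbb{E}_\eta\eta=0$ gives $\mathbb{E}_\eta\frac{x(1+\eta)}{s}=\frac{x}{s}$, the bias equals $s\,|\mathbb{E}_{x,\eta}[\phi(Z)-Z]|$ with $Z=\frac{x(1+\eta)}{s}$, which the quadratic bound controls by $\frac{s}{2}\mathbb{E}_{x,\eta}Z^2 = \frac{1}{2s}\mathbb{E}x^2\,\mathbb{E}_\eta(1+\eta)^2 \leq \frac{v(1+1/\beta)}{2s}$. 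Substituting $s=\sqrt{nv/(2\log(1/\delta'))}$ and $\beta=2\log(1/\delta')$ makes the bias of order $\sqrt{v\log(1/\delta')/n}$, and crucially uses only the second moment $\mathbb{E}x^2\leq v$.

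For the deviation, the summands $\Phi(x_i)$ are i.i.d. and bounded by $\frac{2\sqrt{2}}{3}s$, while Jensen's inequality together with $|\phi(u)|\leq|u|$ gives $\mathbb{E}\Phi(x)^2 \leq \mathbb{E}x^2\,\mathbb{E}_\eta(1+\eta)^2 \leq v(1+1/\beta)$. I would then apply Bernstein's inequality, producing a variance term $\sqrt{\frac{2v(1+1/\beta)\log(2/\delta')}{n}}$ and a range term of order $\frac{s\log(1/\delta')}{n}$; with the stated choice of $s$ the range term also collapses to $\sqrt{v\log(1/\delta')/n}$, so both Bernstein contributions match the bias. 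Summing the two pieces, absorbing the $(1+1/\beta)$ factors into the constant, and taking the failure budget of the Bernstein step to be $\delta'$ yields the claimed bound with probability at least $1-\delta'$.

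The step I expect to be the main obstacle is verifying the uniform quadratic bound $|\phi(u)-u|\leq\frac{u^2}{2}$ for the specific polynomial in (\ref{eq:7}): it must be checked separately on the cubic regime $|u|\leq\sqrt{2}$ (where it reduces to $|u|\leq 3$) and on the saturated regime $|u|>\sqrt{2}$ (where it reduces to showing the quadratic $\frac{u^2}{2}-|u|+\frac{2\sqrt{2}}{3}\geq 0$, a negative-discriminant check). It is precisely this inequality — not the concentration step, which is routine Bernstein — that lets the second-moment assumption alone govern the truncation bias without invoking any higher moment of $x$.
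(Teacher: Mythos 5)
Your proposal is correct: the decomposition $|\hat{x}-\mathbb{E}x|\le|\hat{x}-\mathbb{E}\Phi(x)|+|\mathbb{E}\Phi(x)-\mathbb{E}x|$ is sound, the pointwise bound $|\phi(u)-u|\le u^2/2$ does hold for the function in (\ref{eq:7}) (on $|u|\le\sqrt{2}$ it is $|u|^3/6\le u^2/2$, i.e.\ $|u|\le 3$; on the saturated regime the quadratic $u^2/2-|u|+\frac{2\sqrt{2}}{3}$ has discriminant $1-\frac{4\sqrt{2}}{3}<0$), the variance bound via Jensen and $|\phi(u)|\le|u|$ is valid, and with $s=\sqrt{nv/(2\log(1/\delta'))}$ both the bias $\frac{v(1+1/\beta)}{2s}$ and the Bernstein range term $O(s\log(1/\delta')/n)$ land at the target rate $\sqrt{v\log(1/\delta')/n}$. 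Note, however, that the paper does not prove this lemma at all --- it imports it verbatim as Lemma 5 of \citep{holland2019a}, and the proof there (following the Catoni--Giulini line that this truncation function comes from) is a Chernoff-type exponential-moment argument built on the logarithmic sandwich $-\log(1-u+u^2/2)\le\phi(u)\le\log(1+u+u^2/2)$, in which the multiplicative noise and its smoothing play a genuine role. Your route is a genuinely different and more elementary one: you replace the log-moment inequalities with the quadratic approximation bound plus the uniform bound $|\phi|\le\frac{2\sqrt{2}}{3}$, and let standard Bernstein for bounded i.i.d.\ summands do the concentration --- in your argument the noise multiplication is essentially inert, contributing only the harmless factor $1+1/\beta$. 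What each approach buys: the Catoni-style argument gives sharper constants and avoids any explicit range term (the boundedness of $\phi$ is not what drives the concentration there), while your proof is self-contained, makes transparent that only $\mathbb{E}x^2\le v$ is used, and exploits exactly the boundedness property that this paper separately relies on for the $\ell_2$-sensitivity $\frac{s}{n}\frac{4\sqrt{2}}{3}$ in the DP step --- so it fits the paper's needs with no loss in the stated $O(\cdot)$ rate. One bookkeeping remark: your two-sided Bernstein step costs $\log(2/\delta')$ rather than $\log(1/\delta')$, and the lemma's $\beta=2\log\frac{1}{\delta'}$ differs from Algorithm \ref{alg:4}'s $\beta=\log\frac{1}{\delta'}$; both discrepancies are absorbed into the constant in the Big-$O$, so they do not affect the claim.
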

%\vspace{-0.2in}
To obtain an $(\epsilon,\delta)$-DP estimator, the key observation is that the bounded function $\phi$ in (\ref{eq:7})  also makes the integral form of  (\ref{eq:10}) bounded by $\frac{2\sqrt{2}}{3}$. Thus, we know that the $\ell_2$-norm sensitivity is $\frac{s}{n}\frac{4\sqrt{2}}{3}$. Hence, the query 
\begin{equation}\label{eq:14}
    \mathcal{A}(D)=\hat{x}+ Z, Z\sim \mathcal{N}(0, \sigma^2), \sigma^2=O(\frac{s^2\log \frac{1}{\delta}}{\epsilon^2n^2})
\end{equation}
will be $(\epsilon, \delta)$-DP, which leads to  
%we can easily get 
the following theorem.

\begin{theorem}\label{theorem:6}
Under the assumptions in Lemma \ref{lemma:5}, with probability at least $1-\delta'$ the following holds
\begin{equation}
    |\mathcal{A}(D)-\mathbb{E}(x)|\leq O(\sqrt{\frac{v\log \frac{1}{\delta}\log\frac{1}{\delta'}}{n\epsilon^2}}).
\end{equation}
\end{theorem}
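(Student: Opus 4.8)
The plan is to decompose the total error of the private estimator $\mathcal{A}(D)=\hat{x}+Z$ from (\ref{eq:14}) into its statistical part and its privacy-noise part, and to bound each separately. By the triangle inequality,
\begin{equation*}
|\mathcal{A}(D)-\mathbb{E}(x)|\leq |\hat{x}-\mathbb{E}(x)|+|Z|.
\end{equation*}
The first term is precisely the error of the non-private Catoni-type robust mean estimator $\hat{x}$ defined in (\ref{eq:9}), and the noise-smoothing step makes $\hat{x}$ a deterministic function of the data, so I would invoke Lemma \ref{lemma:5} verbatim: under the stated choices $\beta=2\log\frac{1}{\delta'}$ and $s=\sqrt{nv/(2\log\frac{1}{\delta'})}$, with probability at least $1-\delta'$ over the sample we have $|\hat{x}-\mathbb{E}(x)|\leq O(\sqrt{v\log(1/\delta')/n})$.

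The second term is controlled purely by the algorithm's internal randomness. Since $Z\sim\mathcal{N}(0,\sigma^2)$ with $\sigma^2=O(s^2\log\frac{1}{\delta}/(\epsilon^2 n^2))$, a standard Gaussian tail bound gives $|Z|\leq\sigma\sqrt{2\log(2/\delta')}$ with probability at least $1-\delta'$. The crucial step is to substitute the truncation scale $s=\sqrt{nv/(2\log\frac{1}{\delta'})}$ into $\sigma$, which yields $\sigma=O(\sqrt{v\log(1/\delta)/(n\epsilon^2\log(1/\delta'))})$; multiplying by the tail factor $\sqrt{\log(1/\delta')}$ cancels the $\log(1/\delta')$ in the denominator and leaves $|Z|\leq O(\sqrt{v\log(1/\delta)/(n\epsilon^2)})$.

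Finally I would combine the two bounds by a union bound over the two failure events, which have distinct sources (the sample for the first, $Z$ for the second), absorbing the factor of two into a rescaling of $\delta'$ so that both hold simultaneously with probability at least $1-\delta'$. Since $\epsilon\leq 1$ and $\log\frac{1}{\delta},\log\frac{1}{\delta'}\geq 1$, each of the statistical term $\sqrt{v\log(1/\delta')/n}$ and the noise term $\sqrt{v\log(1/\delta)/(n\epsilon^2)}$ is itself bounded by the claimed $O(\sqrt{v\log(1/\delta)\log(1/\delta')/(n\epsilon^2)})$, so their sum is too up to a constant.

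This proof is short rather than deep; the only real subtlety, not a genuine obstacle, is bookkeeping the interplay between $s$, $\sigma$, and the two appearances of $\log\frac{1}{\delta'}$. The truncation level $s$ is tuned so that the added Gaussian noise has essentially the same order as the intrinsic statistical fluctuation, which is exactly what makes privacy come \emph{for free} up to logarithmic factors. I would therefore double-check that the sensitivity bound underlying $\sigma^2$ is applied consistently: it stems from the boundedness $|\phi|\leq\frac{2\sqrt{2}}{3}$ and hence the $\ell_2$-sensitivity $\frac{s}{n}\frac{4\sqrt{2}}{3}$ noted before (\ref{eq:14}), which is precisely where the $s^2/n^2$ scaling of $\sigma^2$ originates.
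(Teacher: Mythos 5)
Your proof is correct and follows exactly the argument the paper intends (the paper gives no separate written proof, presenting Theorem \ref{theorem:6} as an immediate consequence of Lemma \ref{lemma:5} plus a Gaussian tail bound on the calibrated noise $Z$ in (\ref{eq:14})): triangle-inequality decomposition, Lemma \ref{lemma:5} for the statistical term, the substitution $s=\sqrt{nv/(2\log\frac{1}{\delta'})}$ into $\sigma$ for the noise term, and a union bound. Your bookkeeping of how the $\log\frac{1}{\delta'}$ factors cancel is also the right way to see why the final bound has the stated form.
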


Comparing with Theorem \ref{thm:3}, we can see that the upper bound in Theorem \ref{theorem:6} is in the form of `with high probability' (after transferring zCDP to $(\epsilon, \delta)$-DP \citep{bun2016concentrated}). Moreover, we improve by a factor of $O(\log n)$ in the error bound. 

Inspired by Theorem \ref{theorem:6} and Algorithm \ref{alg:3}, we propose a new method (Algorithm \ref{alg:4}), which uses our private mean estimator (\ref{eq:14}) on each coordinate of the gradient in each iteration. The following theorem shows the error bound when the loss function is strongly convex. 

\begin{theorem}\label{thm:7}
For any $0<\epsilon, \delta<1$, Algorithm \ref{alg:4} is $(\epsilon, \delta)$-DP. Under Assumptions \ref{ass:1} and \ref{ass:4},  if the population risk is $\alpha$-strongly convex and $\eta_t$ and $T$  
in Algorithm \ref{alg:4} are set to be  $\eta_t=\frac{1}{\beta}$ and $T=O(\frac{\beta}{\alpha}\log n)$, respectively, then for any $\delta'>0$, with probability at least $1-2\delta' T$ the output $w^{T}$ satisfies 
\begin{equation*}
       L_\mathcal{D}(w^T)-L_\mathcal{D} (w^*)\leq O(\frac{v\Delta^2\beta^4 d^2 \log^2 n \log \frac{1}{\delta}\log \frac{1}{\delta'}}{\alpha^3 n\epsilon^2}).
\end{equation*}
\end{theorem}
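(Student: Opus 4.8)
The statement bundles a privacy claim and a utility claim, so I would prove them separately. For privacy, the key observation (already noted around Eq.~(\ref{eq:14})) is that the soft-truncation function $\phi$ is bounded by $\frac{2\sqrt2}{3}$, hence so is its smoothed integral form in Eq.~(\ref{eq:10}), since the latter is an average of values of $\phi$. Therefore changing one sample $x_i$ alters each coordinate of the robust estimate by at most $\frac{s}{n}\cdot\frac{4\sqrt2}{3}$, and the full gradient vector $g^{t-1}\in\mathbb{R}^d$ has $\ell_2$-sensitivity at most $\frac{4\sqrt2}{3}\frac{s}{n}\sqrt d$. With $s^2=\frac{nv}{2\log(1/\delta')}$ this gives squared sensitivity $S^2=\frac{16vd}{9\log(1/\delta')n}$. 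Adding $\mathcal{N}(0,\sigma^2 I_d)$ with the prescribed $\sigma^2=\frac{8vdT}{9\log(1/\delta')n\tilde\epsilon}$ makes each iteration $\frac{S^2}{2\sigma^2}=\frac{\tilde\epsilon}{T}$-zCDP; composing over the $T$ iterations yields $\tilde\epsilon$-zCDP. Finally I would invoke the zCDP-to-$(\epsilon,\delta)$ conversion of \citep{bun2016concentrated}: a direct calculation shows that the choice $\tilde\epsilon=(\sqrt{\log\frac1\delta+\epsilon}-\sqrt{\log\frac1\delta})^2$ makes $\tilde\epsilon+2\sqrt{\tilde\epsilon\log\frac1\delta}=\epsilon$, giving exactly $(\epsilon,\delta)$-DP.

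For utility, the plan is to reduce everything to a per-iteration bound on the gradient error $\|g^{t-1}(w^{t-1})-\nabla L_{\mathcal{D}}(w^{t-1})\|_2$. I would decompose this into a statistical part $\|\hat g^{t-1}-\nabla L_{\mathcal{D}}(w^{t-1})\|_2$ and a noise part $\|Z^{t-1}\|_2$. The statistical part is controlled coordinatewise by the robust estimator guarantee of Lemma~\ref{lemma:5}/Theorem~\ref{theorem:6} (using $\mathbb{E}\nabla_j\ell(w,x)=\nabla_j L_{\mathcal{D}}(w)$ and Assumption~\ref{ass:4}), each coordinate contributing $O(\sqrt{v\log(1/\delta')/n})$; the noise part is a scaled $\chi$ variable whose norm concentrates at $O(\sqrt d\,\sigma)$. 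Combining the two and substituting $\sigma$, $T$, and $\tilde\epsilon^{-1}=O(\log(1/\delta)/\epsilon^2)$ gives a high-probability bound $\xi$ on the per-step gradient error in which the dimension-heavy noise term dominates.

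The main obstacle is that $w^{t-1}$ is itself a function of the data on which $\hat g^{t-1}$ is computed, so the fixed-$w$ guarantee of Lemma~\ref{lemma:5} cannot be applied directly; the estimation error must hold uniformly over $\mathcal{W}$. I would resolve this exactly as for Lemma~\ref{lemma:4}, via a covering-net argument: bound the error on a $\rho$-net of $\mathcal{W}$ by a union bound, then transfer to all of $\mathcal{W}$ using the smoothness/Lipschitzness of the gradient. This is the step where the diameter $\Delta$ and the extra logarithmic and dimension factors enter the final rate. A union bound over the $T$ iterations (and over the statistical and noise events) then yields the stated success probability $1-2\delta'T$.

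Finally I would run the inexact projected-gradient analysis. Using that $\mathcal{P}_{\mathcal{W}}$ is non-expansive and that $w^*=\mathcal{P}_{\mathcal{W}}(w^*-\frac1\beta\nabla L_{\mathcal{D}}(w^*))$ since $\nabla L_{\mathcal{D}}(w^*)=0$ (Assumption~\ref{ass:1}), the map $w\mapsto w-\frac1\beta\nabla L_{\mathcal{D}}(w)$ contracts distances by a factor $(1-\frac{\alpha}{\beta})$, so $\|w^t-w^*\|_2\le(1-\frac\alpha\beta)\|w^{t-1}-w^*\|_2+\frac{\xi}{\beta}$. Unrolling and using $T=O(\frac\beta\alpha\log n)$ drives the initial term below $\mathrm{poly}(1/n)$ and leaves $\|w^T-w^*\|_2=O(\xi/\alpha)$; $\beta$-smoothness then converts this to $L_{\mathcal{D}}(w^T)-L_{\mathcal{D}}(w^*)\le\frac\beta2\|w^T-w^*\|_2^2=O(\beta\xi^2/\alpha^2)$. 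Substituting the bound on $\xi$ (with $\sigma$, $T$, and $\tilde\epsilon$ expanded), together with the $\Delta$ and logarithmic factors picked up by the covering argument, yields the claimed excess risk.
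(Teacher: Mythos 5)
Your proposal is correct and takes essentially the same route as the paper: the same bounded-$\phi$ sensitivity bound with zCDP composition and conversion for privacy (the paper simply defers this to the proof of Theorem \ref{thm:4}), a uniform high-probability bound on the gradient estimation error handled by a covering-net argument (which the paper imports ready-made as Lemma 8 of \citep{holland2019a} rather than re-deriving as you propose), and then the contraction analysis recycled from the proof of Theorem \ref{thm:5}. The only slip is cosmetic: with step size $\frac{1}{\beta}$ it is the \emph{squared} distance that contracts by $(1-\frac{\alpha}{\beta})$, so the distance itself contracts by the square root of that factor, which changes nothing beyond constants.
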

%\vspace{-0.1in}
Comparing with Theorem \ref{thm:7} and \ref{thm:5}, we can see that if we omit other terms, the bounds are asymptotically the same and Theorem  \ref{thm:7} needs fewer assumptions. 

With the high probability guarantee on the error in Theorem \ref{theorem:6}, we can actually get an upper bound for general convex loss functions. For this general convex case, we need the following mild technical assumption on the constraint set $\mathcal{W}$. 

\begin{assumption}\label{ass:5}
The constraint set $\mathcal{W}$ contains the following $\ell_2$-ball centered at $w^*$: $\{w: \|w-w^*\|_2\leq 2\|w^0-w^*\|_2\}$. 
\end{assumption}

\begin{theorem}[Convex case]\label{thm:8} Under  Assumptions \ref{ass:1}, \ref{ass:4} and \ref{ass:5}, if we take $\eta=\frac{1}{\beta}$ and $T=\tilde{O}\left(\frac{\|w^0-w^*\|_2\sqrt{n}\sqrt{\tilde{\epsilon}}}{d}\right)^\frac{2}{3}$ in Algorithm \ref{alg:4}, then for any given failure probability  $\delta'$, with probability at least $1-T\delta'$  the following holds 
\begin{equation}
     L_\mathcal{D}(w^T)-L_\mathcal{D} (w^*)\leq \tilde{O}(\frac{\log^\frac{1}{3} \frac{1}{\delta}\sqrt{\log \frac{1}{\delta' }}d^\frac{2}{3}}{(n\epsilon^2)^\frac{1}{3}}) 
\end{equation}
when $n\geq \tilde{\Omega}(\frac{d^2}{\epsilon^2})$, 
where the Big-$\tilde{O}$ notation omits other logarithmic factors and the term of $v, \beta$. 
\end{theorem}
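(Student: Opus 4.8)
The plan is to adapt the textbook convergence analysis of projected gradient descent for a $\beta$-smooth convex objective to the setting where the true gradient $\nabla L_\mathcal{D}(w^{t-1})$ is replaced by the private robust estimate $g^{t-1}$ produced in Algorithm \ref{alg:4}, controlling the per-step error through the high-probability guarantee of Theorem \ref{theorem:6}. First I would bound the $\ell_2$ gradient error at every iteration. Applying Lemma \ref{lemma:5} coordinate-wise, the noiseless robust estimate of each $\nabla_j L_\mathcal{D}(w^{t-1})$ deviates from the truth by $O(\sqrt{v\log(1/\delta')/n})$, while the injected Gaussian noise $Z_j^{t-1}$ has variance $\sigma^2=\frac{8vdT}{9n\tilde\epsilon\log(1/\delta')}$. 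Taking a union bound over the $d$ coordinates and the $T$ iterations and using Gaussian tail bounds, I would show that on an event of probability at least $1-T\delta'$ every step obeys $\|g^{t-1}-\nabla L_\mathcal{D}(w^{t-1})\|_2\le\zeta$ with $\zeta=\tilde O(d\sqrt{vT}/\sqrt{n\tilde\epsilon})$, the DP-noise contribution dominating the statistical one (here the $\log(1/\delta')$ in $\sigma^2$ is arranged to cancel against the tail factor). As in Lemma \ref{lemma:4}, a subtlety is that $w^{t-1}$ is data-dependent, so this bound must hold \emph{uniformly} in $w\in\mathcal{W}$; I would secure this by a covering-number argument over $\mathcal{W}$ together with the coordinate-wise smoothness, but now leaning only on the second-moment Assumption \ref{ass:4}.

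Second, I would feed $\zeta$ into the inexact projected-gradient recursion. Writing $e_t=g^{t-1}-\nabla L_\mathcal{D}(w^{t-1})$ and $R=\|w^0-w^*\|_2$, nonexpansiveness of $\mathcal{P}_\mathcal{W}$ gives the one-step expansion of $\|w^t-w^*\|_2^2$; convexity lower-bounds $\langle\nabla L_\mathcal{D}(w^{t-1}),w^{t-1}-w^*\rangle$ by the gap $L_\mathcal{D}(w^{t-1})-L_\mathcal{D}(w^*)$, and $\beta$-smoothness together with $\nabla L_\mathcal{D}(w^*)=0$ (Assumption \ref{ass:1}) controls the quadratic $\|g^{t-1}\|_2^2$ term under the choice $\eta=1/\beta$. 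Telescoping over $t=1,\dots,T$ then yields
\begin{equation*}
\frac{1}{T}\sum_{t=1}^{T}\big(L_\mathcal{D}(w^{t-1})-L_\mathcal{D}(w^*)\big)\le\frac{\beta R^2}{2T}+2\zeta\max_{t}\|w^{t-1}-w^*\|_2 .
\end{equation*}
Assumption \ref{ass:5} is decisive here: since the ball $\{w:\|w-w^*\|_2\le 2R\}$ lies inside $\mathcal{W}$, I would argue inductively that on the good event the iterates never leave this ball, so $\max_t\|w^{t-1}-w^*\|_2\le 2R$ and the cross term is at most $4\zeta R$. Passing from the averaged gap to $L_\mathcal{D}(w^T)$ via the near-monotone decrease supplied by the smoothness descent lemma gives the clean two-term bound $L_\mathcal{D}(w^T)-L_\mathcal{D}(w^*)\le O(\beta R^2/T+\zeta R)$.

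Finally, I would substitute $\zeta=\tilde O(d\sqrt{vT}/\sqrt{n\tilde\epsilon})$ and minimize $\beta R^2/T+Rd\sqrt{vT}/\sqrt{n\tilde\epsilon}$ over $T$. Balancing the $1/T$ and $\sqrt T$ terms yields
\begin{equation*}
T=\tilde\Theta\Big(\big(\tfrac{\beta R\sqrt{n\tilde\epsilon}}{d\sqrt v}\big)^{2/3}\Big),
\end{equation*}
which is exactly the prescribed choice, and plugging back gives $L_\mathcal{D}(w^T)-L_\mathcal{D}(w^*)=\tilde O\big(\beta^{1/3}v^{1/3}R^{4/3}d^{2/3}/(n\tilde\epsilon)^{1/3}\big)$. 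Expanding $\tilde\epsilon=(\sqrt{\log(1/\delta)+\epsilon}-\sqrt{\log(1/\delta)})^2=\Theta(\epsilon^2/\log(1/\delta))$ turns $(n\tilde\epsilon)^{1/3}$ into $(n\epsilon^2)^{1/3}/\log^{1/3}(1/\delta)$ and, together with the $\sqrt{\log(1/\delta')}$ carried by the high-probability bounds, reproduces the stated rate; the requirement $T\ge 1$ forces $n\tilde\epsilon\gtrsim d^2/R^2$, i.e.\ $n\ge\tilde\Omega(d^2/\epsilon^2)$, which is the claimed sample-size condition.

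I expect the main obstacle to be the two coupled points above: establishing the gradient-error bound uniformly over $w\in\mathcal{W}$ under only a second-moment assumption---where the heavy tails rule out the usual sub-Gaussian nets and one must exploit the boundedness of the truncation $\phi$ in (\ref{eq:7}) to obtain a uniform deviation bound---and the inductive proof that the trajectory stays in the radius-$2R$ ball so that Assumption \ref{ass:5} can be invoked. Because a single atypically large noise realization could break the induction, the high-probability event controlling all $T$ steps must be fixed before the trajectory argument is carried out.
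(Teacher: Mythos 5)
Your proposal is correct, and it rests on the same scaffolding as the paper's proof: the uniform high-probability gradient-error bound $\zeta=\tilde{O}(\beta d\sqrt{vT}/\sqrt{n\tilde{\epsilon}})$ (which the paper simply imports as Lemma 8 of \citep{holland2019a} combined with Gaussian tails, rather than re-deriving the covering argument you sketch), the additive distance recursion $\|w^t-w^*\|_2\le\|w^{t-1}-w^*\|_2+\eta\zeta$ with the induction that the trajectory stays inside the radius-$2\|w^0-w^*\|_2$ ball of Assumption \ref{ass:5} so the projection is inactive (the paper's inequality (\ref{eq:32})), and the same balancing $T=\Theta(\beta D_0/\chi)$. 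But the convergence core is genuinely different. The paper never telescopes an averaged gap: it runs the classical inverse-gap recursion, arguing that either some iterate already satisfies $\|\nabla L_\mathcal{D}(w^t)\|_2<\sqrt{2}\chi$ (whence its gap is at most $2\sqrt{2}D_0\chi$), or else $1/(L_\mathcal{D}(w^t)-L_\mathcal{D}(w^*))$ grows by at least $1/(16\beta D_0^2)$ per step; it then upgrades this best-iterate statement to the last iterate by a contradiction/stability lemma, showing that once the gap dips below $16D_0\chi$ it can never exceed $16D_0\chi+\chi^2/(2\beta)$, since a larger gap would force $\|\nabla L_\mathcal{D}(w^{t_1-1})\|_2\ge 8\chi$ and hence strict descent via the noisy descent lemma. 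Your route---averaged-gap telescoping plus the crude observation that the noisy descent lemma lets the gap rise by at most $\zeta^2/(2\beta)$ per step---also reaches the last iterate, but only thanks to a numerical coincidence of the balancing: at $T=\Theta(\beta R/\zeta)$ one has $T\zeta^2/\beta=\Theta(\zeta R)$, so the accumulated increase is of the same order as the main term, and your two-term bound $O(\beta R^2/T+\zeta R)$ survives. (One small caution in your telescoping step: combining convexity with the self-bounding inequality $\|\nabla L_\mathcal{D}(w)\|_2^2\le 2\beta(L_\mathcal{D}(w)-L_\mathcal{D}(w^*))$ alone exactly cancels the negative gap term at $\eta=1/\beta$; you need the co-coercivity bound $\langle\nabla L_\mathcal{D}(w),w-w^*\rangle\ge\frac{1}{\beta}\|\nabla L_\mathcal{D}(w)\|_2^2$ on half the inner product to absorb $\eta^2\|\nabla L_\mathcal{D}(w^{t-1})\|_2^2$, after which the computation goes through.) What each approach buys: the paper's stability argument gives a last-iterate guarantee with no error accumulation over steps, so it would remain valid if $T$ were taken larger than the balanced choice, whereas your argument is more elementary and avoids the case analysis; both deliver the stated $\tilde{O}(\log^{\frac{1}{3}}\frac{1}{\delta}\sqrt{\log\frac{1}{\delta'}}\,d^{\frac{2}{3}}/(n\epsilon^2)^{\frac{1}{3}})$ rate after substituting $\tilde{\epsilon}=\Theta(\epsilon^2/\log\frac{1}{\delta})$, and you correctly insist---as the paper implicitly does---that the probability-$(1-O(T\delta'))$ event controlling all $T$ steps be fixed before the trajectory induction.
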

%\vspace{-0.1in}
\section{Experiments}

\paragraph{Baseline Methods} As mentioned earlier, sample-aggregation based methods often have poor practical performance. Thus, we will not conduct experiments on Algorithm \ref{alg:1}. Moreover, as this is the first paper studying DP-SCO with heavy-tailed data and almost all previous methods on DP-SCO that have theoretical guarantees fail to provide DP guarantees,  we do not compare  our methods with them, and instead focus on comparing 
the performance of Algorithm \ref{alg:3} and Algorithm \ref{alg:4}. 
%In order 
To show the effectiveness of our methods, we use the non-private heavy-tailed SCO method in \citep{holland2019a}, denoted by (stochastic) RGD in the following, as our baseline method.
\paragraph{Experimental Settings} 
For synthetic data,  we consider the linear and  binary logistic models. Specifically, we generate the synthetic datasets in the following way. Each dataset has a size of  
%with each of them having size
%whose size are 
$1\times 10^5$ and 
%assume that 
each data point $(x_i, y_i)$   is generated by the model of $y_i = \langle \omega^*, x_i \rangle + e_i$ and $y_i = \text{sign}[\frac{1}{1+e^{\langle \omega^*, x_i \rangle + e_i}}-\frac{1}{2}]$, respectively, where
 $x_i \in \mathbb{R}^{10}$ and $y_i \in \mathbb{R}$.  
 In the first model, the zero mean noise $e_i$ is generated as follows. 
We first generate a noise $\Delta_i$ from  the $(\mu, \sigma)$ log-normal distribution, {\em i.e.,} $\mathbb{P}(\Delta_i = x ) = \frac{1}{x\sigma\sqrt{2\pi}} e^{-\frac{(\ln x -\mu)^2}{2\sigma^2}}$, and then  let $e_i = \Delta_i - \mathbb{E}[\Delta_i]$. For the second model, we first generate a noise $\Delta_i$ from the $(\mu, \sigma)$ log-logistic distribution, {\em i.e.,} $\mathbb{P}(\Delta_i = x ) = \frac{e^z}{\sigma x(1+e^z)^2}$, where $x>0$ and $z = \frac{\log(x)-\mu }{\sigma}$. Then, we let $e_i = \Delta_i - \mathbb{E}[\Delta_i]$. Accordingly, we  implement Algorithm \ref{alg:3} and Algorithm \ref{alg:4}, together 
%associated 
with RGD, on the ridge and logistic regressions. 

%Furthermore, with the same setup, we also show how the dimensionality will influence the performance of regressions in Fig. \ref{rid_dim} and Fig. \ref{log_dim}. 

For real-world data, we use the Adult dataset from the UCI Repository \citep {Dua:2019}. %It consists of personal information including age, sex, race, education, occupation, income, etc.  
We aim to predict whether the annual income of an individual is above 50,000. %After normalizing the data and removing individuals with missing records, 
We select 30,000 samples, 28,000 amongst which are used as the training set and the rest are used for test. 

For the privacy parameters, we will choose $\epsilon=\{0.1, 0.5, 1\}$ and $\delta=O(\frac{1}{n})$. See Appendix for the selections of other parameters. For Algorithm \ref{alg:3}, the strength of prior knowledge is modeled by $\kappa=b-a$.
%\vspace{-0.2in}
\paragraph{Experimental Results}
Figure \ref{fig:1} and \ref{fig:2} show the results of ridge and logistic regressions on synthetic and real datasets w.r.t iteration, respectively. Since there is no ground truth in the real dataset, we use the empirical risk on test data as the measurement. To test scalability of Algorithm \ref{alg:4} dealing with large-scaling data, experiments on stochastic versions of Algorithm \ref{alg:4} and RGD with minibatch size 1000 are also conducted. We can see that the performance of Algorithm \ref{alg:3} bears a larger variation compared to Algorithm 4, since we have to apply a heavy-tailed noise to fit the smooth sensitivity.  Moreover, the performance of Algorithm \ref{alg:3} is sensitive to the parameter $\kappa$. Thus, these results show that Algorithm \ref{alg:3} has poor performance and the results of Algorithm \ref{alg:4} are comparable to the non-private ones. In Figure \ref{fig:3} and \ref{fig:4} we test the estimation error  w.r.t different dimensionality $d$ and sample size $n$, respectively. From these results we can see that when $n$ increases or $d$ decreases, the estimation error will decrease. Also, with fixed $n$ and $d$, we can see that the estimation error will decrease as $\epsilon$ becomes larger. Thus, all these results confirm our previous theoretical analysis.

%We perform both ridge and logistic regressions on the datasets where the results are shown in Fig.1-4. Specifically, we implement Algorithm 3 and 4, and RGD on a subset of 1000 samples and also test the stochastic version of Algorithm 4 and RGD with minibatch size 1000 over the full dataset. 
% In contrast, the strong concentration of Algorithm 4's outputs is consistent with the high-probability results shown in Theorem \ref{thm:8}. Moreover, the performance of Algorithm 3 is sensitive to the prior knowledge regarding on distributed range of gradients $\nabla \mathcal{L}(w)$, whose length is denoted by $\tau$ in the figures. 
\begin{figure*}[!htbp]
    \centering
    \begin{subfigure}[b]{.24\textwidth}
    \includegraphics[width=\textwidth,height=0.15\textheight]{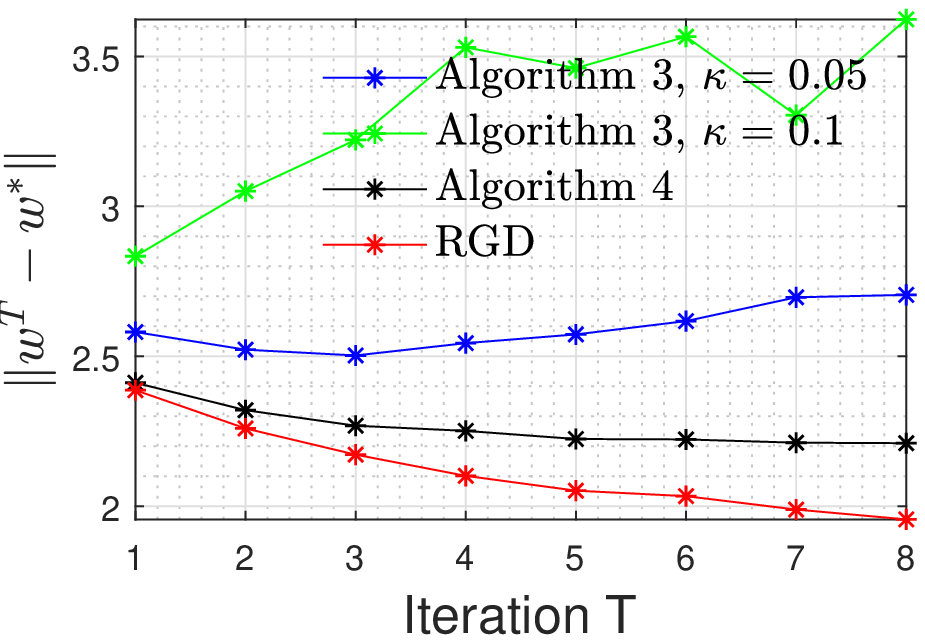}
    \caption{$\epsilon=1$ \label{fig1:a}}
    \end{subfigure}
    ~
    \begin{subfigure}[b]{.24\textwidth}
    \includegraphics[width=\textwidth,height=0.15\textheight]{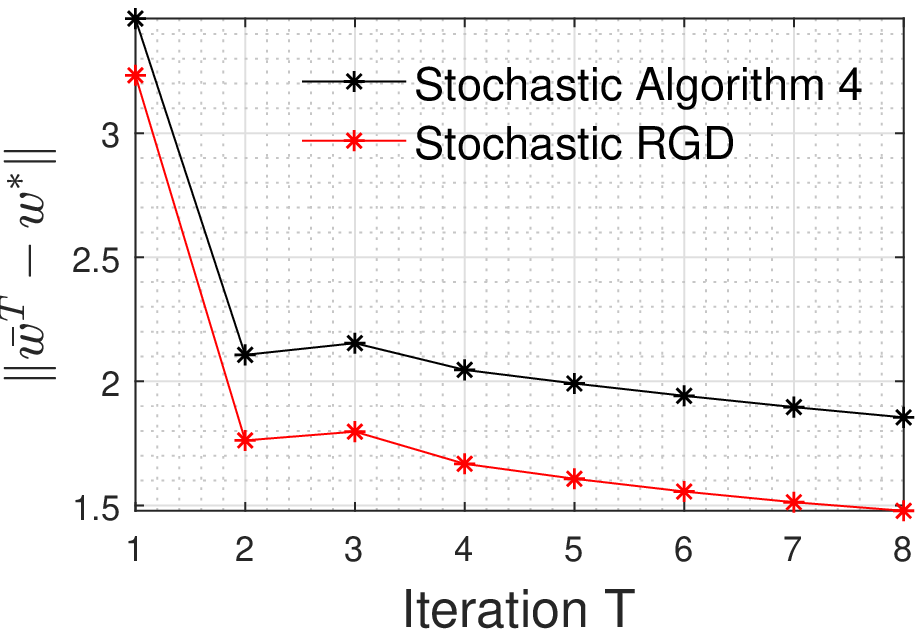}
    \caption{$\epsilon=0.5$ \label{fig1:b}}
    \end{subfigure}    
     \begin{subfigure}[b]{.24\textwidth}
    \includegraphics[width=\textwidth,height=0.15\textheight]{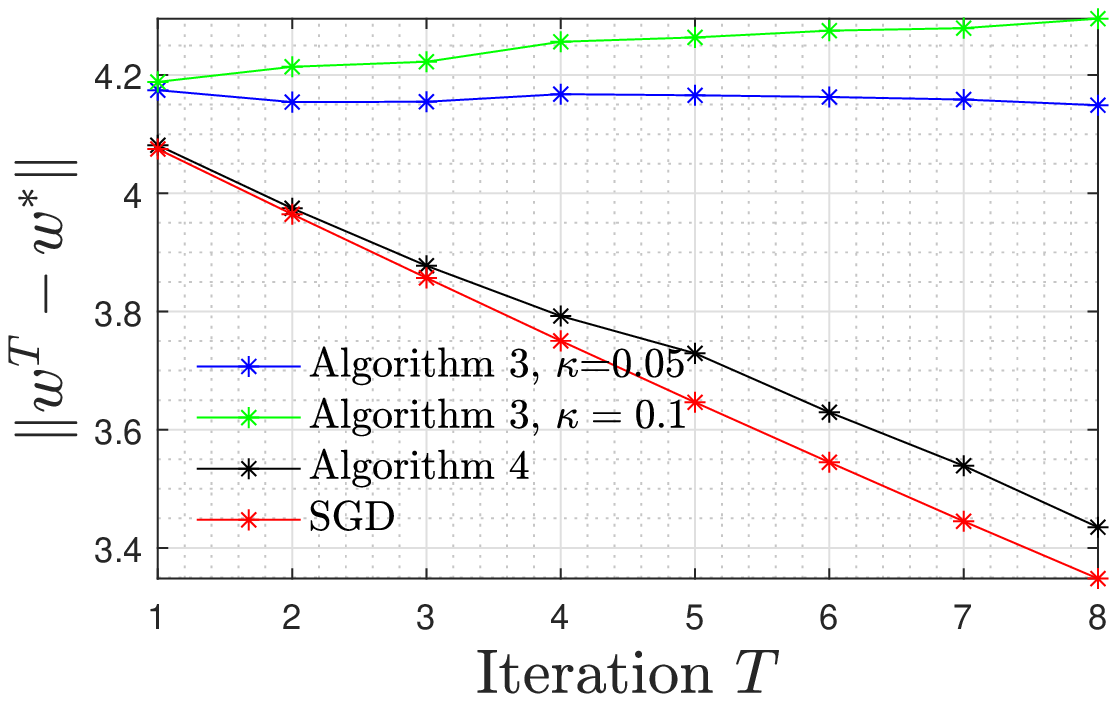}
    \caption{$\epsilon=1$ \label{fig1:c}}
    \end{subfigure}
    ~
    \begin{subfigure}[b]{.24\textwidth}
    \includegraphics[width=\textwidth,height=0.15\textheight]{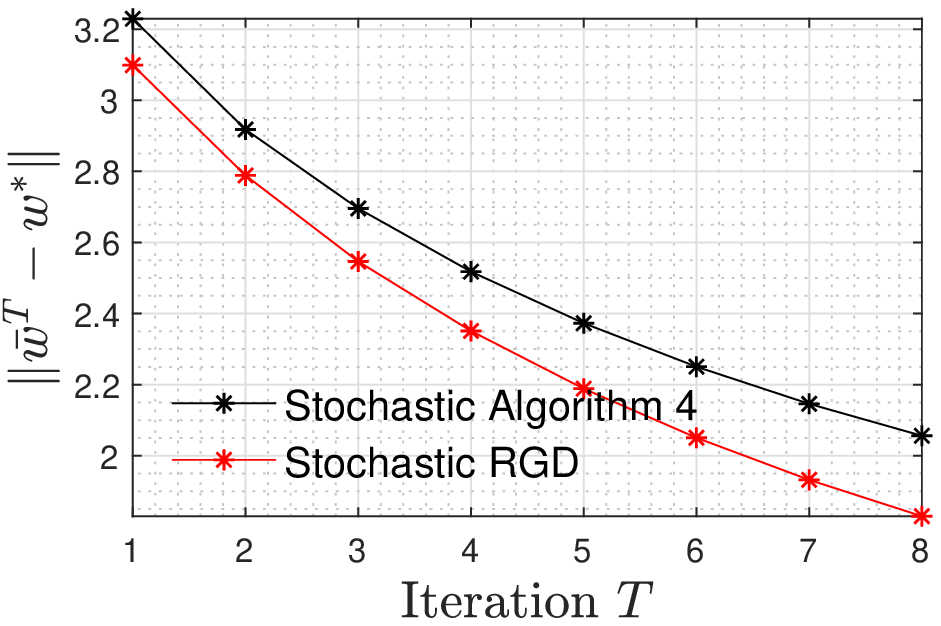}
    \caption{$\epsilon=0.5$ \label{fig1:d}}
    \end{subfigure}  
    \caption{Experiments on synthetic datasets. Figures \ref{fig1:a} and \ref{fig1:b} are for ridge regressions over synthetic data with Lognormal noises. Figures \ref{fig1:c} and \ref{fig1:d} are for logistic regressions over synthetic data with Loglogistic noises. \label{fig:1} }
\end{figure*}
\begin{figure*}[!htbp]
    \centering
    \begin{subfigure}[b]{.24\textwidth}
    \includegraphics[width=\textwidth,height=0.15\textheight]{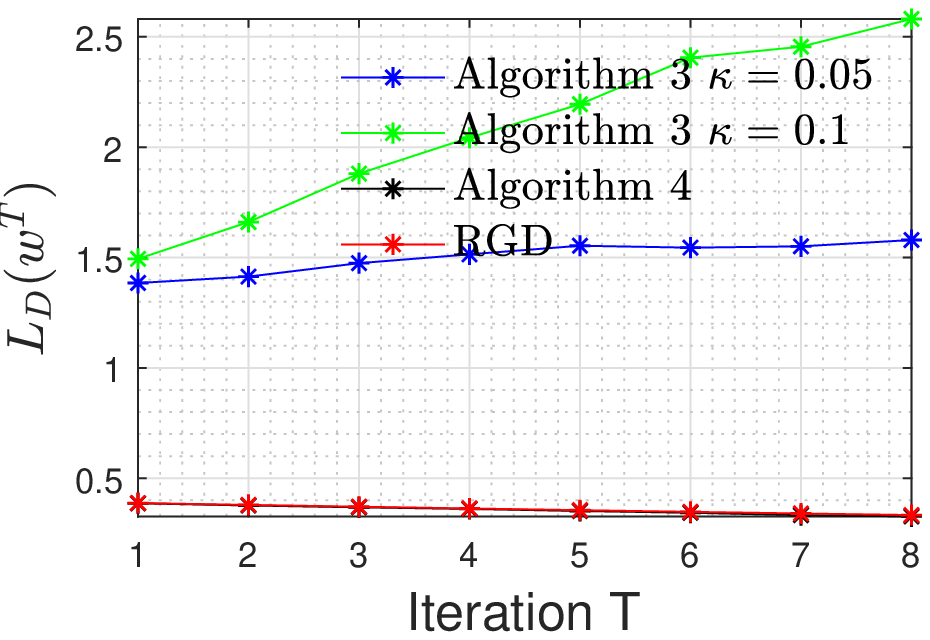}
    \caption{$\epsilon=1$ \label{fig2:a}}
    \end{subfigure}
    ~
    \begin{subfigure}[b]{.24\textwidth}
    \includegraphics[width=\textwidth,height=0.15\textheight]{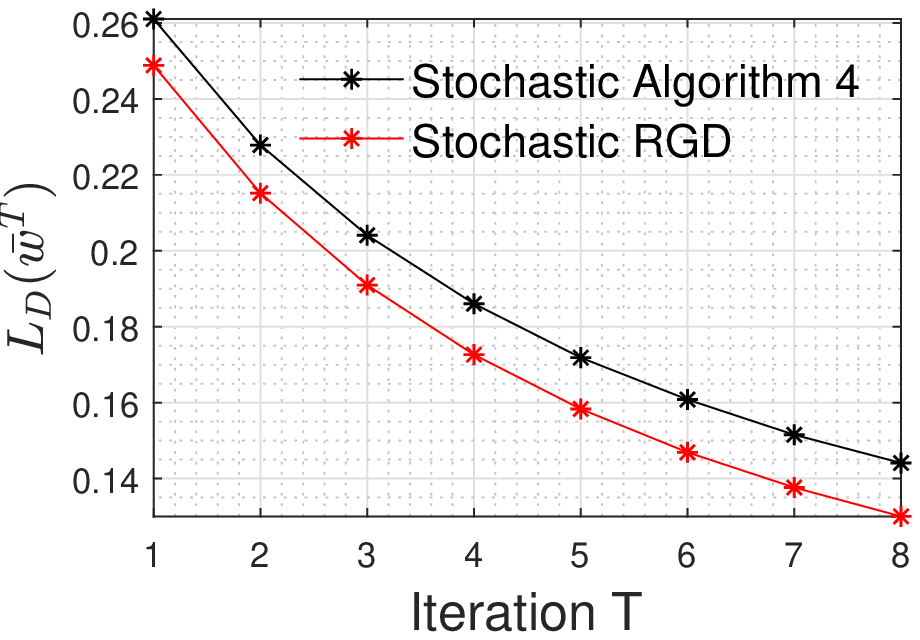}
    \caption{$\epsilon=0.5$ \label{fig2:b}}
    \end{subfigure}    
       \begin{subfigure}[b]{.24\textwidth}
    \includegraphics[width=\textwidth,height=0.15\textheight]{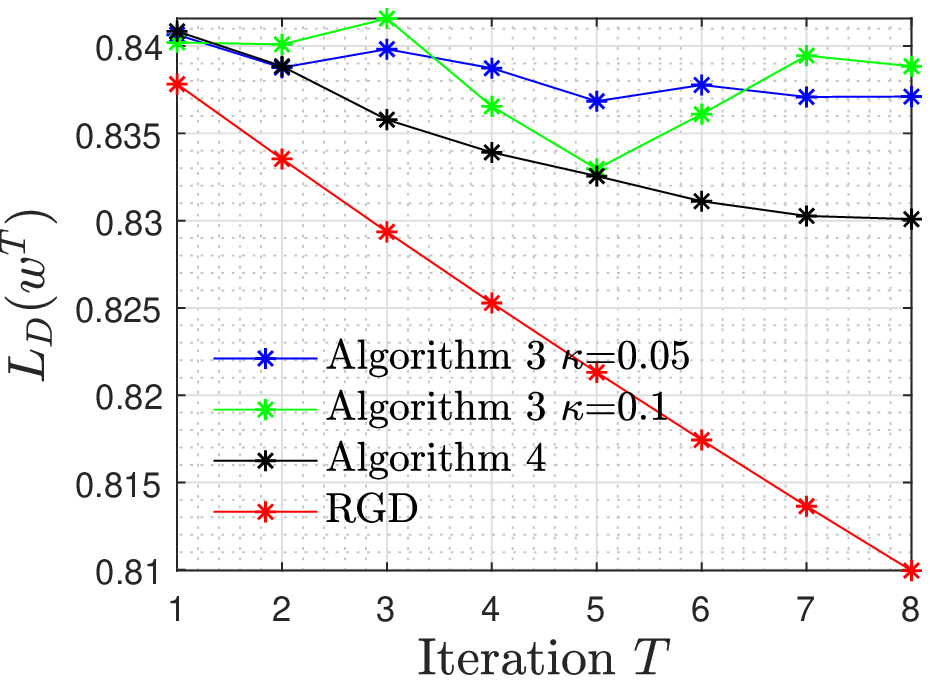}
    \caption{$\epsilon=1$ \label{fig2:c}}
    \end{subfigure}
    ~
    \begin{subfigure}[b]{.24\textwidth}
    \includegraphics[width=\textwidth,height=0.15\textheight]{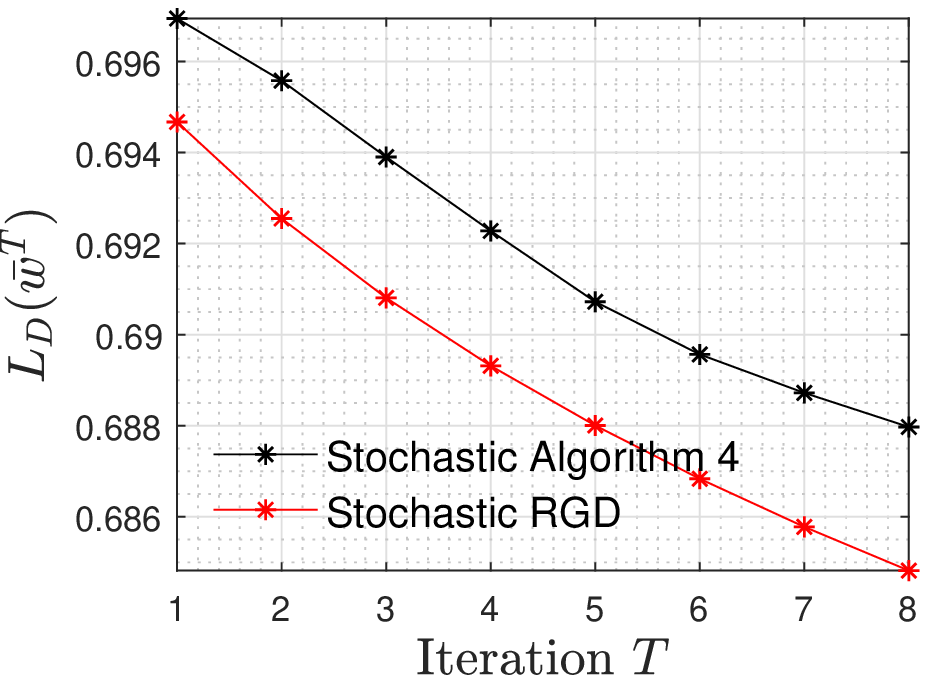}
    \caption{$\epsilon=0.5$ \label{fig2:d}}
    \end{subfigure}    
    
    \caption{Experiments on UCI Adult dataset. Figures \ref{fig2:a} and \ref{fig2:b} are for ridge regressions. Figures \ref{fig2:c} and \ref{fig2:d} are for logistic regressions. \label{fig:2} }
\end{figure*}

\begin{figure*}[!htbp]
    \centering
    \begin{subfigure}[b]{.24\textwidth}
    \includegraphics[width=\textwidth,height=0.15\textheight]{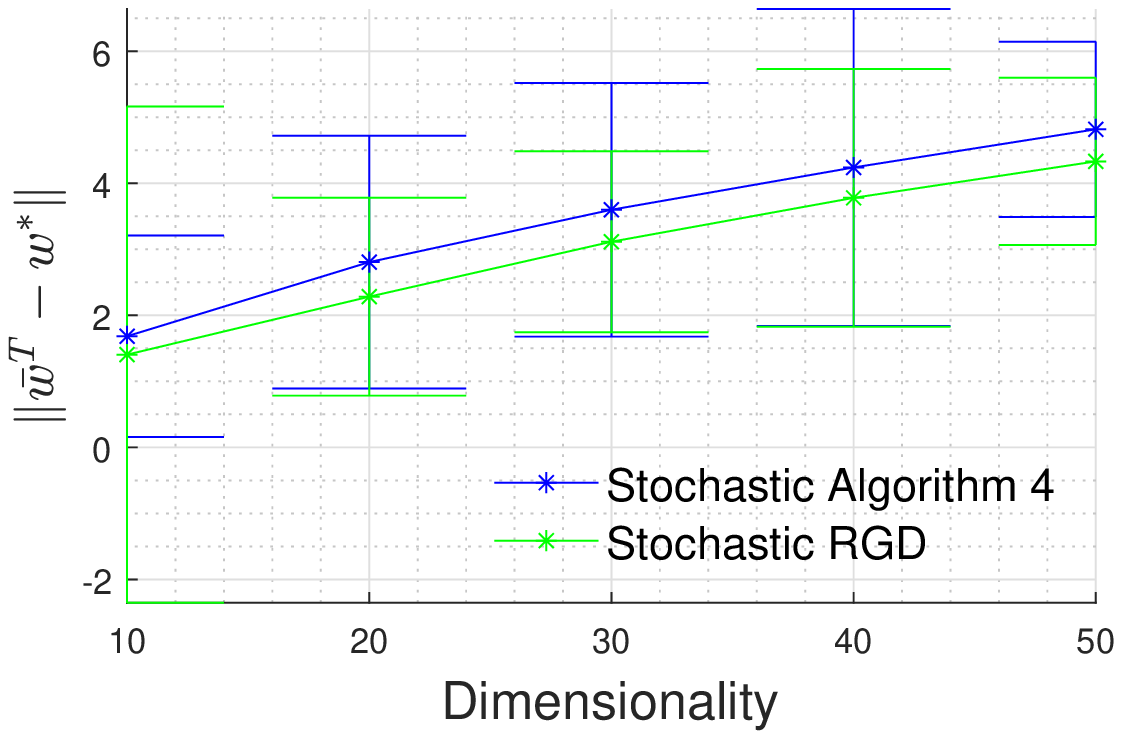}
    \caption{$\epsilon=0.5$ \label{fig3:a}}
    \end{subfigure}
    ~
    \begin{subfigure}[b]{.24\textwidth}
    \includegraphics[width=\textwidth,height=0.15\textheight]{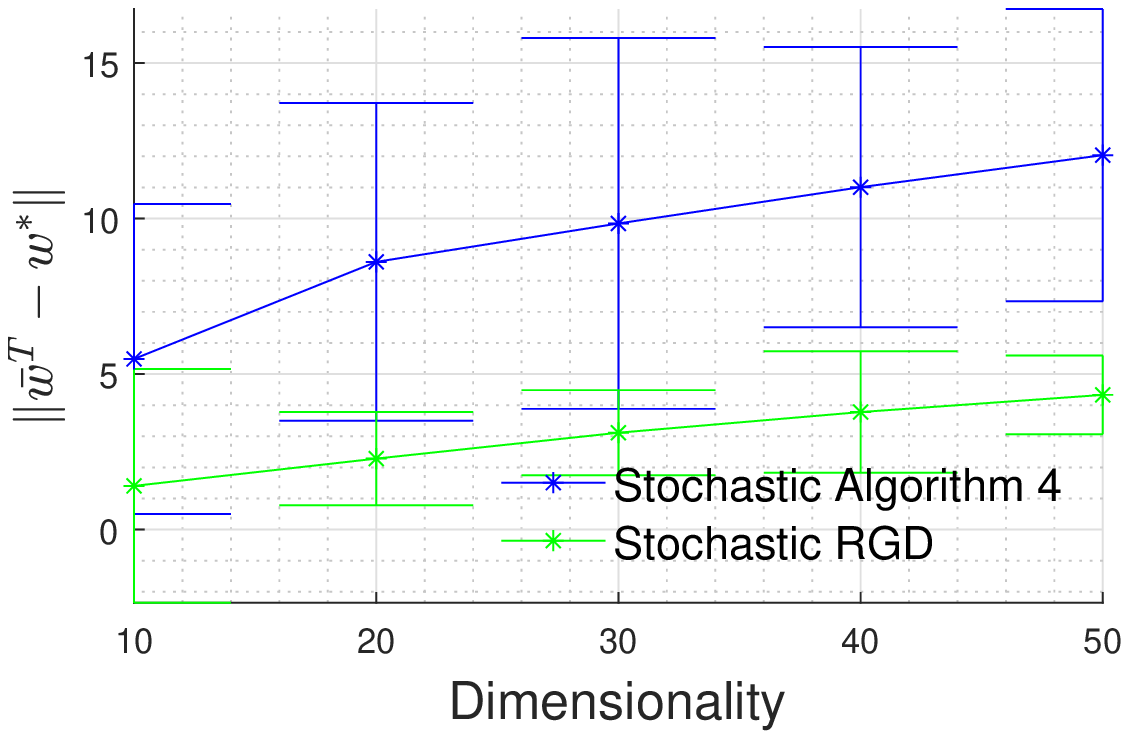}
    \caption{$\epsilon=0.1$ \label{fig3:b}}
    \end{subfigure}    
    \begin{subfigure}[b]{.24\textwidth}
    \includegraphics[width=\textwidth,height=0.15\textheight]{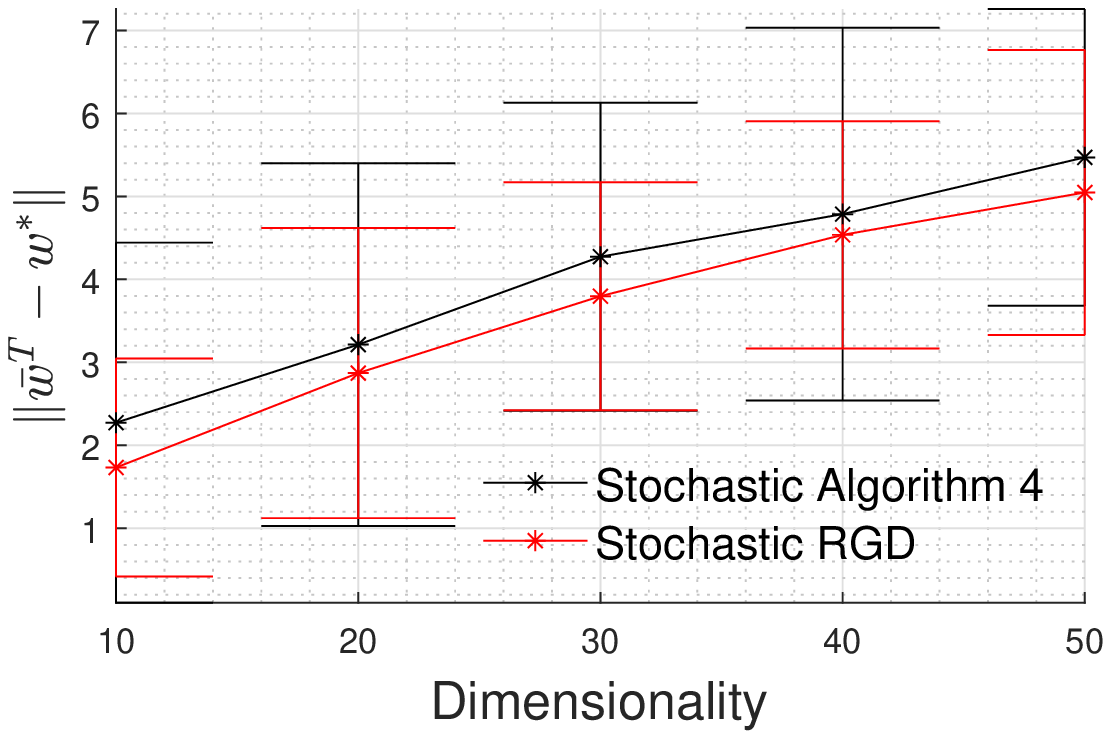}
    \caption{$\epsilon=0.5$ \label{fig3:c}}
    \end{subfigure}   
      \begin{subfigure}[b]{.24\textwidth}
    \includegraphics[width=\textwidth,height=0.15\textheight]{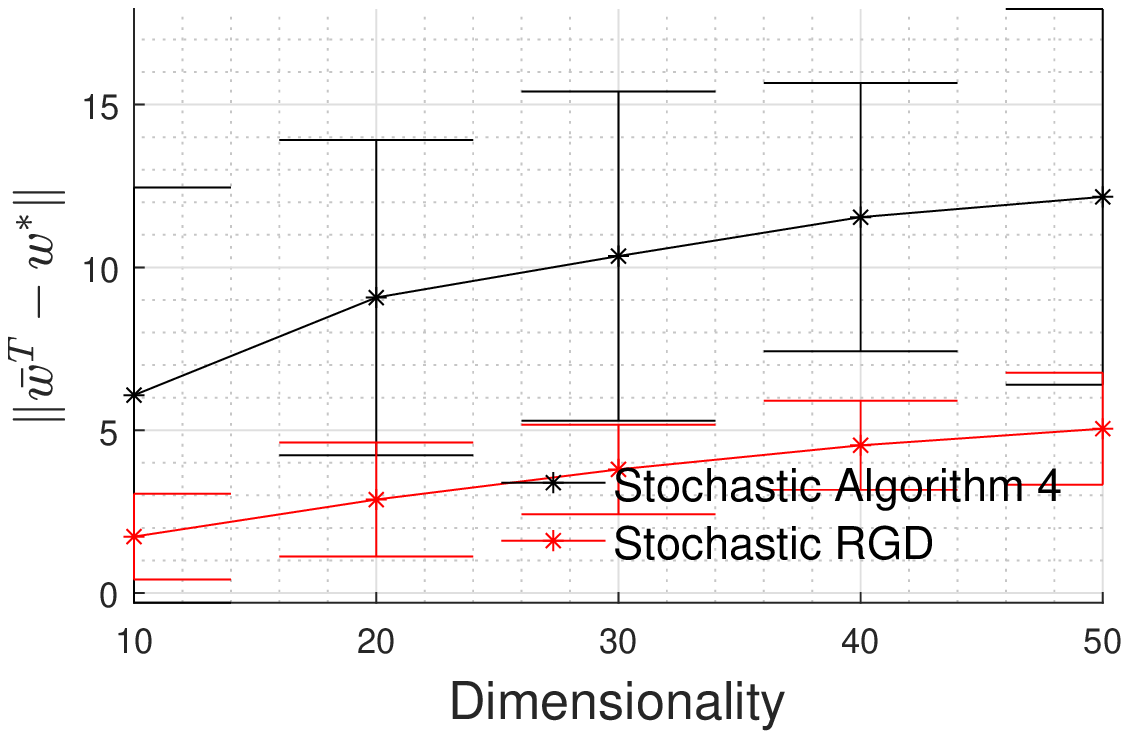}
    \caption{$\epsilon=0.1$ \label{fig3:d}}
    \end{subfigure}     
    \caption{Experiments for the impact of dimensionality. Figure \ref{fig3:a} and \ref{fig3:b} are for ridge regressions.  Figure \ref{fig3:c} and \ref{fig3:d} are for logistic regressions. \label{fig:3} }
\end{figure*}
\begin{figure*}[!htbp]
    \centering
    \begin{subfigure}[b]{.24\textwidth}
    \includegraphics[width=\textwidth,height=0.15\textheight]{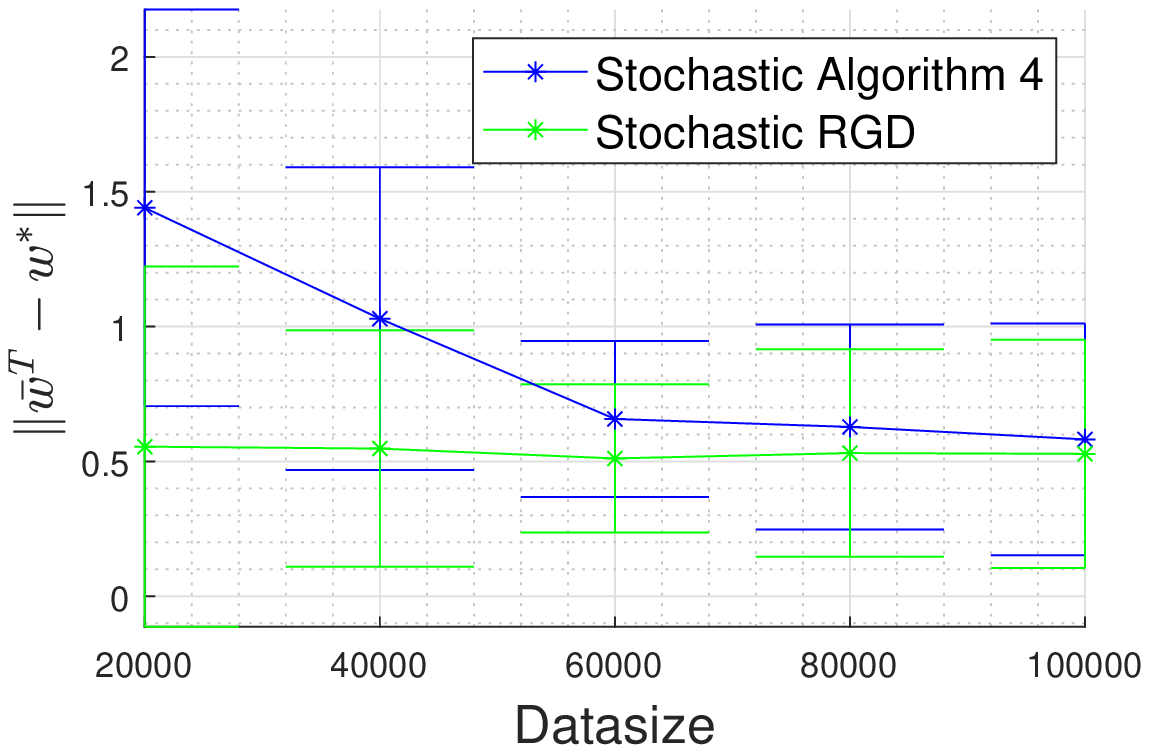}
    \caption{$\epsilon=0.5$ \label{fig4:a}}
    \end{subfigure}
    ~
    \begin{subfigure}[b]{.24\textwidth}
    \includegraphics[width=\textwidth,height=0.15\textheight]{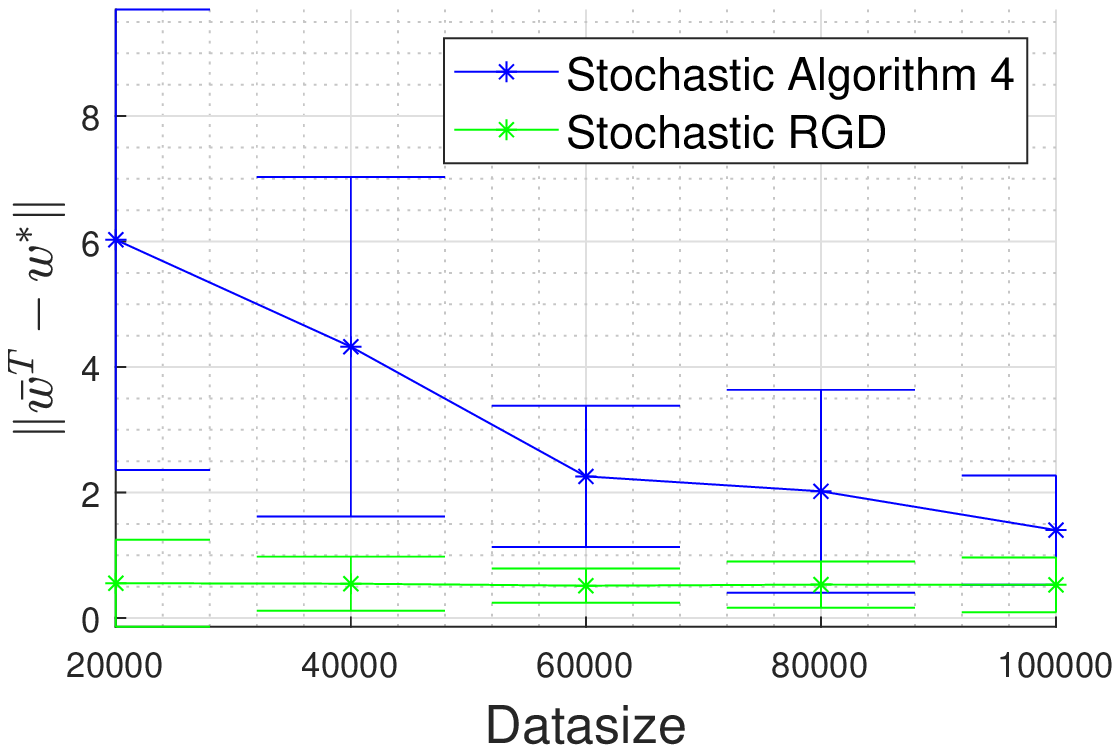}
    \caption{$\epsilon=0.1$ \label{fig4:b}}
    \end{subfigure}    
    \begin{subfigure}[b]{.24\textwidth}
    \includegraphics[width=\textwidth,height=0.15\textheight]{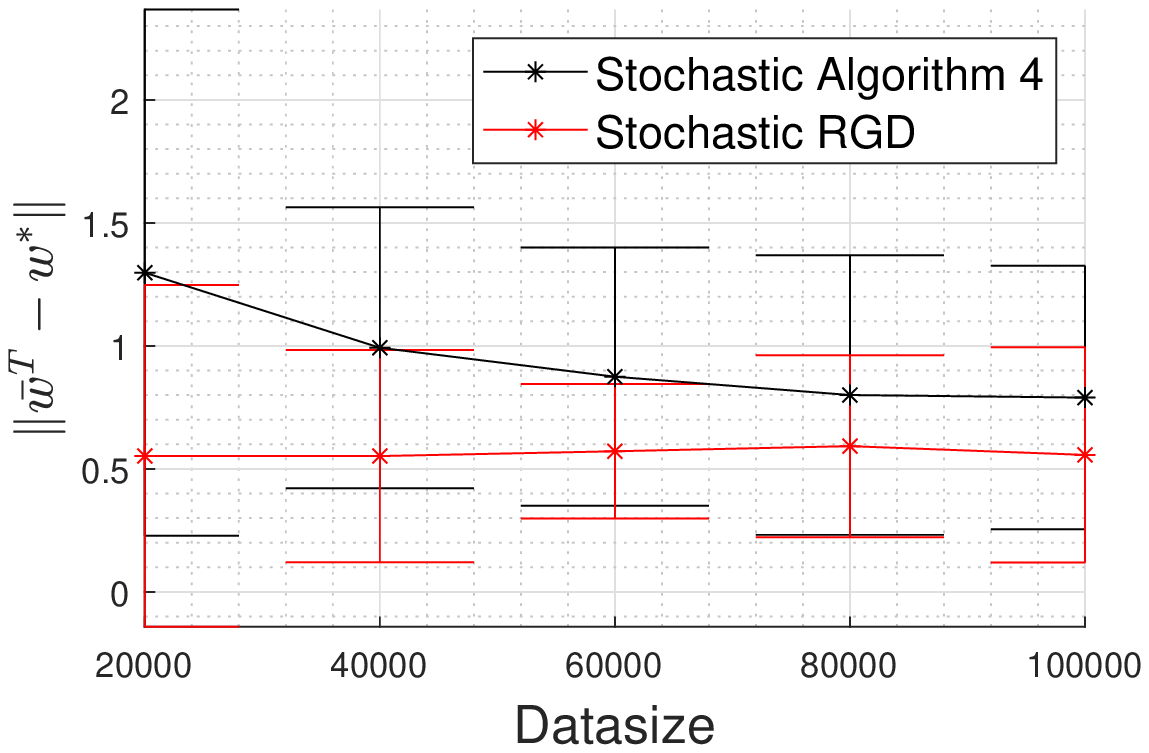}
    \caption{$\epsilon=0.5$ \label{fig4:c}}
    \end{subfigure}   
      \begin{subfigure}[b]{.24\textwidth}
    \includegraphics[width=\textwidth,height=0.15\textheight]{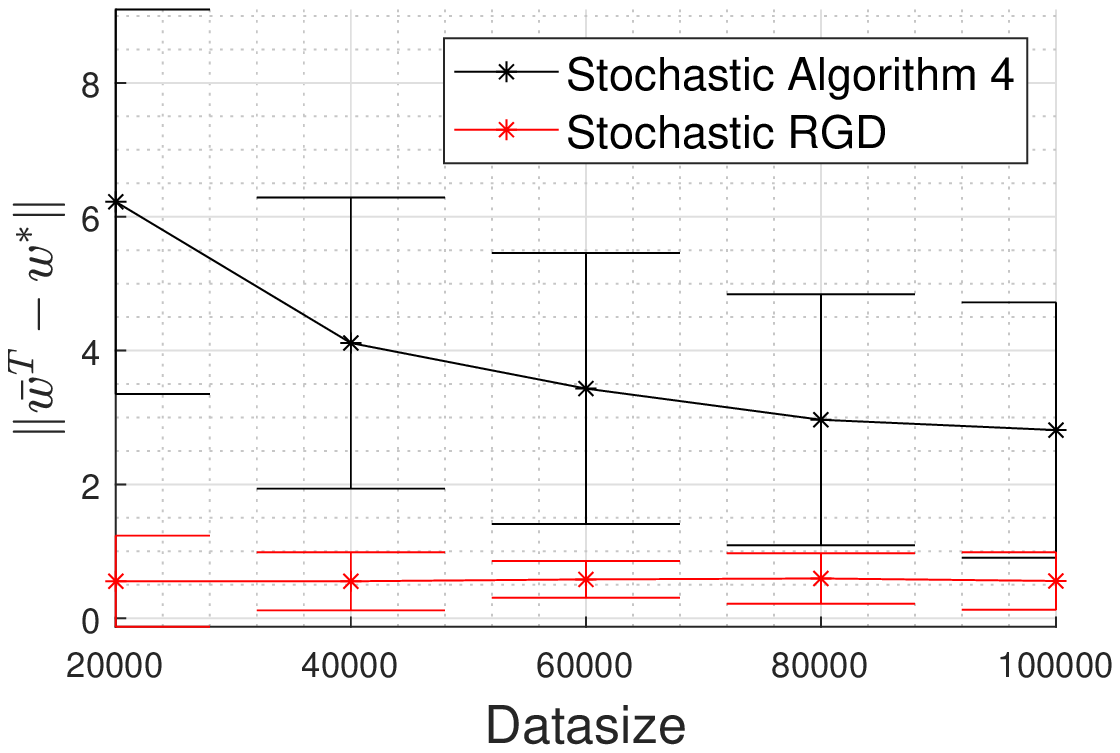}
    \caption{$\epsilon=0.1$ \label{fig4:d}}
    \end{subfigure}     
    \caption{Experiments for the impact of the size of the dataset. Figure \ref{fig4:a} and \ref{fig4:b} are for ridge regressions.  Figure \ref{fig4:c} and \ref{fig4:d} are for logistic regressions.  \label{fig:4}}
\end{figure*}

%\vspace{-0.2in}
\section{Discussion}
%\vspace{-0.2in}
In this paper, we provide the first comprehensive study on DP-SCO with heavy-tailed data. 
To the best of our knowledge, this is the first work on this problem. 
Specifically, we give a systematic analysis on the problem and design the first efficient algorithms to solve it. In various settings, we bound the (expected) excess generalization risk in both addictive and multiplicative manners. However, 
the problem is far from being closed. First, it is unclear  whether the upper bounds of the excess population risk for strongly convex and general convex loss functions can be further improved.  The second open problem is that we do not know what the lower bound for the excess population risk for these two cases is. Finally, it is an open problem to determine whether we can further relax the assumptions in our previous theorems. We leave these open problems for future research. 

\section*{Acknowledgements}
Di Wang and Jinhui Xu were supported in part by the National Science Foundation (NSF) under Grant No. CCF-1716400 and IIS-1919492.

\bibliography{aistats2020}
\bibliographystyle{icml2020}

\appendix
\section{Omitted Proofs}
\begin{proof}[{\bf Proof of Lemma 1}]
Before the proof, we recall the following two lemmas 
\begin{lemma}[\citep{srebro2010smoothness}]\label{lemma:1}
If a non-negative function $f:\mathcal{W}\mapsto \mathbb{R}_+$ is $\beta$-smooth, then $\|\nabla f(w)\|_2^2\leq 4\beta f(w)$ for all $w\in \mathcal{W}$. 
\end{lemma}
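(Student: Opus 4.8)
The plan is to exploit the $\beta$-smoothness upper bound at a single, well-chosen comparison point, together with the non-negativity of $f$. First I would fix an arbitrary $w\in\mathcal{W}$ and consider the one-step point $w'=w-\frac{1}{\beta}\nabla f(w)$, i.e. a full gradient step in the direction of steepest descent. Substituting $w'$ into the smoothness inequality $f(w')\le f(w)+\langle\nabla f(w),w'-w\rangle+\frac{\beta}{2}\|w'-w\|_2^2$ and simplifying the inner-product and quadratic terms gives
\[
f(w')\le f(w)-\frac{1}{\beta}\|\nabla f(w)\|_2^2+\frac{1}{2\beta}\|\nabla f(w)\|_2^2=f(w)-\frac{1}{2\beta}\|\nabla f(w)\|_2^2 .
\]

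Next I would invoke non-negativity. Since $f(w')\ge 0$, the displayed inequality rearranges to $\frac{1}{2\beta}\|\nabla f(w)\|_2^2\le f(w)$, i.e. $\|\nabla f(w)\|_2^2\le 2\beta f(w)$, which is in fact \emph{stronger} than the claimed bound $\|\nabla f(w)\|_2^2\le 4\beta f(w)$; the latter then follows a fortiori. As $w$ was arbitrary, the bound holds for all $w\in\mathcal{W}$. I note that the quadratic $f(w)=\frac{\beta}{2}\|w\|_2^2$ attains $\|\nabla f(w)\|_2^2=2\beta f(w)$ exactly, so the constant $2$ is tight in the unconstrained case and the stated factor $4$ has slack to spare.

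The main obstacle is purely a domain issue: the argument requires the comparison point $w'=w-\frac{1}{\beta}\nabla f(w)$ to lie where both the smoothness bound and non-negativity are available, which is automatic when $f$ is smooth and non-negative on all of $\mathbb{R}^d$ but can fail near the boundary of a constrained $\mathcal{W}$. To handle this cleanly I would either appeal to $f$ being the restriction of a globally $\beta$-smooth non-negative function (as holds for the population risk $L_\mathcal{D}$, to which this lemma is later applied), or replace the full step by a shortened step $w'=w-t\,\nabla f(w)$ with $t\in(0,\tfrac{1}{\beta}]$ chosen so that $w'\in\mathcal{W}$. Optimizing the coefficient $t-\frac{\beta t^2}{2}$ recovers the factor $2$ whenever the full step is admissible, and otherwise yields a looser constant that still sits comfortably below the stated factor $4$.
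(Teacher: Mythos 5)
Your argument is the standard self-bounding proof and it is correct; note, though, that the paper itself gives no proof of this lemma at all --- it is imported verbatim from \citep{srebro2010smoothness} and used as a black box --- so there is no internal proof to compare against, and your write-up is a self-contained substitute for the citation. Plugging $w'=w-\frac{1}{\beta}\nabla f(w)$ into the smoothness upper bound and invoking $f(w')\ge 0$ indeed gives $\|\nabla f(w)\|_2^2\le 2\beta f(w)$, which implies the stated $4\beta$ bound with room to spare, and your example $f(w)=\frac{\beta}{2}\|w\|_2^2$ correctly shows the constant $2$ is tight when the full step is admissible.

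One caveat deserves a sharper statement than you give it. Your fallback of a shortened step $w'=w-t\,\nabla f(w)$ with $t\in(0,\frac{1}{\beta}]$ chosen so that $w'\in\mathcal{W}$ does \emph{not} in general yield a constant ``comfortably below $4$'': if $\mathcal{W}$ forces $t$ to be tiny, the resulting bound $\|\nabla f(w)\|_2^2\le f(w)/\bigl(t(1-\beta t/2)\bigr)$ blows up, and in fact no constant can work for functions that are smooth and non-negative only on a constrained set --- consider $f(w)=\varepsilon+\langle g,w\rangle$ near a boundary point of $\mathcal{W}$, which is $\beta$-smooth for every $\beta>0$ yet violates $\|\nabla f\|_2^2\le C\beta f$ for small $\varepsilon$. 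So the honest resolution is your first one: the lemma should be read (as in \citep{srebro2010smoothness}, and as it is applied here to losses and to the population risk $L_\mathcal{D}$) for functions that are non-negative and $\beta$-smooth on all of $\mathbb{R}^d$, or at least on a set containing the descent step; with that reading your proof is complete and even strengthens the stated constant.
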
subscribe
\begin{lemma}[\citep{juditsky2008large}]\label{lemma:2}
Let $X_1, X_2, \cdots, X_n$ be independent copies of a zero-mean random vector $X$, then $\mathbb{E}\|\frac{1}{n}\sum_{i=1}^nX_i\|_2^2\leq \frac{1}{n}\mathbb{E}\|X\|_2^2$.  
\end{lemma}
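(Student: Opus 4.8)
The plan is to expand the squared Euclidean norm of the empirical average into a double sum of inner products and then exploit independence together with the zero-mean hypothesis to eliminate the off-diagonal terms. Writing $S_n=\sum_{i=1}^n X_i$, I would first use the bilinearity of the inner product and the linearity of expectation over a finite sum to obtain
\begin{equation*}
\mathbb{E}\Big\|\frac{1}{n}\sum_{i=1}^n X_i\Big\|_2^2 = \frac{1}{n^2}\,\mathbb{E}\langle S_n, S_n\rangle = \frac{1}{n^2}\sum_{i=1}^n\sum_{j=1}^n \mathbb{E}\langle X_i, X_j\rangle.
\end{equation*}

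The key step is to evaluate each term $\mathbb{E}\langle X_i, X_j\rangle$ by splitting into the diagonal and off-diagonal cases. For $i\neq j$, independence of $X_i$ and $X_j$ lets me factor the expectation coordinatewise, giving $\mathbb{E}\langle X_i, X_j\rangle = \langle \mathbb{E}X_i, \mathbb{E}X_j\rangle$; since each $X_k$ is a copy of the zero-mean vector $X$, we have $\mathbb{E}X_i=\mathbb{E}X_j=0$, so every cross term vanishes. For $i=j$, the identical-distribution assumption yields $\mathbb{E}\langle X_i, X_i\rangle = \mathbb{E}\|X_i\|_2^2 = \mathbb{E}\|X\|_2^2$.

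Substituting back, only the $n$ diagonal terms survive, so that
\begin{equation*}
\mathbb{E}\Big\|\frac{1}{n}\sum_{i=1}^n X_i\Big\|_2^2 = \frac{1}{n^2}\cdot n\cdot \mathbb{E}\|X\|_2^2 = \frac{1}{n}\,\mathbb{E}\|X\|_2^2,
\end{equation*}
which in fact establishes the claim with equality and hence the stated inequality. The argument is essentially a direct computation, and no smoothness or convexity structure from the surrounding problem is invoked. The only point requiring care is the vanishing of the off-diagonal inner products: there one must use both independence (to split $\mathbb{E}[(X_i)_k(X_j)_k]$ into $\mathbb{E}[(X_i)_k]\,\mathbb{E}[(X_j)_k]$ in each coordinate $k$) and the zero-mean hypothesis to conclude the product is zero. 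This is the main, though elementary, obstacle, and it is exactly the place where the independence assumption on the copies $X_1,\dots,X_n$ is essential.
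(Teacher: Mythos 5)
Your proof is correct and complete; note that the paper itself gives no proof of this lemma, importing it directly from \citep{juditsky2008large}, so there is no internal argument to compare against. Your expansion $\mathbb{E}\|\frac{1}{n}\sum_{i=1}^n X_i\|_2^2 = \frac{1}{n^2}\sum_{i,j}\mathbb{E}\langle X_i, X_j\rangle$, with independence and the zero-mean hypothesis annihilating the off-diagonal terms, is the standard computation and in fact yields equality with $\frac{1}{n}\mathbb{E}\|X\|_2^2$ whenever $\mathbb{E}\|X\|_2^2<\infty$ (the inequality being vacuous otherwise), which is stronger than the stated bound.
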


Consider $w=w^*$. Then by Assumption 1, we have $\nabla L(w^*)= \mathbb{E}[\nabla \ell(w^*, x)]=0$. Thus, by Lemma 2 we have
\begin{equation*}
   \mathbb{E} \|\nabla \hat{L}(w^*, D)\|^2_2\leq \frac{1}{n}\mathbb{E}[\|\nabla \ell(w^*, x)\|_2^2].
\end{equation*}
By Markov's inequality, we get 
\begin{equation*}
    \text{Pr}[\|\nabla \hat{L}(w^*, D)\|^2_2\leq \frac{10}{n}\mathbb{E}[\|\nabla \ell(w^*, x)\|_2^2]\geq \frac{9}{10}. 
\end{equation*}
Since $n\geq n_\alpha$, by the assumption we have with probability at least $\frac{5}{6}$ that $\hat{L}(w, D)$ is $\alpha$ strongly convex. Thus, we get 
\begin{align*}
    &\frac{\alpha}{2}\|w_D-w^*\|_2^2\leq \\
    &-\langle \nabla \hat{L}(w^*, D), w_D-w^*\rangle +\hat{L}(w_D, D)-\hat{L}(w^*,D)\\
    &\leq \|\nabla \hat{L}(w^*, D)\|_2 \|w_D-w^*\|_2. 
\end{align*}
In total, with probability at least $\frac{3}{4} $, we have
\begin{equation*}
    \|w_D-w^*\|_2\leq \sqrt{\frac{40\mathbb{E}\|\nabla \ell(w^*, x)\|_2^2 }{n\alpha^2}}. 
\end{equation*}
\end{proof}

\begin{proof}[{\bf Proof of Theorem 2}]
For each subsample set $D_{S_i}$, by the assumption we have its size $\frac{n}{m}\geq n_\alpha$. Thus, Lemma 1 holds with $n=\frac{n}{m}$. That is,  (1) holds with $r=\sqrt{\frac{40m\mathbb{E}\|\nabla \ell(w^*, x)\|_2^2 }{n\alpha^2}} $. Hence, by Theorem 1 we have 
\begin{equation*}
    \|\mathcal{A}(D)-w^*\|_2 \leq O(\frac{\sqrt{d}r}{\epsilon})= O(\sqrt{\frac{dm\mathbb{E}\|\nabla \ell(w^*, x)\|_2^2 }{n\epsilon^2\alpha^2}}).
\end{equation*}
Since $L_\mathcal{D} (w)$ is $\beta$-smooth and $\nabla L_\mathcal{D}(w^*)=0$, we have 
$L_\mathcal{D} (\mathcal{A}(D))-L_\mathcal{D} (w^*)\leq \frac{\beta}{2}\|\mathcal{A}(D)-w^*\|_2^2$. Also, by Lemma 1 and the non-negative property we get 
\begin{align*}
    L_\mathcal{D} (\mathcal{A}(D))-L_\mathcal{D} (w^*)\leq O( (\frac{\beta}{\alpha})^2 \frac{dm}{n\epsilon^2}L_\mathcal{D}(w^*)). 
\end{align*}
Taking $m = \tilde{\Theta}(\frac{d^2}{\epsilon^2})$, we get the proof. 
\end{proof}

\begin{proof}[{\bf Proof of Theorem 4}]
We first give the definition of zCDP in \citep{bun2016concentrated}.

\begin{definition}
    A randomized algorithm $\mathcal{A}: \mathcal{X}^n\mapsto \mathcal{Y}$ is $\rho$-zero Concentrated Differentially Private (zCDP) if for all neighboring datasets $D\sim D'$ and all $\alpha\in (1, \infty)$, 
    \begin{equation*}
        D_\alpha (\mathcal{A}(D)\| \mathcal{A}(D'))\leq \rho\alpha, 
    \end{equation*}
    where $ D_\alpha (P\| Q)=\frac{1}{\alpha-1}\log \mathbb{E}_{X\sim P}[(\frac{P(X)}{Q(X)})^{\alpha-1}]$ denotes the R\'{e}nyi divergence of order $\alpha$.
\end{definition}

We first convert $(\epsilon, \delta)$-DP to $\frac{1}{2}\tilde{\epsilon}^2$-zCDP by using the following lemma 
\begin{lemma}[\citep{bun2016concentrated}]\label{alemma:5}
Let $M: \mathcal{X}^n\mapsto \mathcal{Y}$ be a randomized algorithm. If $M$ is $\frac{1}{2}\epsilon^2$-zCDP,  it is 
%satisfies 
$(\frac{1}{2}\epsilon^2+\epsilon\cdot \sqrt{2\log \frac{1}{\delta}}, \delta)$-DP for all $\delta>0$. 
\end{lemma}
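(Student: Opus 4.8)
The plan is to prove the general $\rho$-zCDP-to-DP conversion, namely that any $\rho$-zCDP mechanism is $(\rho + 2\sqrt{\rho\log(1/\delta)},\,\delta)$-DP, and then specialize to $\rho = \frac{1}{2}\epsilon^2$. The specialization is immediate since $2\sqrt{\rho\log(1/\delta)} = 2\sqrt{\frac{1}{2}\epsilon^2\log(1/\delta)} = \epsilon\sqrt{2\log(1/\delta)}$, which reproduces the stated parameter $\frac{1}{2}\epsilon^2 + \epsilon\sqrt{2\log(1/\delta)}$ exactly. Fix neighboring datasets $D\sim D'$, write $P = M(D)$ and $Q = M(D')$, and introduce the \emph{privacy loss} random variable $Z = \log\frac{P(Y)}{Q(Y)}$ for $Y\sim P$. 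The key observation is that the zCDP hypothesis $D_\alpha(P\|Q)\le\rho\alpha$ for every $\alpha\in(1,\infty)$ is, by the very definition of the R\'enyi divergence, a bound on the moment generating function of $Z$ under $P$: writing $t = \alpha-1>0$, we have $\mathbb{E}_{Y\sim P}[e^{tZ}] = \mathbb{E}_{Y\sim P}[(\frac{P(Y)}{Q(Y)})^{\alpha-1}] = e^{(\alpha-1)D_\alpha(P\|Q)} \le e^{\rho t(t+1)}$.

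Next I would reduce the defining $(\epsilon',\delta)$-DP inequality to a single tail estimate. Set $\epsilon' = \rho + 2\sqrt{\rho\log(1/\delta)}$ and let $E = \{y : \log\frac{P(y)}{Q(y)} > \epsilon'\}$ be the event that the privacy loss is atypically large. For an arbitrary output event $S$, split $P(S) = P(S\cap E) + P(S\cap E^{c})$. On $E^{c}$ the density ratio is at most $e^{\epsilon'}$, so $P(S\cap E^{c}) \le e^{\epsilon'}Q(S\cap E^{c}) \le e^{\epsilon'}Q(S)$, whereas trivially $P(S\cap E)\le P(E)$. This yields $P(S)\le e^{\epsilon'}Q(S) + P(E)$, so it remains only to show $P(E) = \mathbb{P}_{Y\sim P}[Z > \epsilon'] \le \delta$.

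For the tail bound I would apply a Chernoff argument using the MGF estimate above: for any $t>0$, $\mathbb{P}_{Y\sim P}[Z>\epsilon'] \le e^{-t\epsilon'}\,\mathbb{E}_{Y\sim P}[e^{tZ}] \le \exp(\rho t^{2} + (\rho-\epsilon')t)$. Minimizing the quadratic exponent over $t>0$ gives the minimizer $t^{\ast} = (\epsilon'-\rho)/(2\rho)$, which is positive precisely because $\epsilon'>\rho$, with optimal value $-\frac{(\epsilon'-\rho)^{2}}{4\rho}$. Substituting $\epsilon'-\rho = 2\sqrt{\rho\log(1/\delta)}$ makes this exponent equal to $-\log(1/\delta)$, so $\mathbb{P}_{Y\sim P}[Z>\epsilon'] \le \delta$, completing the argument and, after inserting $\rho = \frac{1}{2}\epsilon^2$, the lemma.

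The only genuinely delicate point is the optimization step: I must check that the unconstrained minimizer $t^{\ast}$ corresponds to an admissible R\'enyi order $\alpha = 1 + t^{\ast} > 1$ at which the zCDP bound actually holds. This is fine here, since the hypothesis is assumed for \emph{all} $\alpha\in(1,\infty)$ and $\epsilon'>\rho$ forces $t^{\ast}>0$; one must also confirm that the constants match the claimed $\epsilon'$ exactly rather than only up to a constant factor, which the computation above does. A minor measure-theoretic caveat is to define $Z$ through Radon--Nikodym derivatives and dispose of the null set where $Q$ assigns no mass, but this is routine and does not affect the bound.
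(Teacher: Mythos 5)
Your proposal is correct, and it is essentially the proof of the cited result: the paper itself offers no proof of this lemma, importing it verbatim from \citep{bun2016concentrated}, and your argument reconstructs exactly the standard one there (Proposition~1.3 of that reference) — interpret the zCDP bound as an MGF bound on the privacy-loss random variable $Z$, apply Markov/Chernoff at the optimizing R\'enyi order $\alpha = 1 + (\epsilon'-\rho)/(2\rho) > 1$, and convert the tail bound $\Pr[Z>\epsilon']\leq\delta$ into $(\epsilon',\delta)$-DP via the event split $P(S)\leq e^{\epsilon'}Q(S)+P(E)$. All constants check out, including the specialization $\rho=\frac{1}{2}\epsilon^2$ giving $\epsilon' = \frac{1}{2}\epsilon^2+\epsilon\sqrt{2\log\frac{1}{\delta}}$, and your two flagged caveats (admissibility of $t^\ast$ and absolute continuity, which follows from finiteness of $D_\alpha$) are handled appropriately.
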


Thus, it  suffices to show that 
%we could show 
Algorithm 3 is $\frac{1}{2}\tilde{\epsilon}^2$-zCDP. We note that in each iteration and each coordinate, outputting $\nabla_{t-1, j}$ will be $\frac{1}{2}\frac{\tilde{\epsilon}^2}{dT}$-zCDP by Theorem 3. Thus by the composition property of CDP, we know that 
%in total 
it is  $\frac{1}{2}\tilde{\epsilon^2}$-zCDP.
\end{proof}

\begin{proof}[{\bf Proof of Lemma 2}]
By assumption, we know that $\mathcal{W}$ is closed and bounded, and hence it is compact. By \citep{lorentz1966metric} we know that its covering number with radius $\delta$ (will be specified later) is bounded from above as $N_\delta\leq (\frac{3\Delta}{2\delta})^d$. Denote the center of this $\delta$-net as $\tilde{\mathcal{W}}= \{\tilde{w}_1, \tilde{w}_2, \cdots, \tilde{w}_{N_\delta}\}$. 

We first fix $j\in [d]$ and consider $|\tilde{\nabla}_{j}(w)-\nabla_j L_{D}(w)|$ (we omit the subscript $t-1$). Then, we have 
\begin{align}
   &\mathbb{E}_{Z_j}(\tilde{\nabla}_{j}(w)-\nabla_j L_{D}(w))^2= \nonumber \\ &\mathbb{E}\big([\text{Trim}_{m}(D_j(w))]_{[a,b]}+ \frac{1}{s} S^{t}_{[trim(\cdot)]_{[a, b]}}(D_j(w))\cdot Z_j \nonumber
   \\& - \nabla_j L_{D}(w)\big)^2 \nonumber \\
   &\leq O(([\text{Trim}_{m}(D_j(w))]_{[a,b]}- \nabla_j L_{D}(w))^2 \nonumber\\
   &+\mathbb{E} ( \frac{1}{s} S^{t}_{[trim(\cdot)]_{[a, b]}}(D_j(w))\cdot Z_j)^2 )\nonumber \\
   &\leq O((\text{Trim}_{m}(D_j(w))]- \nabla_j L_{D}(w))^2 \nonumber \\
   &+\mathbb{E} ( \frac{1}{s} S^{t}_{[trim_m(\cdot)]_{[a, b]}}(D(w))\cdot Z_j)^2), \label{aeq:1}
\end{align}
where $D_j(w)=\{\nabla_j \ell(w, x_i)\}_{i=1}^n$ and the last inequality is due to the property that the truncation operation reduces error.

\begin{lemma}\label{alemma:6}
Let $a\leq \mu\leq b$ and  $X$ be a random variable. Then 
\begin{equation*}
   ([X]_{[a, b]}-\mu)^2\leq (x-\mu)^2. 
\end{equation*}
\end{lemma}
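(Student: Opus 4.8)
The plan is to recognize that the clamping map $x \mapsto [x]_{[a,b]}$ is exactly the Euclidean projection $\mathcal{P}_{[a,b]}$ onto the interval $[a,b]$, viewed as a convex subset of $\mathbb{R}$. Since projection onto a convex set is non-expansive and the hypothesis $\mu\in[a,b]$ makes $\mu$ a fixed point of this projection (i.e. $\mathcal{P}_{[a,b]}(\mu)=\mu$), I would immediately obtain
\[
|[X]_{[a,b]}-\mu| = |\mathcal{P}_{[a,b]}(X)-\mathcal{P}_{[a,b]}(\mu)| \leq |X-\mu|,
\]
and squaring both sides yields the claim. This argument is agnostic to whether $X$ is random: the inequality holds pointwise for every realization, which is precisely what is invoked when controlling the truncation term in (\ref{aeq:1}).

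Alternatively, because the statement is so elementary, I would give a self-contained three-case analysis directly from the definition of $[\cdot]_{[a,b]}$. If $a\leq X\leq b$, then $[X]_{[a,b]}=X$ and equality holds. If $X>b$, then $[X]_{[a,b]}=b$, and since $\mu\leq b<X$ both $b-\mu$ and $X-\mu$ are nonnegative with $b-\mu\leq X-\mu$, so squaring preserves the inequality. Symmetrically, if $X<a$, then $[X]_{[a,b]}=a$, and since $X<a\leq\mu$ both $X-\mu$ and $a-\mu$ are nonpositive with $|a-\mu|\leq|X-\mu|$, giving the bound. Collecting the three cases proves $([X]_{[a,b]}-\mu)^2\leq (X-\mu)^2$.

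There is essentially no obstacle here; the only point requiring care is to remember that the hypothesis $a\leq\mu\leq b$ must be used in every case, since it is exactly what keeps the clamped value from overshooting $\mu$ (it forces $b-\mu\geq 0$ in the upper case and $a-\mu\leq 0$ in the lower case). Were $\mu$ to lie outside $[a,b]$, the inequality could fail, so this assumption cannot be dropped. I would present the projection argument as the main proof for brevity and record the case analysis as a transparent verification.
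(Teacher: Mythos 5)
Your proposal is correct. Note that the paper itself gives no proof of this lemma at all: it is stated as a standalone fact in the appendix, invoked only through the remark that ``the truncation operation reduces error'' in the chain of inequalities leading to (\ref{aeq:1}). So your argument supplies a justification the authors omitted rather than paralleling one of theirs. Both of your routes are valid: the projection view (clamping to $[a,b]$ is the Euclidean projection onto a convex set, hence $1$-Lipschitz, and $\mu\in[a,b]$ is a fixed point, so $|[X]_{[a,b]}-\mu|\leq|X-\mu|$ pointwise) is the cleanest one-line reason and generalizes immediately to the vector-valued projections $\mathcal{P}_{\mathcal{W}}$ used elsewhere in the paper; the three-case check is the elementary verification and makes transparent exactly where the hypothesis $a\leq\mu\leq b$ enters, which you correctly flag as indispensable (for $\mu$ outside $[a,b]$ the inequality can fail, e.g.\ $X$ slightly above $b$ with $\mu>b$). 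You also rightly read the paper's typo $(x-\mu)^2$ as $(X-\mu)^2$ and observe that the bound holds for every realization, which is precisely how it is used inside the expectation in (\ref{aeq:1}). No gaps.
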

By the proof of Theorem 51 in \citep{bun2019average} and the fact that $\epsilon=\frac{\tilde{\epsilon}}{\sqrt{dT}}$, we have ($m, a, b=O(1)$)
\begin{equation}\label{aeq:0}
    \mathbb{E}_Z ( \frac{1}{s} S^{t}_{[trim_m(\cdot)]_{[a, b]}}(D_j(w))\cdot Z)^2\leq O(\frac{\tau^2 dT\log n}{n\tilde{\epsilon}^2}),
\end{equation}
where the $O$-notation omits the $\log \sigma^2$ and $\log (b-a)$ factors. 

Next, we  bound the first term of (\ref{aeq:1}). Before showing that, we first give the following estimation error on the trimming operation for sub-exponential random variables.

\begin{lemma}\label{alemma:7}
Suppose that $x_i$ are i.i.d $\upsilon$-sub-exponential with mean $\mu$. Then, the following holds for any $t\geq 0$, 
\begin{equation*}
    \mathbb{P}\{\frac{1}{n}\sum_{i=1}^n x_i-\mu \geq t\}\leq 2\exp(-n\min\{\frac{t}{2v}, \frac{t^2}{2v^2}\}),
    \end{equation*}
    and for any $s\geq 0$, 
    \begin{equation*}
        \mathbb{P}[\max_{i\in [n]}\{|x_i-\mu|\}\geq s ] \leq 2n\exp(-\min\{\frac{s}{2v}, \frac{s^2}{2v^2}\}),
    \end{equation*}
    and for any $m\geq 0$, under the above two events, 
    \begin{equation*}
        |\text{Trim}_{m}(\{x_i\}_{i=1}^n)-\mu|\leq \frac{nt+ms}{n-2m}. 
    \end{equation*}
\end{lemma}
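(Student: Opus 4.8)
The plan is to establish the three displayed bounds in order, exploiting the fact that the first two are probabilistic tail estimates on the sub-exponential sample while the third is a purely deterministic consequence of the two events the lemma conditions on. For the first bound I would run the standard Chernoff argument for sub-exponential variables. Setting $S=\sum_{i=1}^n(x_i-\mu)$, using independence, and invoking the defining inequality $\mathbb{E}[\exp(\lambda(x_i-\mu))]\leq \exp(\tfrac12 v^2\lambda^2)$ (valid for $|\lambda|\leq 1/v$), Markov's inequality gives, for every $0<\lambda\leq 1/v$,
\[
\mathbb{P}\Big[\tfrac1n S\geq t\Big]\leq \exp\Big(-\lambda n t+\tfrac12 n v^2\lambda^2\Big).
\]
Optimizing over the admissible range of $\lambda$ naturally splits into two regimes: when $t\leq v$ the unconstrained minimizer $\lambda=t/v^2$ is feasible and produces the exponent $-nt^2/(2v^2)$, whereas for $t>v$ the boundary choice $\lambda=1/v$ gives exponent $n(-t/v+\tfrac12)\leq -nt/(2v)$. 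Taking the worse of the two regimes yields the stated $\min\{t/(2v),\,t^2/(2v^2)\}$, and the extra factor $2$ written in front is harmless slack.

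The second bound follows from the same single-variable tail (the $n=1$ case) applied to both $x_i-\mu$ and $-(x_i-\mu)$; since the sub-exponential definition controls $|\lambda|\leq 1/v$ it is two-sided, so $\mathbb{P}[|x_i-\mu|\geq s]\leq 2\exp(-\min\{s/(2v),s^2/(2v^2)\})$. A union bound over the $n$ indices then produces the factor $n$ and hence the claimed estimate for $\max_{i\in[n]}|x_i-\mu|$.

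For the third bound, which holds deterministically on the intersection of the two good events, I would expand the trimmed mean around $\mu$ using the order statistics $x_{(1)}\leq\cdots\leq x_{(n)}$. Since sorting does not change the total sum,
\[
\text{Trim}_m(\{x_i\})-\mu=\frac{1}{n-2m}\Big(\sum_{i=1}^n(x_i-\mu)-\sum_{i=1}^m(x_{(i)}-\mu)-\sum_{i=n-m+1}^n(x_{(i)}-\mu)\Big).
\]
On the complement of the first event the full-sample deviation obeys $|\sum_{i=1}^n(x_i-\mu)|\leq nt$ (used here in two-sided form), and on the complement of the second event every order statistic satisfies $|x_{(i)}-\mu|\leq s$, so each of the two removed blocks of size $m$ contributes at most $ms$ in absolute value. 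The triangle inequality then delivers a bound of the form $\tfrac{nt+ms}{n-2m}$, up to the constant multiplying the $ms$ term.

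The main obstacle I anticipate is the bookkeeping in this last step rather than any deep inequality: one must verify that the \emph{trimmed} extreme order statistics are still governed by the single per-sample event $\max_i|x_i-\mu|\leq s$, and that the empirical-mean concentration is invoked two-sided even though the lemma states only the upper tail. The first two parts are essentially textbook, and the only genuine subtlety there is the two-regime optimization of $\lambda$ against the constraint $|\lambda|\leq 1/v$, which is precisely what generates the $\min$ of a sub-Gaussian-type and a sub-exponential-type tail.
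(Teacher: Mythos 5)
Your proposal is correct and follows essentially the same route as the paper: for the first two bounds the paper simply cites Bernstein's inequality (your Chernoff argument with the two-regime optimization over $\lambda\in(0,1/v]$ is the standard proof of that citation), and for the third it uses the identical decomposition of the trimmed mean into the full-sample deviation minus the deviations of the $2m$ trimmed points, bounded via the triangle inequality. If anything you are more careful than the paper, which writes $|\sum_{i\in\mathcal{T}}(x_i-\mu)|\leq m\max_i|x_i-\mu|$ even though the trimmed set $\mathcal{T}$ has size $2m$ --- exactly the factor-$2$ slack on the $ms$ term that you explicitly flag.
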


\begin{proof}[Proof of Lemma \ref{alemma:7}]
Note that the first two inequalities are just the Berstein's Inequality. We only prove the last inequality. 
%The other can be shown similarly. 

Let $\mathcal{T}\subset [n]$ denote the set of all trimmed variables and $\mathcal{U}=[n]\backslash \mathcal{T}$. Then, we know that  $\text{Trim}_{m}(\{x_i\}_{i=1}^n)=\frac{\sum_{i\in \mathcal{U}}x_i}{n-2m}$. Thus, we have 
\begin{align}
   & |\frac{\sum_{i\in \mathcal{U}}x_i}{n-2m}-\mu|= \frac{1}{n-2m}|\sum_{i\in [n]}(x_i-\mu)-\sum_{i\in \mathcal{T}}(x_i-\mu)| \nonumber \\ 
    &\leq \frac{1}{n-2m}(|\sum_{i\in [n]}(x_i-\mu)|+ |\sum_{i\in \mathcal{T}}(x_i-\mu)|). \label{aeq:2}
\end{align}
For the second term of (\ref{aeq:2}), we have $|\sum_{i\in \mathcal{T}}(x_i-\mu)|\leq m \max\{|x_i-\mu|\}$.
Plugging the inequalities into (\ref{aeq:2}) we get the proof. 
\end{proof}

Now, fix any $w\in \mathcal{W}$, we know that there exists a $\tilde{w}$
which is in the $\delta$-net, {\em i.e.,} $\|\tilde{w}-w\|_2\leq \delta$. Then by using the Bernstein inequality and the sub-exponential assumption and taking the union bound, we can see that with probability at least $1-2dN_{\delta}\exp(-n\min\{\frac{t}{2\tau }, \frac{t^2}{2\tau^2}\})$, we have the following for  all $j\in [d]$ and $\tilde{w}\in \tilde{\mathcal{W}}$
\begin{equation}\label{aeq:3}
    |\sum_{i=1}^n\frac{\nabla_j \ell(\tilde{w}, x_i)}{n}- \nabla_j L_\mathcal{D} (\tilde{w})| \leq t,
\end{equation}
and with probability at least $1-2dnN_{\delta}\exp(-\min\{\frac{s}{2\tau }, \frac{s^2}{2\tau^2}\})$, we get the following for all $j\in [d]$ and $\tilde{w}\in \tilde{W}$,
\begin{equation}\label{aeq:4}
   \max_{i\in [n]} |{\nabla_j \ell(\tilde{w}, x_i)}- \nabla_j L_\mathcal{D} (\tilde{w})| \leq s.
\end{equation}
By the $\beta_j$-smoothness of $\ell_j(\cdot, x)$ we have 
\begin{equation}
   | \sum_{i=1}^n\frac{\nabla_j \ell(\tilde{w}, x_i)}{n}-\sum_{i=1}^n\frac{\nabla_j \ell(w, x_i)}{n}|\leq \beta_j\|w-\tilde{w}\|_2\leq \beta_j \delta, 
\end{equation}
\begin{equation}
     |\nabla_j L_\mathcal{D} (\tilde{w})- \nabla_j L_\mathcal{D} (w)|\leq \beta_j \delta. 
\end{equation}
Thus, we get 
\begin{align}
    & |\sum_{i=1}^n\frac{\nabla_j \ell(w, x_i)}{n}- \nabla_j L_\mathcal{D} (w)| \leq t+2 \beta_j \delta \label{aeq:5} \\
    & \max_{i\in [n]} |{\nabla_j \ell(w, x_i)}- \nabla_j L_\mathcal{D} (w)| \leq s+2 \beta_j \delta. \label{aeq:6}
\end{align}
By Lemma \ref{alemma:7} we have for all $j\in [d]$ and $w\in \mathcal{W}$
\begin{equation*}
    |\text{Trim}_m (D_j(w))-\nabla_j L_\mathcal{D} (w)|\leq \frac{nt+ms}{n-2m} + \frac{m+n}{n-2m} 2\beta_j\delta.
\end{equation*}
Combining this with (\ref{aeq:0}) we have the following with probability at least $1-2dnN_{\delta}\exp(-\min\{\frac{s}{2\tau }, \frac{s^2}{2\tau^2}\})-2dN_{\delta}\exp(-n\min\{\frac{t}{2\tau }, \frac{t^2}{2\tau^2}\})$ for all $j\in [d]$ and $\tilde{w}\in \tilde{\mathcal{W}}$,
\begin{align}
   & \mathbb{E}\| \nabla \tilde{L}(w, D)- \nabla L_\mathcal{D}(w)\|_2\leq \nonumber \\
   &\leq O(\sqrt{d}\frac{nt+ms}{n-2m}+ \hat{\beta}\delta \frac{m+n}{n-2m}+\frac{\tau d\sqrt{T\log n}}{\sqrt{n}\tilde{\epsilon}}),
\end{align}
where $\hat{\beta}=\sqrt{\beta_1^2+\cdots+ \beta_d^2}$. Thus, let $\delta=\frac{1}{n\hat{\beta}}, m=O(1)$, 
\begin{equation*}
    t= O( \tau \max\{\frac{d}{n}\log(n\hat{\beta}\Delta), \sqrt{\frac{d}{n}\log(n\hat{\beta}\Delta)}\}), 
\end{equation*}
\begin{equation*}
    s= O(\tau d\log(\hat{\beta}n\Delta)). 
\end{equation*}
Then, we get the proof. 
\end{proof}

\begin{proof}[{ \bf Proof of Theorem 5}]
In the $t$-th iteration, let 
\begin{equation*}
    \hat{w}^{t}=w^{t-1}-\eta \nabla \tilde{L}(w^{t-1}, D). 
\end{equation*}
Then, by the property of Euclidean project we have 
\begin{equation*}
    \|w^{t}-w^{t-1}\|_2\leq \|\hat{w}^t-w^{t-1}\|_2. 
\end{equation*}
Hence, we have 
\begin{align*}
    &\|\hat{w}^t-w^*\|_2\leq \|w^{t-1}-\eta \nabla \tilde{L}(w^{t-1}, D)-w^*\|_2 \\
    &\leq \|w^{t-1}-\eta \nabla L_\mathcal{D} (w^{t-1})-w^*\|_2\\
    &+\eta \| \nabla \tilde{L}(w^{t-1}, D)- L_\mathcal{D} (w^{t-1})\|_2. 
\end{align*}
For the first term, by the co-coercivity of strongly convex functions \citep{bubeck2015convex}, we have 
\begin{multline*}
    \langle w^{t-1}-w^*, \nabla L_\mathcal{D} (w^{t-1}) \rangle \geq \frac{\alpha \beta}{\alpha+\beta}\|w^{t-1}-w^*\|_2^2 \\ +\frac{1}{\alpha+\beta}\|\nabla L_\mathcal{D} (w^{t-1})\|_2^2.
\end{multline*}
Thus we obtain the following by taking $\eta=\frac{1}{\beta}$
\begin{align}
    & \|w^{t-1}-\eta \nabla L_\mathcal{D} (w^{t-1})-w^*\|^2_2 \leq  \nonumber \\
    &(1-\frac{2\alpha}{\alpha+\beta})\|w^{t-1}-w^*\|_2^2-\frac{2}{\beta(\beta+\alpha)}\|\nabla L_\mathcal{D} (w^{t-1})\|_2^2 \nonumber \\
    &+ \frac{1}{\beta^2}\|\nabla L_\mathcal{D} (w^{t-1})\|_2^2 \nonumber  \\
    &\leq (1-\frac{2\alpha}{\alpha+\beta})\|w^{t-1}-w^*\|_2^2.  \label{aeq:8}
\end{align}
Taking the expectation w.r.t $Z_{t-1}$ and using the inequality of $\sqrt{1-x}\leq 1-\frac{x}{2}$ and Lemma 4, we have 
\begin{equation}
    \mathbb{E}\|\hat{w}^t-w^*\|_2\leq (1-\frac{\alpha}{\alpha+\beta})\mathbb{E}\|w^{t-1}-w^*\|_2+O(\frac{\tau d\sqrt{T\log n }}{\beta \sqrt{n}\tilde{\epsilon}}).
\end{equation}
That is,  
\begin{equation*}
    \mathbb{E} \|\hat{w}^T-w^*\|_2\leq (1-\frac{\alpha}{\beta+\alpha})^T\Delta+ O(\frac{\beta}{\alpha}\frac{\tau d\sqrt{T\log n }}{\beta \sqrt{n}\tilde{\epsilon}}). 
\end{equation*}
Thus,  taking $T=O(\frac{\beta}{\alpha}\log n)$, we have the following with probability at least $1-\Omega(\frac{2dn\log n}{(1+n\hat{L}\Delta)^d})$
\begin{equation*}
      \mathbb{E}\|\hat{w}^t-w^*\|_2\leq O(\sqrt{\frac{\beta}{\alpha}}\frac{\Delta \tau d \log n }{\alpha \sqrt{n}\tilde{\epsilon}}). 
\end{equation*}
Since $\tilde{\epsilon}= \sqrt{2\log \frac{1}{\delta}+2\epsilon}-\sqrt{2\log \frac{1}{\delta}} $, by using the Taylor series of the function $\sqrt{x+1}-\sqrt{x}$, we have $\tilde{\epsilon}= O(\frac{\epsilon}{\sqrt{\log \frac{1}{\delta}}})$. 
Since $L_\mathcal{D} (w)$ is $\beta$-smooth we have 
$\mathbb{E}L_\mathcal{D} (w^T)-L_\mathcal{D} (w^*)\leq \frac{\beta}{2}\mathbb{E}\|w^T-w^*\|_2^2$. Thus we  get the proof. 
\end{proof}

\begin{proof}[{\bf Proof of Theorem 7}]
The proof of $(\epsilon, \delta)$-DP is the same as in the proof of Theorem 3. The $\ell_2$ sensitivity is $\frac{s}{n}\frac{4\sqrt{2}}{3}$. 
%and the following lemma on zCDP by Gaussian mechanism. 

Next, we show  the upper bound. The key lemma on the uniform converge rate is the following. For convenience, we denote by 
\begin{multline*}
      \hat{g}_j(w)= \frac{1}{n}\sum_{i=1}^n (\nabla_j\ell(w, x_i)\big(1-\frac{\nabla^2_j\ell(w, x_i)}{2s^2\beta}\big)\\
       - \frac{\nabla^3_j\ell(w, x_i)}{6s^2})+\frac{1}{n}\sum_{i=1}^nC\left(\frac{\nabla_j\ell(w, x_i)}{s}, \frac{|\nabla_j\ell(w, x_i)|}{s\sqrt{\beta}}\right) 
\end{multline*}
and $\hat{g}_j(w)=(\hat{g}_1(w), \hat{g}_2(w), \cdots, \hat{g}_d(w))$.
%Then, we have 
\begin{lemma}[Lemma 8 in \citep{holland2019a}]
Under Assumptions 1 and 4,  with probability at least $1-\delta'$, the following holds for any $w\in \mathcal{W}$, 
\begin{equation}
    \|\hat{g}_j(w)-\mathbb{E}[\nabla \ell(w, x)]\|_2\leq O(\frac{\beta d\sqrt{v\log (\frac{1}{\delta'}\Delta n )}}{\sqrt{n}}). 
\end{equation}
\end{lemma}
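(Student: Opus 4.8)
The plan is to upgrade the pointwise, coordinate-wise guarantee of Lemma~\ref{lemma:5} into a bound that holds \emph{uniformly} over all $w\in\mathcal{W}$ and all coordinates $j\in[d]$, by a covering-number argument that mirrors the structure of the proof of Lemma~\ref{lemma:4}. First I would rewrite the (noiseless) robust estimator in the compact form $\hat g_j(w)=\frac{s}{n}\sum_{i=1}^n\psi\bigl(\nabla_j\ell(w,x_i)/s\bigr)$, where $\psi$ is the noise-smoothed soft-truncation obtained by plugging $\phi$ from \eqref{eq:7} into \eqref{eq:10}. The point of this rewriting is that $\psi$ inherits from $\phi$ the two properties I need: it is bounded ($|\psi|\le \tfrac{2\sqrt2}{3}$) and globally Lipschitz with an $O(1)$ constant $L_\psi$.

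Next I would discretize $\mathcal{W}$. Since $\mathcal{W}$ is compact with diameter $\Delta$ (Assumption~\ref{ass:1}), by \citep{lorentz1966metric} it admits a $\rho$-net $\tilde{\mathcal{W}}=\{\tilde w_1,\dots,\tilde w_{N_\rho}\}$ with $N_\rho\le(3\Delta/2\rho)^d$. Applying Lemma~\ref{lemma:5} at each net point $\tilde w_k$ and each coordinate $j$ with failure probability $\delta''=\delta'/(dN_\rho)$, and taking a union bound over these $dN_\rho$ events, gives simultaneously for all $j\in[d]$ and all $\tilde w\in\tilde{\mathcal{W}}$, with probability at least $1-\delta'$,
\begin{equation*}
  |\hat g_j(\tilde w)-\mathbb{E}[\nabla_j\ell(\tilde w,x)]|
  \le O\!\left(\sqrt{\frac{v\bigl(\log\tfrac{1}{\delta'}+d\log\tfrac{\Delta}{\rho}\bigr)}{n}}\right),
\end{equation*}
using that $\log(1/\delta'')=\log(1/\delta')+\log d+d\log(3\Delta/2\rho)$.

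The third step passes from the net to an arbitrary $w$. For any $w$ pick $\tilde w$ with $\|w-\tilde w\|_2\le\rho$ and control the two refinement errors separately. On the population side, $\beta$-smoothness of $L_{\mathcal{D}}$ gives $|\nabla_j L_{\mathcal{D}}(w)-\nabla_j L_{\mathcal{D}}(\tilde w)|\le\beta\rho$. On the empirical side, the Lipschitzness of $\psi$ together with the coordinate smoothness of $\ell$ gives $|\hat g_j(w)-\hat g_j(\tilde w)|\le \frac{L_\psi}{n}\sum_i|\nabla_j\ell(w,x_i)-\nabla_j\ell(\tilde w,x_i)|\le L_\psi\beta_j\rho$, where the leading $s$ cancels the $1/s$ inside $\psi$; aggregating in $\ell_2$ over coordinates yields a discretization term of order $\hat\beta\rho$ with $\hat\beta=\sqrt{\beta_1^2+\cdots+\beta_d^2}$. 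Choosing $\rho=\Theta\bigl(1/(\hat\beta n)\bigr)$ makes both refinement errors of order $O(1/n)$, negligible against the $O(\sqrt{v/n})$ statistical term, while only inflating the covering exponent to $\log(\Delta/\rho)=O(\log(\hat\beta\Delta n))=O(\log(\Delta n))$ inside the square root. Finally, combining the per-coordinate bound through $\|\hat g(w)-\mathbb{E}[\nabla\ell(w,x)]\|_2\le\sqrt{d}\,\max_j|\hat g_j(w)-\mathbb{E}[\nabla_j\ell(w,x)]|$ multiplies the per-coordinate $\sqrt{d\log(\cdots)}$ factor by an extra $\sqrt d$, producing the claimed linear $d$ out front; the smoothness constants collected in the refinement and net-radius choice surface as the stated multiplicative $\beta$, leaving $\sqrt{v\log(\tfrac{1}{\delta'}\Delta n)/n}$ as the core rate.

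The main obstacle I anticipate lies in the third step: verifying that the robust estimator $\hat g_j(\cdot)$ is genuinely $O(\beta\rho)$-Lipschitz in $w$. This requires (i) confirming that the noise-smoothed truncation $\psi$ has a uniformly bounded derivative, so that the $1/s$ rescaling is exactly compensated by the leading $s$, and (ii) controlling the per-sample increment $|\nabla_j\ell(w,x_i)-\nabla_j\ell(\tilde w,x_i)|$ by the smoothness constant uniformly in $x_i$ rather than only in expectation. Point (ii) is delicate because Assumption~\ref{ass:4} bounds only a second moment and does not by itself supply a per-sample Lipschitz modulus; the argument must therefore lean on the smoothness structure of the loss to keep the deterministic net-to-point error cleanly separated from the heavy-tailed statistical fluctuation already handled by Lemma~\ref{lemma:5}, and the final bookkeeping of how $\hat\beta$ and $\beta$ combine into the single $\beta$ factor of the statement is where care is needed.
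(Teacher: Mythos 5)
Your proposal cannot be checked against a proof in the paper, because the paper does not prove this statement at all: it is imported wholesale as Lemma~8 of \citep{holland2019a}, and the proof of Theorem~\ref{thm:7} simply invokes it as a black box. What you have written is therefore a reconstruction of the cited result, and your route is the right one --- indeed it is, in substance, both the proof in \citep{holland2019a} and the template the paper itself uses for the sub-exponential analogue, Lemma~\ref{lemma:4}: rewrite the (noiseless) estimator as an average of the bounded, $O(1)$-Lipschitz smoothed truncation $\psi(u)=\mathbb{E}_{\eta}\,\phi\bigl(u(1+\eta)\bigr)$ (your point (i) does check out: $\sup_u|\phi'(u)|\le 1$ gives $|\psi'(u)|\le\mathbb{E}|1+\eta|=O(1)$ once $\beta=\Omega(1)$), take a $\rho$-net of $\mathcal{W}$ of size $(3\Delta/2\rho)^d$, union bound over the $dN_\rho$ coordinate/net-point pairs, transfer to arbitrary $w$ via smoothness with $\rho=\Theta\bigl(1/(\hat\beta n)\bigr)$, and pay a final $\sqrt{d}$ for the $\ell_2$ aggregation. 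Relative to the paper, your reconstruction buys transparency about exactly which structural properties ($\psi$ bounded and Lipschitz, per-coordinate smoothness, second moments) drive the bound, which the paper's citation hides.

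Two caveats, both at points you partially anticipated. First, your displayed net bound applies Lemma~\ref{lemma:5} at the inflated confidence $\delta''=\delta'/(dN_\rho)$ with square-root dependence on $\log(1/\delta'')$, but Lemma~\ref{lemma:5} is stated for the estimator whose parameters $s=\sqrt{nv/(2\log(1/\delta'))}$ and $\beta=2\log(1/\delta')$ are tuned to the nominal $\delta'$, which is how Algorithm~\ref{alg:4} fixes them. With $s$ held fixed, a Bernstein bound for the bounded summands has a second tail term of order $s\log(1/\delta'')/n=\sqrt{v/(n\log(1/\delta'))}\,\log(1/\delta'')$, which is \emph{linear} in the covering exponent $d\log(n\hat\beta\Delta)$ and would inflate the final rate toward $d^{3/2}$; to recover the claimed linear-$d$ bound you must retune $s$ and $\beta$ to the net-inflated confidence $\delta''$ (a logarithmic change of constants, consistent with how the source handles it), rather than cite Lemma~\ref{lemma:5} verbatim. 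Second, your worry in point (ii) is well-founded, and its resolution lies in the hypotheses of the cited lemma rather than in Assumption~\ref{ass:4}: \citep{holland2019a} assumes smoothness of the loss \emph{per sample}, which is what licenses the net-to-point step on the empirical side and is the origin of the multiplicative $\beta$ in the stated bound, whereas the paper's restatement ``under Assumptions~\ref{ass:1} and~\ref{ass:4}'' supplies only population-level smoothness. So your proof is correct in structure, with one quantitative step to repair (the union-bound tuning) and one hypothesis (per-sample smoothness) that must be made explicit --- a looseness that is really in the paper's restatement of the lemma, not in your argument.
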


Thus, we have the following lemma.
\begin{lemma}\label{alemma:8}
Under the assumptions in the previous lemma, the following holds with probability at least $1-2\delta'$ for any $w\in \mathcal{W}$
\begin{equation}\label{aeq:31}
    \|g_j(w)-\mathbb{E}[\nabla \ell(w, x)]\|_2\leq O(\frac{\beta d\sqrt{vT\log (\frac{1}{\delta'}\Delta n) }}{\sqrt{n}\sqrt{\tilde{\epsilon}}}). 
\end{equation}
\end{lemma}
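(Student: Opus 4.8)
The plan is to read off from line 3 of Algorithm \ref{alg:4} that the only difference between the released gradient and the noiseless robust gradient is the coordinatewise Gaussian perturbation, and then control the two pieces by a triangle inequality. Writing $\hat g(w)=(\hat g_1(w),\dots,\hat g_d(w))$ for the noiseless vector and $Z=(Z_1,\dots,Z_d)$ for the injected noise (so that the released vector is $g(w)=\hat g(w)+Z$ with $Z_j\sim\mathcal N(0,\sigma^2)$ i.i.d.), I would start from
\begin{equation*}
\|g(w)-\mathbb{E}[\nabla\ell(w,x)]\|_2\le \|\hat g(w)-\mathbb{E}[\nabla\ell(w,x)]\|_2+\|Z\|_2 .
\end{equation*}
The first term is bounded, uniformly over all $w\in\mathcal W$, by the preceding lemma (Lemma 8 of \citep{holland2019a}): on an event of probability at least $1-\delta'$ it is $O\!\left(\frac{\beta d\sqrt{v\log(\frac{1}{\delta'}\Delta n)}}{\sqrt n}\right)$. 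Thus the whole task reduces to bounding the noise norm $\|Z\|_2$.

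For the noise term I would invoke a standard chi-square tail bound. Since the coordinates of $Z$ are i.i.d.\ $\mathcal N(0,\sigma^2)$, the quantity $\|Z\|_2^2/\sigma^2$ is $\chi^2_d$-distributed, and the Laurent--Massart inequality gives, with probability at least $1-\delta'$,
\begin{equation*}
\|Z\|_2\le \sigma\sqrt{d+2\sqrt{d\log\tfrac{1}{\delta'}}+2\log\tfrac{1}{\delta'}}=O\!\left(\sigma\sqrt d\right)
\end{equation*}
up to logarithmic factors in $1/\delta'$. The key structural observation is that $Z$ is drawn once and does not depend on $w$, so this single bound holds simultaneously for every $w\in\mathcal W$ and needs no covering argument over $\mathcal W$. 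Substituting the algorithm's choice $\sigma^2=\frac{8vdT}{9\,n\tilde\epsilon\log\frac{1}{\delta'}}$ then yields $\|Z\|_2=O\!\left(\frac{d\sqrt{vT}}{\sqrt n\,\sqrt{\tilde\epsilon}}\right)$, again up to logarithmic factors.

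Finally I would take a union bound over the two failure events so that both estimates hold simultaneously with probability at least $1-2\delta'$, add the two bounds, and compare term by term with the claimed right-hand side $O\!\left(\frac{\beta d\sqrt{vT\log(\frac{1}{\delta'}\Delta n)}}{\sqrt n\,\sqrt{\tilde\epsilon}}\right)$. The noiseless term carries the factors $\beta$ and $\sqrt{\log(\frac{1}{\delta'}\Delta n)}$ but lacks $\sqrt{T/\tilde\epsilon}$, and since $T\ge\tilde\epsilon$ (as $\tilde\epsilon\le 1\le T$) it is dominated by the stated bound; the noise term carries $\sqrt{T/\tilde\epsilon}$ but lacks $\beta$, and since $\beta\sqrt{\log(\frac{1}{\delta'}\Delta n)}\ge 1$ it too is dominated. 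The step I would treat most carefully, and which I expect to be the only real obstacle, is exactly this scaling verification: the factor $\log\frac{1}{\delta'}$ in the denominator of $\sigma^2$ is precisely what is needed to absorb the extra $\sqrt d$ coming from $\|Z\|_2$ into the advertised dependence, and one must keep track of the overloaded symbol $\beta$ (the smoothing parameter $\beta=\log\frac{1}{\delta'}$ used inside the Holland estimator, as opposed to the smoothness constant of Assumption \ref{ass:1}). Beyond this bookkeeping the argument is routine, since the hard uniform-convergence work is already supplied by the cited lemma and the privacy noise is independent of $w$.
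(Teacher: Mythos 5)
Your proposal is correct and follows essentially the same route as the paper, which likewise deduces the lemma from Lemma 8 of Holland (2019a) by adding the Gaussian noise contribution with a union bound over the two failure events. The paper leaves the noise step implicit (it simply writes ``Thus, we have the following lemma''), so your $\chi^2_d$ (Laurent--Massart) bound on $\|Z\|_2$ with $\sigma^2=\frac{8vdT}{9\log\frac{1}{\delta'}\,n\tilde\epsilon}$, together with the scaling check $\sqrt{T/\tilde\epsilon}\ge 1$ and the remark distinguishing the smoothness constant $\beta$ from the estimator parameter $\beta=\log\frac{1}{\delta'}$, is exactly the omitted bookkeeping.
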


The remaining proof is almost the same as the proof of Theorem 5 by using Lemma \ref{alemma:8}. We omit it here for convenience.
\end{proof}

\begin{proof}[{\bf Proof of Theorem 8}] 
Let $\hat{w}^t$ denote the same notation as in the proof of Theorem 5. Then, we have
\begin{align*}
    &\|\hat{w}^t-w^*\|_2\leq \|w^{t-1}-\eta g^{t-1}(w^{t-1})-w^*\|_2 \\
    &\leq \|w^{t-1}-\eta \nabla L_\mathcal{D} (w^{t-1})-w^*\|_2\\
    &+\eta \| g^{t-1}(w^{t-1})- L_\mathcal{D} (w^{t-1})\|_2, 
\end{align*}
and 
\begin{align*}
    &\|w^{t-1}-\eta \nabla L_\mathcal{D} (w^{t-1})-w^*\|^2_2\leq \|w^{t-1}-w^*\|_2^2\\
    &-2\eta\langle \nabla L_\mathcal{D}(w^{t-1}), w^{t-1}-w^*\rangle +\eta^2\|\nabla L_\mathcal{D}(w^{t-1})\|_2^2 \\
    &\leq \|w^{t-1}-w^*\|_2^2-2\eta \frac{1}{\beta}\|\nabla L_\mathcal{D}(w^{t-1})\|_2^2+\eta^2 \|\nabla L_\mathcal{D}(w^{t-1})\|_2^2\\
    &\leq \|w^{t-1}-w^*\|_2^2. 
\end{align*}

Thus by Lemma \ref{alemma:8}  we have with probability at least $1-2\delta'$
\begin{equation}\label{eq:32}
   \|\hat{w}^t-w^*\|_2\leq \|w^{t-1}-w^*\|_2+ O(\frac{ d\sqrt{vT\log (\frac{1}{\delta'}\Delta n) }}{\sqrt{n}\sqrt{\tilde{\epsilon}}}). 
\end{equation}
Hence, when $O(\frac{ dT\sqrt{vT\log (\frac{1}{\delta'}\Delta n) }}{\sqrt{n}\sqrt{\tilde{\epsilon}}})\leq \|w^0-w^*\|_2$, we have $\hat{w}^t\in \mathcal{W}$ for all $t=\{1, \cdots, T\}$ with probability at least $1-2\delta' T$.  This means that $\hat{w}^t=w^t$ for all $t\in [T]$. Hence, we proceed to study the algorithm without projection. Let  $D_t= \|w^0-w^*\|_2+ O(\frac{dt\sqrt{vT\log (\frac{1}{\delta'}\Delta n) }}{\sqrt{n}\sqrt{\tilde{\epsilon}}})$ for $t=\{0,1, \cdots, T\}$. By the  smoothness of $L_\mathcal{D}(\cdot)$ we have 
\begin{align*}
   & L_{\mathcal{D}}(w^t)\leq   L_{\mathcal{D}}(w^{t-1})+\langle  \nabla L_{\mathcal{D}}(w^{t-1}), w^t-w^{t-1}\rangle \\ &+\frac{\beta}{2}\|w^t-w^{t-1}\|_2^2 \\
    &= L_{\mathcal{D}}(w^{t-1})+ \eta \langle  \nabla L_{\mathcal{D}}(w^{t-1}), -g^{t-1}(w^{t-1})+ \nabla L_{\mathcal{D}}(w^{t-1}) \\
    &- \nabla L_{\mathcal{D}}(w^{t-1})\rangle +\eta^2 \frac{\beta}{2}\| g^{t-1}(w^{t-1})- \nabla L_{\mathcal{D}}(w^{t-1}) \\
    &+ \nabla L_{\mathcal{D}}(w^{t-1})\|_2^2.
\end{align*}
Since $\eta=\frac{1}{\beta}$, by simple calculation we have 
\begin{multline}\label{aeq:33}
     L_{\mathcal{D}}(w^t)\leq   L_{\mathcal{D}}(w^{t-1})-\frac{1}{2\beta}\|\nabla L_\mathcal (w^{t-1})\|^2 \\ + O(\frac{\beta d^2vT\log (\frac{1}{\delta'}\Delta n) }{n{\tilde{\epsilon}}}). 
\end{multline}
Next we show the following lemma 
\begin{lemma}
Assume that events (\ref{aeq:31}) hold for all $t=\{1, \cdots ,T\}$. Then there exists at least one $t\in \{1, \cdots ,T\}$ such that 
\begin{equation*}
    L_\mathcal{D}(w^t)-L_\mathcal{D} (w^*)\leq 16D_0\chi, 
\end{equation*}
where $\chi=O(\frac{\beta d\sqrt{vT\log(\frac{1}{\delta'}\Delta n)}}{\sqrt{n}\sqrt{\tilde{\epsilon}}})$.
\end{lemma}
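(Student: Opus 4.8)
The plan is to argue by contradiction that the iterates cannot all remain outside the sublevel set $\{w : L_\mathcal{D}(w)-L_\mathcal{D}(w^*)\leq 16D_0\chi\}$, exploiting the descent inequality (\ref{aeq:33}) together with convexity. Write $\Phi_t := L_\mathcal{D}(w^t)-L_\mathcal{D}(w^*)$ and recall $D_0=\|w^0-w^*\|_2$. Two ingredients are already available. First, since the hypothesis assumes the events (\ref{aeq:31}) hold for all $t$, the descent inequality (\ref{aeq:33}) is valid at every step, and I rewrite it as $\Phi_t\leq\Phi_{t-1}-\frac{1}{2\beta}\|\nabla L_\mathcal{D}(w^{t-1})\|_2^2+\frac{c'\chi^2}{\beta}$, where the additive term equals a constant times $\chi^2/\beta$ because $\chi^2/\beta$ is exactly the order of the error in (\ref{aeq:33}). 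Second, the argument just above the lemma shows that under the budget $O(T\chi/\beta)\leq D_0$ and Assumption \ref{ass:5} no projection is active and $\|w^t-w^*\|_2\leq 2D_0$ for every $t$.

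Next I would turn the gradient-norm term into an excess-risk term. By convexity of $L_\mathcal{D}$ and $\|w^{t-1}-w^*\|_2\leq 2D_0$ we get $\Phi_{t-1}\leq\langle\nabla L_\mathcal{D}(w^{t-1}),w^{t-1}-w^*\rangle\leq 2D_0\|\nabla L_\mathcal{D}(w^{t-1})\|_2$, hence $\|\nabla L_\mathcal{D}(w^{t-1})\|_2^2\geq \Phi_{t-1}^2/(4D_0^2)$. Substituting gives the scalar recursion $\Phi_t\leq\Phi_{t-1}-\frac{\Phi_{t-1}^2}{8\beta D_0^2}+\frac{c'\chi^2}{\beta}$. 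From this two elementary facts drive everything: (i) whenever $\Phi_{t-1}\geq 16D_0\chi$ the quadratic drift dominates the error and one obtains a geometric contraction $\Phi_t\leq(1-\frac{\chi}{\beta D_0})\Phi_{t-1}$; and (ii) the interval $\{\Phi\leq 16D_0\chi\}$ is forward invariant, since the right-hand side of the recursion is increasing on $[0,16D_0\chi]$ (as $16D_0\chi\leq 4\beta D_0^2$ once $T\geq 4$) and its value at the endpoint $16D_0\chi$ is at most $16D_0\chi$ for a small enough error constant.

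With (i) and (ii) the contradiction is immediate. Suppose $\Phi_t>16D_0\chi$ for all $t\in\{1,\ldots,T\}$. If $\Phi_0\leq 16D_0\chi$, invariance (ii) forces $\Phi_1\leq 16D_0\chi$, a contradiction; hence $\Phi_0>16D_0\chi$, so (i) applies at every step and yields $\Phi_T\leq\Phi_0\,(1-\frac{\chi}{\beta D_0})^T\leq\frac{\beta}{2}D_0^2\exp(-\frac{T\chi}{\beta D_0})$, where I bounded $\Phi_0\leq\frac\beta2 D_0^2$ by smoothness and $\nabla L_\mathcal{D}(w^*)=0$ (Assumption \ref{ass:1}). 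It remains to verify that the chosen $T$ makes this last quantity at most $16D_0\chi$. Since $\chi=O(\frac{\beta d\sqrt{vT\log(\cdot)}}{\sqrt n\sqrt{\tilde\epsilon}})$ we have $\frac{\beta D_0}{\chi}=O(\frac{D_0\sqrt{n\tilde\epsilon}}{d\sqrt{vT\log(\cdot)}})$, so the requirement $\frac{T\chi}{\beta D_0}\geq\log\frac{\beta D_0}{\chi}$ (up to constants and logarithmic factors) rearranges to $T^{3/2}\geq\tilde\Omega(\frac{D_0\sqrt{n\tilde\epsilon}}{d})$, i.e. exactly $T=\tilde O((D_0\sqrt n\sqrt{\tilde\epsilon}/d)^{2/3})$ as chosen in Theorem \ref{thm:8}.

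The main obstacle is recognizing that the naive argument --- a fixed per-step decrease of order $\chi^2/\beta$ whenever the gradient is large --- is too weak: it would force $T=\tilde\Omega(D_0\sqrt{n\tilde\epsilon}/d)$, larger than the prescribed budget. One must instead use the quadratic convexity bound $\|\nabla L_\mathcal{D}\|_2^2\geq \Phi^2/(4D_0^2)$ to extract a geometric rate, and it is precisely this geometric rate that collapses the logarithmically-many-times-$\beta D_0/\chi$ iteration count into the $2/3$-power budget. A secondary care point is the bookkeeping of the additive error: the constant $c'$ in $c'\chi^2/\beta$ must be kept below the slack afforded by the factor $16$, both for the contraction in (i) and the invariance in (ii), and one must invoke $T\chi/\beta\leq D_0$ to stay in the no-projection regime where $\|w^t-w^*\|_2\leq 2D_0$, the bound on which convexity is applied.
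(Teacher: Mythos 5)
Your proof is correct, but it runs on a different engine than the paper's. The paper argues by dichotomy: either some iterate already has $\|\nabla L_\mathcal{D}(w^t)\|_2<\sqrt{2}\chi$, in which case $L_\mathcal{D}(w^t)-L_\mathcal{D}(w^*)\leq \|\nabla L_\mathcal{D}(w^t)\|_2\|w^t-w^*\|_2\leq 2\sqrt{2}D_0\chi$ finishes immediately; or all gradients stay $\geq\sqrt{2}\chi$, so the error term in (\ref{aeq:33}) is absorbed and, writing $\Phi_t=L_\mathcal{D}(w^t)-L_\mathcal{D}(w^*)$, the recursion $\Phi_t\leq\Phi_{t-1}-\frac{\Phi_{t-1}^2}{4\beta D_{t-1}^2}$ is telescoped \emph{in reciprocal form}, $\frac{1}{\Phi_t}\geq\frac{1}{\Phi_{t-1}}+\frac{1}{16\beta D_0^2}$, giving the classical $\Phi_T\leq 16\beta D_0^2/T$ rate, which hits $16D_0\chi$ at exactly $T=\beta D_0/\chi$ with no logarithmic slack. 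You instead establish forward invariance of the sublevel set $\{\Phi\leq 16D_0\chi\}$ plus a uniform geometric contraction $\Phi_t\leq(1-\frac{\chi}{\beta D_0})\Phi_{t-1}$ above it; your constant bookkeeping for both checks out, and your diagnosis that the naive fixed-decrement argument is too weak is right. The trade-off: by lower-bounding the quadratic decrement using only $\Phi_{t-1}\geq 16D_0\chi$, you discard the larger gain available when $\Phi_{t-1}$ is far above the threshold, and so you need $T\gtrsim\frac{\beta D_0}{\chi}\log\frac{\beta D_0}{\chi}$ rather than $T=\frac{\beta D_0}{\chi}$ — a logarithmic factor the paper's reciprocal telescoping avoids. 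You flag this honestly, and it is harmless here only because Theorem \ref{thm:8} states both $T$ and the final risk with $\tilde{O}$ notation that absorbs logarithms (your extra log also feeds back into $\chi\propto\sqrt{T}$, again absorbed). If one wanted the lemma with the literal $T=\frac{\beta D_0}{\chi}$ and the constant $16$, the paper's $1/T$-rate argument is the one that delivers it; your version buys a somewhat more transparent dynamical picture (invariant set plus contraction), which also makes the boundary case $\Phi_0\leq 16D_0\chi$ explicit, where the paper handles the analogous case via its small-gradient branch.
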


\begin{proof}
We note that $D_t\leq 2D_0$ for all $t=0, \cdots , T$. Thus we have 
\begin{multline*}
    L_\mathcal{D} (w)-L_\mathcal{D}(w^*)\leq \|\nabla  L_\mathcal{D} (w)\|_2\|w-w^*\|_2,
\end{multline*}
which implies that 
\begin{equation*}
     \|\nabla  L_\mathcal{D} (w)\|_2\geq \frac{ L_\mathcal{D} (w)-L_\mathcal{D}(w^*)}{\|w-w^*\|_2}.
\end{equation*}
Suppose that there exists $t\in \{1, 2, \cdots ,T\}$ such that $\|\nabla L_\mathcal{D} (w^t)\|_2<\sqrt{2}\chi$. Then, we have $L_\mathcal{D} (w^t)-L_\mathcal{D}(w^*)\leq \|\nabla  L_\mathcal{D} (w^t)\|_2\|w^t-w^*\|_2\leq 2\sqrt{2}D_0\chi$. 

Otherwise suppose that for all $\{1, 2, \cdots , T\}$, $\|\nabla L_\mathcal{D} (w^t)\geq \sqrt{2}\chi$. Then, we have the following for all $t\leq T$, 
\begin{align*}
    &L_\mathcal{D} (w^t)-   L_\mathcal{D} (w^*)\leq L_\mathcal{D} (w^{t-1})- L_\mathcal{D} (w^*) \\
    &-\frac{1}{4\beta}\|\nabla L_\mathcal{D} (w^{t-1})\|_2^2 \\
    &\leq L_\mathcal{D} (w^{t-1})- L_\mathcal{D} (w^*) -\frac{1}{4\beta D_{t-1}^2} ( L_\mathcal{D} (w^{t-1})-L_\mathcal{D}(w^*)).
\end{align*}
Multiplying both side by $[(L_\mathcal{D} (w^t)-   L_\mathcal{D} (w^*))(L_\mathcal{D} (w^{t-1})- L_\mathcal{D} (w^*))]^{-1}$ we get 
\begin{align*}
    & \frac{1}{L_\mathcal{D} (w^t)-   L_\mathcal{D} (w^*)}\geq \frac{1}{L_\mathcal{D} (w^{t-1})- L_\mathcal{D} (w^*)} \\
    &+\frac{1}{4\beta D_{t-1}^2}\frac{L_\mathcal{D} (w^{t-1})- L_\mathcal{D} (w^*)}{L_\mathcal{D} (w^t)-   L_\mathcal{D} (w^*)}\\
    &\geq \frac{1}{L_\mathcal{D} (w^{t-1})- L_\mathcal{D} (w^*)} +\frac{1}{16\beta D_0^2},
\end{align*}
where the last inequality is due to the facts that $D_t\leq 2D_0$ and $L_\mathcal{D}(w^{t-1})\geq L_\mathcal{D}(w^{t})$.

Hence, we have 
\begin{equation}\label{aeq:34}
     \frac{1}{L_\mathcal{D} (w^T)-   L_\mathcal{D} (w^*)}\geq \frac{T}{16\beta D_0^2}\geq \frac{1}{16 D_0\chi}
\end{equation}
using the fact that $T=\frac{\beta D_0}{\chi}$, that is, $T=\tilde{O}\left(\frac{\|w^0-w^*\|_2\sqrt{n}\sqrt{\tilde{\epsilon}}}{d}\right)^\frac{2}{3}$. Thus $\chi = \tilde{O}(\Delta\frac{d^\frac{2}{3}}{(n\tilde{\epsilon})^\frac{1}{3}})$. 
\end{proof}

Next we  show that 
\begin{equation}
    L_\mathcal{D} (w^T)-L_\mathcal{D} (w^*)\leq 16D_0\chi+\frac{1}{2\beta}\chi^2. 
\end{equation}
Let $t=t_0$ be the first time that $ L_\mathcal{D} (w^T)-L_\mathcal{D} (w^*)\leq 16D_0\chi $. We show that for any $t\geq t_0$, $ L_\mathcal{D} (w^t)-L_\mathcal{D} (w^*)\leq 16D_0\chi+\frac{1}{2\beta}\chi^2$. If not,  let $t_1$ be the first time that $ L_\mathcal{D} (w^t)-L_\mathcal{D} (w^*)>16D_0\chi+\frac{1}{2\beta}\chi^2$. Then, we must have $L_\mathcal{D} (w^{t_1})> L_\mathcal{D} (w^{t_1-1})$. By (\ref{aeq:33}) we have 
\begin{multline*}
    L_\mathcal{D} (w^{t_1-1})-L_\mathcal{D} (w^*)\geq\\ L_\mathcal{D} (w^{t_1})-L_\mathcal{D} (w^*)-\frac{1}{2\beta}\chi^2>16D_0\chi.
\end{multline*}
Thus, we have 
\begin{equation*}
    \|\nabla    L_\mathcal{D} (w^{t_1-1})\|_2\geq \frac{ L_\mathcal{D} (w^{t_1-1})-L_\mathcal{D} (w^*)}{\|w^{t_1-1}-w^*\|_2}\geq 8\chi.
\end{equation*}
By (\ref{aeq:33}) we have $L_\mathcal{D} (w^{t_1})\leq L_\mathcal{D} (w^{t_1-1})$ which is a contradiction. 
\end{proof}

\section{Explicit Form of $C(a,b)$ in (10)}
We first define the following notations:
\begin{align}
   &V_- := \frac{\sqrt{2}-a}{b}, V_{+}=\frac{\sqrt{2}+a}{b} \\
   &F_{-}:= \Phi(-V_-), F_{+}:=\Phi(-V_+) \\
   &E_{-}:= \exp(-\frac{V^2_-}{2}), E_{+}:=\exp(-\frac{V^2_{+}}{2}),
\end{align}
where $\Phi$ denotes the CDF of the standard Gaussian distribution. Then \begin{equation}
    C(a,b)=T_1+T_2+\cdots+T_5, 
\end{equation}
where 
\begin{align}
    &T_1:= \frac{2\sqrt{2}}{3}(F_{-}-F_{+}) \\
    & T_2:= -(a-\frac{a^3}{6})(F_{-}+F_{+}) \\ 
    & T_3:=\frac{b}{\sqrt{2\pi}}(1-\frac{a^2}{2})(E_{+}-E_{-})\\ 
    &T_4 : = \frac{ab^2}{2}\left(F_{+}+F_{-}+\frac{1}{\sqrt{2\pi}}(V_{+}E_{+}+V_{-}E_{-})\right) \\
    & T_5:= \frac{b^3}{6\sqrt{2\pi}}\left((2+V_{-}^2)E_{-}-(2+V_{+}^2)E_{+}\right). 
\end{align}

\section{Full description of experiments}
For the synthetic data generation, we select the parameters $(\mu=1,\sigma=1)$ and $(\mu=0.2,\sigma=0.2)$ for the Lognormal and  Loglogistic noises underlying, respectively. The step size of Algorithm 3 is set to 0.01 where $m=0.05n$.  As for algorithm 4, $v=5$, failure probability $\delta'=0.01$ and the step size is set to $0.1$. For the stochastic Algorithm 4, the step size is selected as $\frac{1}{\sqrt{t}}$, where $t$ is the iteration number. Accordingly, $\bar{w}^T = \frac{\sum_{t=1}^T w^t}{T}.$ Corresponding to Fig. 1 and 2, we present the results which also mark the difference between the best and the worst performances as follows.
\begin{figure*}[!htbp]
    \centering
    \begin{subfigure}[b]{.24\textwidth}
    \includegraphics[width=\textwidth,height=0.15\textheight]{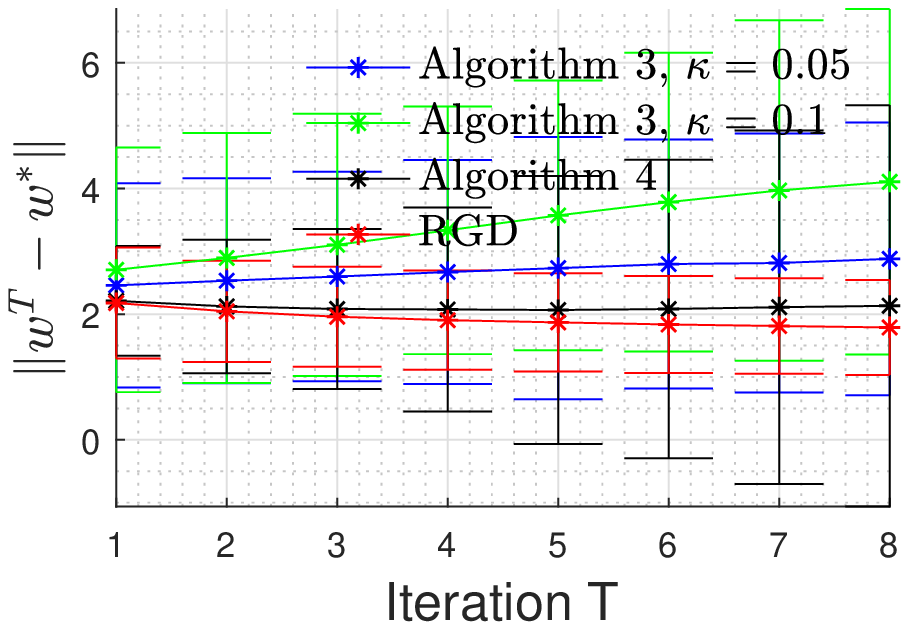}
    \caption{$\epsilon=1$ \label{fig1:a}}
    \end{subfigure}
    ~
    \begin{subfigure}[b]{.24\textwidth}
    \includegraphics[width=\textwidth,height=0.15\textheight]{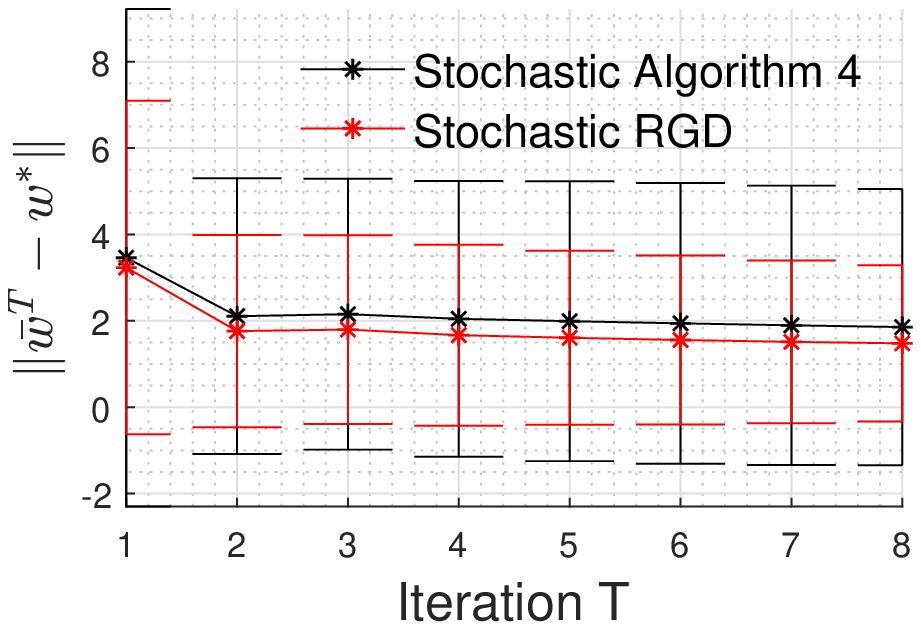}
    \caption{$\epsilon=0.5$ \label{fig1:b}}
    \end{subfigure}    
     \begin{subfigure}[b]{.24\textwidth}
    \includegraphics[width=\textwidth,height=0.15\textheight]{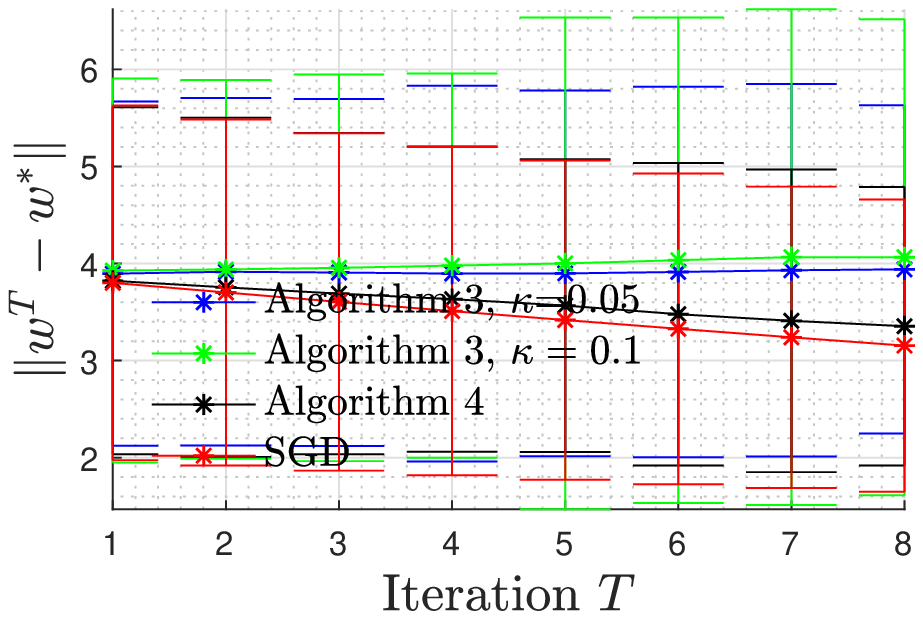}
    \caption{$\epsilon=1$ \label{fig1:c}}
    \end{subfigure}
    ~
    \begin{subfigure}[b]{.24\textwidth}
    \includegraphics[width=\textwidth,height=0.15\textheight]{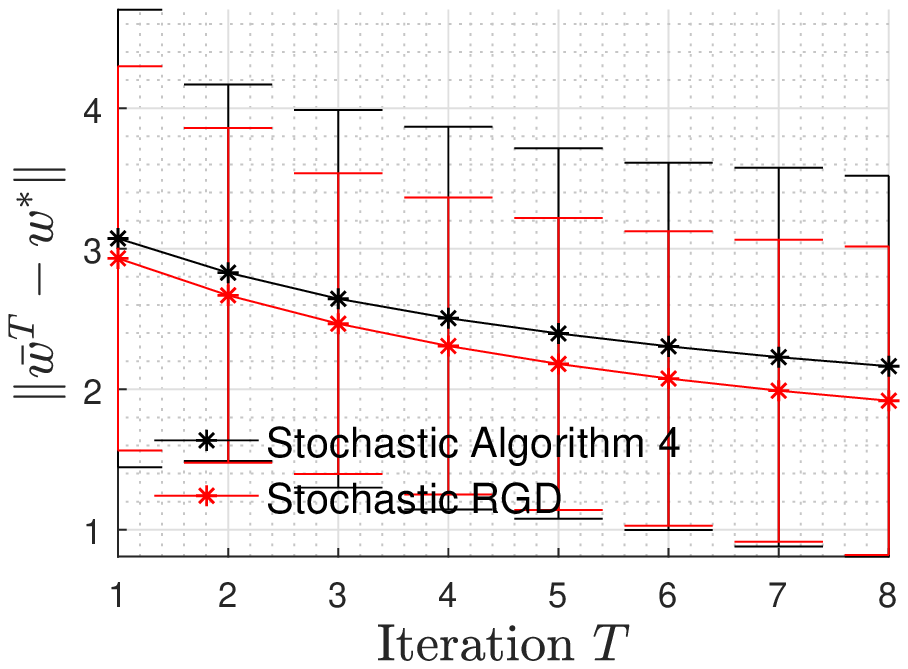}
    \caption{$\epsilon=0.5$ \label{fig1:d}}
    \end{subfigure}  
    \caption{Experiments on synthetic datasets. Figures (a) and (b) are for ridge regressions over synthetic data with Lognormal noises. Figures (c) and (d) are for logistic regressions over synthetic data with Loglogistic noises. \label{fig:1bar} }
\end{figure*}
\begin{figure*}[!htbp]
    \centering
    \begin{subfigure}[b]{.24\textwidth}
    \includegraphics[width=\textwidth,height=0.15\textheight]{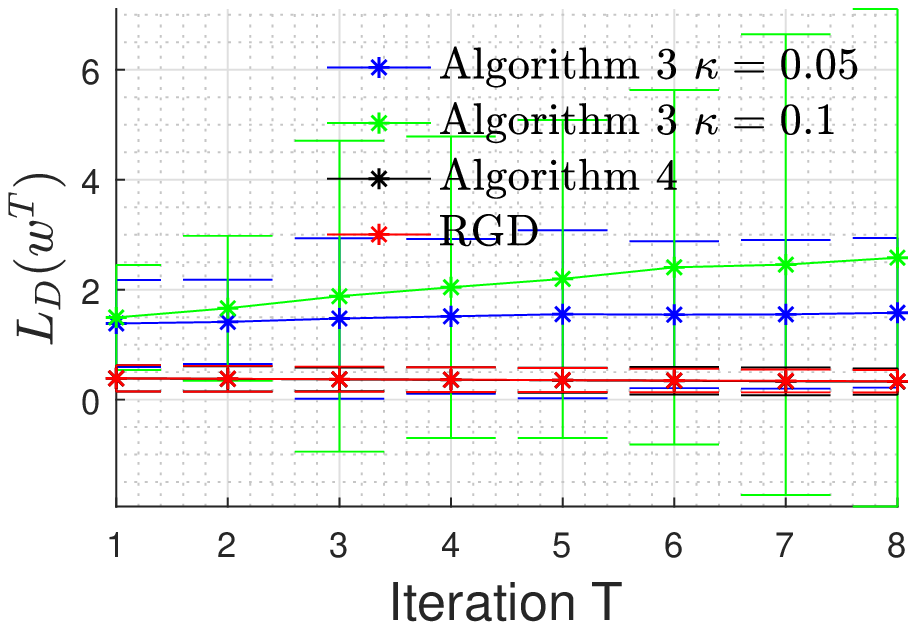}
    \caption{$\epsilon=1$ \label{fig2:a}}
    \end{subfigure}
    ~
    \begin{subfigure}[b]{.24\textwidth}
    \includegraphics[width=\textwidth,height=0.15\textheight]{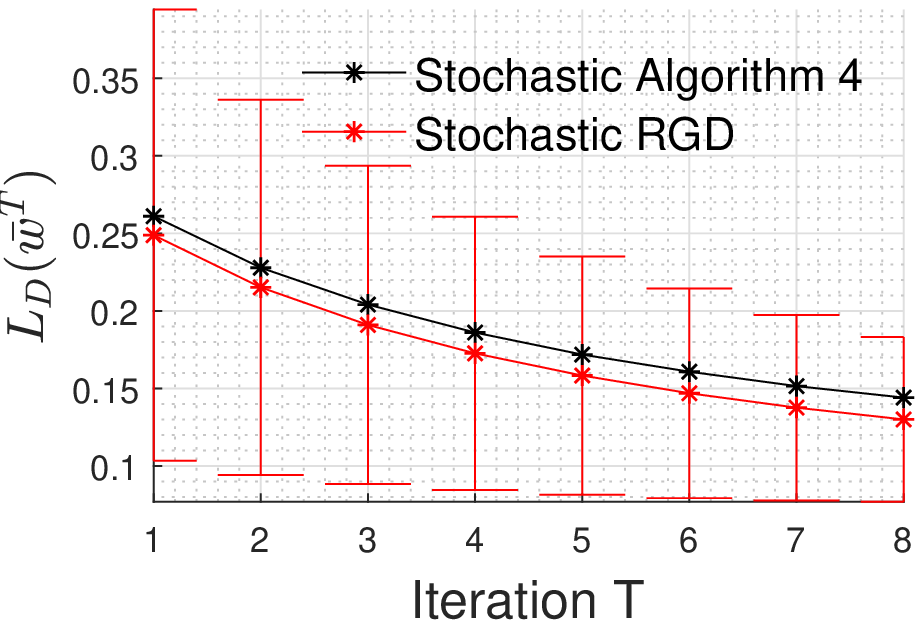}
    \caption{$\epsilon=0.5$ \label{fig2:b}}
    \end{subfigure}    
       \begin{subfigure}[b]{.24\textwidth}
    \includegraphics[width=\textwidth,height=0.15\textheight]{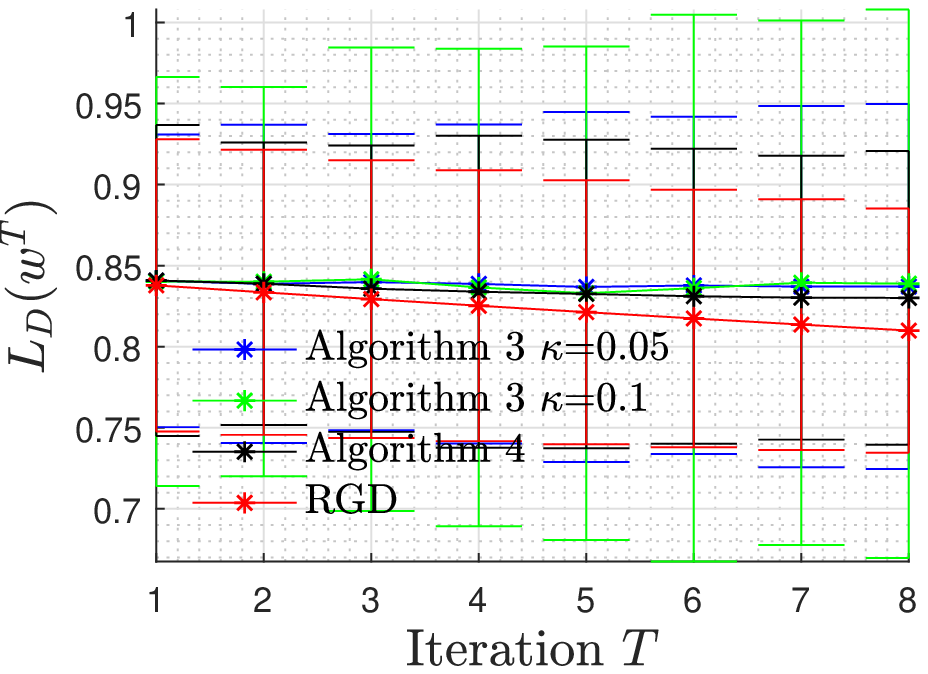}
    \caption{$\epsilon=1$ \label{fig2:c}}
    \end{subfigure}
    ~
    \begin{subfigure}[b]{.24\textwidth}
    \includegraphics[width=\textwidth,height=0.15\textheight]{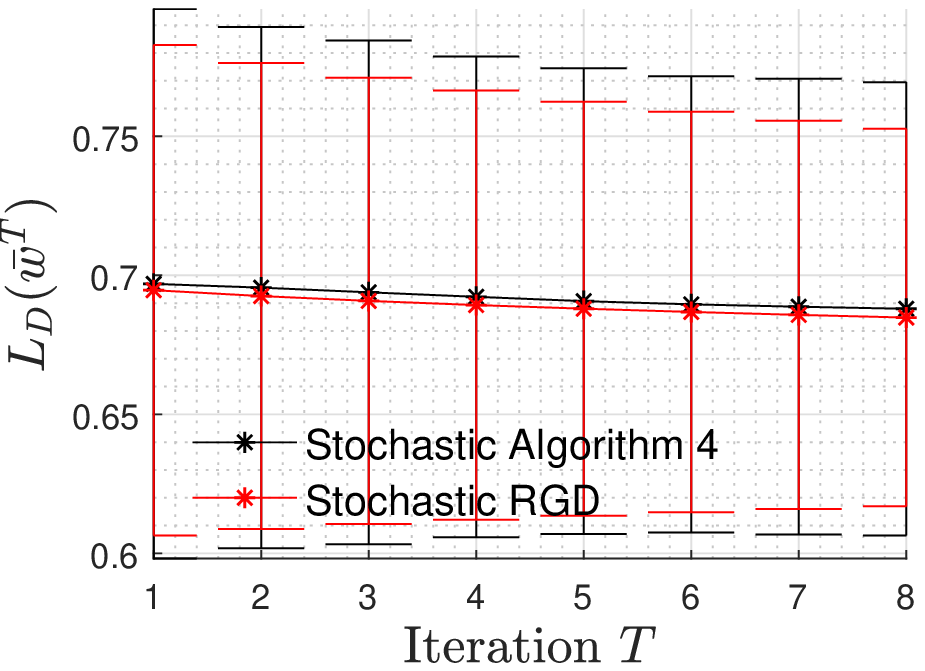}
    \caption{$\epsilon=0.5$ \label{fig2:d}}
    \end{subfigure}    
    
    \caption*{Experiments on UCI Adult dataset. Figures (a) and (b) are for ridge regressions. Figures (c) and (d) are for logistic regressions. \label{fig:2} }
\end{figure*}

To measure the impact from dimension on performances, we fix $n=10^5$ and test $d$ varying from $10$ to $50$ through stochastic Algorithm 4 and RGD under the same setup as above. To test the impact from the size of the dataset, we fix $d=20$ and test $n$ varying from $2\times10^4$ to $10^5$. 

%%%%%%%%%%%%%%%%%%%%%%%%%%%%%%%%%%%%%%%%%%%%%%%%%%%%%%%%%%%%%%%%%%%%%%%%%%%%%%%
%%%%%%%%%%%%%%%%%%%%%%%%%%%%%%%%%%%%%%%%%%%%%%%%%%%%%%%%%%%%%%%%%%%%%%%%%%%%%%%
% DELETE THIS PART. DO NOT PLACE CONTENT AFTER THE REFERENCES!
%%%%%%%%%%%%%%%%%%%%%%%%%%%%%%%%%%%%%%%%%%%%%%%%%%%%%%%%%%%%%%%%%%%%%%%%%%%%%%%
%%%%%%%%%%%%%%%%%%%%%%%%%%%%%%%%%%%%%%%%%%%%%%%%%%%%%%%%%%%%%%%%%%%%%%%%%%%%%%%

\end{document}